\documentclass[11pt]{article}

\usepackage[final]{acl}

\usepackage{times}
\usepackage{latexsym}

\usepackage[T1]{fontenc}

\usepackage[utf8]{inputenc}

\usepackage{microtype}

\usepackage{inconsolata}

\usepackage{graphicx}
\usepackage{amsmath}
\usepackage{amsthm}
\usepackage{booktabs}
\usepackage{multicol}
\usepackage{adjustbox}
\usepackage{subcaption}
\usepackage{multirow}
\usepackage{algpseudocode}
\usepackage{algorithm}
\usepackage{amssymb}
\usepackage{makecell}
\usepackage[multiple]{footmisc}
\usepackage{resizegather}
\usepackage{microtype}
\usepackage{bbm}
\usepackage{enumitem}
\usepackage{hyperref}
\usepackage{dsfont}
\usepackage{bbold}
\usepackage{float}

\usepackage{geometry}

\newtheorem{theorem}{Theorem}
\newtheorem{proposition}{Proposition}

\theoremstyle{definition}
\newtheorem{assumption}{Assumption}

\title{Which Metrics Save the Most Human Annotation? \\ Prediction-Powered Evaluation and Meta-Evaluation}

\author{Mingqi Gao$^{1}$\qquad Anthony Sicilia$^{2}$\qquad Weiyan Shi$^{1}$\vspace{6pt}\\
Northeastern University$^{1}$\quad West Virginia University$^{2}$\vspace{4pt}\\
\texttt{\{gao.mingqi, we.shi\}@northeastern.edu, \quad  anthony.sicilia@mail.wvu.edu}}

\begin{document}
\maketitle
\begin{abstract}
Across various non-verifiable tasks, human evaluation is reliable but expensive, while automatic metrics are more scalable but often biased. Building on prediction-powered inference (PPI), we propose \textit{prediction-powered evaluation}, a framework that combines limited human judgments with large-scale automatic scores to obtain data-efficient system comparisons that are provably unbiased. We develop parametric and non-parametric procedures, analyze the efficiency trade-off between paired and unpaired designs, and validate the framework on six WMT datasets. We further introduce the \textit{Prediction-Powered Saving Ratio (PPSR)}, a meta-metric that measures how much human annotation an automatic metric can save when used within prediction-powered evaluation. PPSR directly targets metric utility for prediction-powered evaluation and yields more discriminative and stable metric rankings than existing system-level meta-metrics. Overall, our new paradigm reframes automatic metrics as tools for reducing human annotation cost rather than replacing human judgment, and applies broadly to non-verifiable tasks\footnote{Our code is available at \url{https://github.com/CHATS-lab/ppi-eval}}.
\end{abstract}

\section{Introduction}
\label{sec:intro}

System comparison is a central problem in AI evaluation: we often want to determine which model or system performs better, and by how much. For non-verifiable tasks such as machine translation (MT), where output quality is inherently subjective \citep{DBLP:journals/corr/abs-2506-00103}, human evaluation is typically regarded as the gold standard. However, human evaluation is expensive, resulting in limited sample sizes, which often leave system comparisons statistically underpowered \citep{DBLP:conf/emnlp/CardHKJMJ20,DBLP:conf/emnlp/HowcroftR21}. In contrast, automatic metrics, including LLM-based judges, can be applied at large scale and low cost \citep{DBLP:conf/acl/WeiJ20}. However, automatic metrics do not always match gold standard human evaluations. These metrics are unreliable to use alone, but it is wasteful to ignore them entirely.

We argue that this dilemma arises largely because automatic evaluation has traditionally been framed as a proxy for human evaluation \citep{DBLP:conf/acl/MathurBC20}. As a result, evaluation is typically conducted in a \textit{human-only} or \textit{auto-only} manner, even when both are reported separately. Relatively little work combines the two within a single procedure, such as using human judgments to debias automatic evaluation \citep{DBLP:conf/emnlp/GaoXWC24}. Prediction-powered inference (PPI,  \citealp{doi:10.1126/science.adi6000,DBLP:journals/corr/abs-2311-01453,DBLP:journals/corr/abs-2501-09731,DBLP:journals/corr/abs-2603-16041}), a line of work from the statistical literature, offers a formal way to combine human and automatic evaluation. Specifically, it shifts the goal of automatic evaluation from \textit{human approximation} to \textit{variance reduction}. This increases statistical power and lowers the amount of human annotation needed for accurate system comparison.

\begin{figure*}
    \centering
    \includegraphics[width=\linewidth]{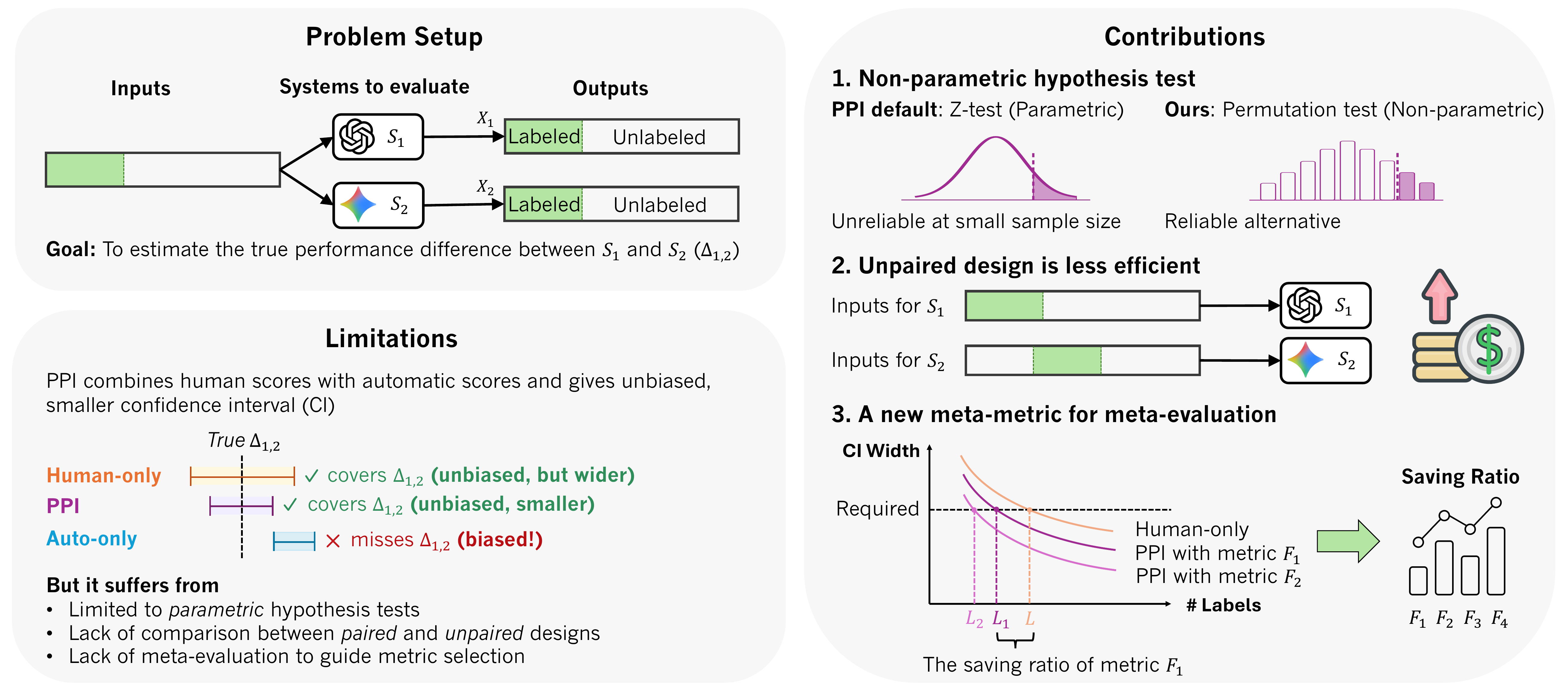}
    \caption{\small An illustration of the problem setup, the limitations of the default PPI methods, and our contributions.}
    \label{fig:teaser}
    \vspace{-5mm}
\end{figure*}

However, applying PPI to MT system comparison is not straightforward. We identify three design axes that have long been central to the MT community. The first is \textit{parametric vs. non-parametric inference}: standard PPI yields parametric hypothesis tests and confidence intervals, whereas WMT shared tasks have traditionally relied on non-parametric procedures, such as the sign test \citep{DBLP:conf/wmt/Callison-BurchK12}, bootstrap resampling \citep{DBLP:conf/wmt/BojarCFGHHJKLMN16}, and the Wilcoxon signed-rank test \citep{kocmi-etal-2025-findings}. The second is \textit{paired vs. unpaired designs}: standard PPI naturally leads to a paired design, while WMT shared tasks use both paired designs \citep{DBLP:conf/wmt/KocmiABBDFFFGGH24,kocmi-etal-2025-findings} and unpaired designs \citep{DBLP:conf/wmt/BarraultBBCFGGH20,DBLP:conf/wmt/AkhbardehABBCCC21,DBLP:conf/wmt/KocmiBBDFFGGGHKKMMNNNPP22,DBLP:conf/wmt/KocmiABBDFFFGGH23} depending on year. The last is \textit{meta-evaluation}---given many automatic metrics to choose from, which should be used? While existing meta-metrics, such as SPA \citep{DBLP:conf/wmt/0001MDK24}, answer this question by minimizing disagreement with human-evaluation, these are ill-formed for PPI because PPI naturally produces unbiased estimates.

To address these design axes, we make the following contributions. \textbf{(1)} We apply PPI to system comparison with pointwise automatic metrics (\S\ref{sec:method-paired-param}), and we validate its effectiveness on six WMT datasets (\S\ref{sec:exp-basic}). Prediction-powered evaluation is unbiased, making it more reliable than automatic-only evaluation, and yet more data-efficient than human-only evaluation. \textbf{(2)} Motivated by traditional WMT evaluation, we propose a non-parametric hypothesis test (\S\ref{sec:method-non-param}) and empirically validate it (\S\ref{sec:exp-nonparam}). This approach addresses settings where the approximation conditions required by standard PPI fail to hold (e.g., when the sample size is small). \textbf{(3)} We provide a theoretical comparison of the data efficiency of paired and unpaired designs, identifying key drivers of their variance differences, and empirically show that the paired design is generally superior on WMT (\S\ref{sec:exp-gap}). \textbf{(4)} Building on our data-efficiency analysis, we propose the \textbf{Prediction-Powered Saving Ratio} (PPSR), a new meta-metric for prediction-powered evaluation (\S\ref{sec:ppsr}). This meta-metric reframes traditional meta-evaluation in terms of the amount of human annotation saved by a given automatic metric, rather than focusing on how well automatic metrics align with human annotations. PPSR exhibits better discriminative power (\S\ref{sec:exp-ppsr-disc}) and ranking stability (\S\ref{sec:exp-ppsr-stability}) than existing system-level meta-metrics. 

More broadly, prediction-powered evaluation reframes the long-standing trade-off between cheap, biased automatic evaluation and expensive, gold-standard human evaluation as a smooth annotation-savings curve, with PPSR providing the principled instrument to read off where any given automatic metric sits on that curve. Notably, the framework applies beyond MT to various non-verifiable tasks.

\section{Related Work}

\paragraph{PPI for evaluation.}
Existing PPI-based evaluation work divides by whether the automatic metric is \emph{pairwise}, targeting statistics such as win rate and Bradley--Terry scores \citep{DBLP:conf/nips/ChatziSTR24,DBLP:conf/icml/BoyeauALYMJ25,DBLP:conf/icml/Zhou0Z25}, or \emph{pointwise}, where prior work  estimates the quality score of a single model \citep{chaganty-etal-2018-price,DBLP:conf/naacl/Saad-FalconKPZ24,DBLP:conf/nips/FischMHDGC24,DBLP:conf/iclr/DornerNH25,DBLP:journals/corr/abs-2509-22957,DBLP:journals/corr/abs-2601-05420}. A closely related line uses Rogan--Gladen \citep{Rogan1978EstimatingPF}, a different statistical framework for the same targets \citep{DBLP:journals/corr/abs-2511-21140,DBLP:journals/corr/abs-2601-05420,DBLP:journals/corr/abs-2601-20913}. We focus on the pointwise case, the dominant class in MT/NLG, but our target is the score \emph{difference} between systems, which is what MT system comparison requires and which the above lines do not directly address. We also study two design axes crucial to MT evaluation: parametric vs.\ non-parametric and paired vs.\ unpaired.

\paragraph{Hypothesis tests in NLP evaluation.}
\citet{DBLP:conf/acl/ReichartDBS18} formalized algorithm comparison in NLP as hypothesis testing. The dominant parametric choice for MT/NLG is the paired $t$-test \citep{DBLP:journals/csl/LeeGMK21}; non-parametric alternatives including Wilcoxon signed-rank \citep{kocmi-etal-2025-findings} and rank-sum \citep{DBLP:conf/wmt/KocmiABBDFFFGGH23}, paired bootstrap \citep{DBLP:conf/emnlp/Koehn04}, and the paired permutation test (also known as the approximate randomization test \citep{DBLP:conf/wmt/GrahamMB14}), are widely used and often preferred. All these tests are designed for human-only or auto-only settings. The prediction-powered tests we study are designed for the combined setting and cover both parametric and non-parametric variants.

\paragraph{Meta-metrics in MT and NLG.}
Numerous meta-metrics have been proposed for meta-evaluating automatic metrics, combining various grouping strategies and agreement functions \citep{DBLP:conf/naacl/GaoHLW25,DBLP:conf/emnlp/DeutschFF23,DBLP:conf/emnlp/Perrella0CBN24,DBLP:conf/emnlp/DiIanniD25}. For system comparison, the common system-level choices, including Pearson's $r$ \citep{DBLP:conf/wmt/MaWBG19}, Spearman's $\rho$ \citep{DBLP:conf/wmt/MachacekB13}, Kendall's $\tau$ \citep{DBLP:conf/wmt/MathurWFMB20}, pairwise accuracy \citep{DBLP:conf/wmt/KocmiFGJMM21}, and soft pairwise accuracy (SPA) \citep{DBLP:conf/wmt/0001MDK24}, all directly measure agreement with human judgments. PPSR is fundamentally different: derived from PPI, it quantifies the proportion of human judgments an automatic metric can save in system comparison while preserving the same statistical power. Quantities analogous to PPSR appear in \citet{chaganty-etal-2018-price} and \citet{DBLP:conf/icml/Zhou0Z25} from a control-variate perspective, but are not formalized as meta-metrics.

\section{Problem and Background}
\label{sec:problem_background}

\subsection{System Comparison}

Suppose there are $N$ systems to be compared, denoted by $\{S_i\}_{i=1}^N$, and $M$ source inputs (i.e. segments), denoted by $\{Q_j\}_{j=1}^M$. The output of the $i$-th system on the $j$-th input is denoted by $X_{ij}=S_i(Q_j)$.
For all the system outputs on $L$ of the inputs, we conduct human evaluation and obtain their human scores $Y_{ij}=H(X_{ij})$, denoted by $\{Y_{ij}\}_{i=1,j=1}^{N,L}$. The human scores of the remaining $U=M-L$ outputs of each system are not available. In contrast, we obtain metric scores for all system outputs using a pointwise automatic metric $F$, denoted by $\{F(X_{ij})\}_{i=1,j=1}^{N,M}$\footnote{For convenience, we omit possible additional inputs to $F$ other than $X$, such as the source input $Q$.}.

For any pair of systems, such as $S_1$ and $S_2$, our goal is to estimate their true human score difference $\delta_{1,2}=\mathbb{E}[Y_1-Y_2]$ \citep{DBLP:conf/acl/WeiJ20}. \textbf{Human-only evaluation} uses only human scores:
\begin{equation}
\widehat{\delta_{1,2}^{H}}=\tfrac{1}{L}\sum\nolimits_{j=1}^L (Y_{1j} - Y_{2j}).
\end{equation}
If $\{Y_{1j}\}_{j=1}^L$ are regarded as independent and identically distributed (i.i.d.) random variables and $\{Y_{2j}\}_{j=1}^L$ are regarded as i.i.d. random variables\footnote{Here $Y_1$ and $Y_2$ are random variables with the same distribution as $\{Y_{1j}\}_{j=1}^L$ and $\{Y_{2j}\}_{j=1}^L$ separately. The same applies to $F(X_1)$ and $F(X_2)$.}, it's easy to know  $\widehat{\delta_{1,2}^{H}}$ is an unbiased estimator with $\mathrm{Var}[\widehat{\delta_{1,2}^{H}}] =\frac{1}{L}\mathrm{Var}[Y_1-Y_2]$.

\textbf{Auto-only evaluation} uses only metric scores:
\begin{equation}
\widehat{\delta_{1,2}^{A}}
=
\tfrac{1}{M}\sum\nolimits_{j=1}^M
\big(F(X_{1j}) - F(X_{2j})\big).
\end{equation}
This estimator is \textbf{biased} because the automatic metric $F$ is generally biased. Its variance $\mathrm{Var}[\widehat{\delta_{1,2}^{A}}] = \frac{1}{M}\mathrm{Var}[F(X_{1})-F(X_{2})]$ is always small because $M$ is generally very large.

\subsection{Meta-Evaluation: Metric Comparison}

The comparisons above focus on MT systems. We now turn to meta-evaluation, where the objects of comparison are automatic metrics. Suppose that we wish to compare $K$ automatic metrics, denoted by $\{F_k\}_{k=1}^{K}$. For each metric $F_k$, a meta-metric $C$ evaluates its performance based on the metric scores $\{F_k(X_{ij})\}_{i=1,j=1}^{N,L}$ and the corresponding human scores $\{Y_{ij}\}_{i=1,j=1}^{N,L}$. Formally, the meta-evaluation score of $F_k$ can be written as $C\!\left(\left\{ \bigl(Y_{ij}, F_k(X_{ij})\bigr) \right\}_{i=1,j=1}^{N,L}\right).$ Typically, $C$ returns a a scalar score, with larger values indicating better performance of $F_k$ with respect to the quantity measured by $C$. These meta-evaluation scores can therefore be used both to compare pairs of automatic metrics and to produce an overall ranking among them. Important criteria for evaluating a meta-metric include the interpretability of the quantity it measures \citep{DBLP:conf/emnlp/Perrella0CBN24}, its ability to discriminate among automatic metrics, and the stability of the resulting metric rankings \citep{DBLP:conf/wmt/0001MDK24}.

\subsection{Prediction-Powered Inference}
\label{sec:bg-ppi}

Prediction-powered inference (PPI) combines a small labeled sample with a larger unlabeled sample equipped with predictions from an arbitrary predictive model \citep{doi:10.1126/science.adi6000}. Consider a generic estimation problem with target $\theta = \mathbb{E}[Z]$, where $Z$ is expensive to observe. Let $W$ be cheaply observed information and let $g(W)$ be a prediction of $Z$. Suppose we observe $L$ labeled examples $\{(W_j,Z_j)\}_{j=1}^{L}$ and $U$ additional unlabeled examples $\{W_j\}_{j=L+1}^{L+U}$, for which only $g(W_j)$ is available. A classical estimator of $\theta$ that only uses labeled examples is $\widehat{\theta^{H}}=\frac{1}{L}
\sum_{j=1}^{L} Z_j$. In contrast, a prediction-powered estimator of $\theta$ is

{\small
\begin{equation}
\widehat{\theta^{PP}}
=
\frac{\lambda}{U}
\sum_{j=L+1}^{L+U}
g(W_j)
+
\frac{1}{L}
\sum_{j=1}^{L}
\left(
Z_j-\lambda g(W_j)
\right),
\label{eq:generic_ppi}
\end{equation}
}
where $\lambda$ is a tunable parameter. $\widehat{\theta^{PP}}$ remains unbiased for $\theta$ for any fixed $\lambda$, even when $g$ is biased. An appropriate choice of $\lambda$ can guarantee that $\mathrm{Var}[\widehat{\theta^{PP}}]\leq \mathrm{Var}[\widehat{\theta^{H}}]$ \citep{DBLP:journals/corr/abs-2311-01453}, which means smaller confidence intervals and hypothesis tests with higher power.

\section{Prediction-Powered Evaluation for MT}
\label{sec:method}

We now instantiate the generic PPI estimator from \S\ref{sec:bg-ppi} for paired MT system comparison and derive the corresponding parametric confidence intervals and hypothesis tests (\S\ref{sec:method-paired-param}). We then present two extensions: a paired non-parametric hypothesis test (\S\ref{sec:method-non-param}) and an instantiation of PPI for unpaired MT system comparison (\S\ref{sec:method-unpaired}).

\subsection{Paired Parametric Design (PPI Default)}
\label{sec:method-paired-param}

Map the generic PPI setup of \S\ref{sec:bg-ppi} to system comparison by setting $Z_j = Y_{1j} - Y_{2j}$ (the expensive human score difference) and $g(W_j) = F(X_{1j}) - F(X_{2j})$ (the cheap metric score difference). The target $\theta = \mathbb{E}[Z] = \delta_{1,2}$ is the true human score difference. Substituting into \eqref{eq:generic_ppi} yields the \textbf{prediction-powered paired estimator}

{\small
\begin{equation}
\begin{split}
& \widehat{\delta_{1,2}^{PP}} =
\frac{\lambda}{U}\!\!\!\sum_{j=L+1}^{L+U}\!\!\!
\big(F(X_{1j}) - F(X_{2j})\big) \\
& + \frac{1}{L}\sum_{j=1}^{L}
\Big[(Y_{1j} - Y_{2j}) - \lambda\big(F(X_{1j}) - F(X_{2j})\big)\Big].
\end{split}
\label{eq:ppi-paired}
\end{equation}
}
At $\lambda = 0$, \eqref{eq:ppi-paired} reduces to the human-only estimator $\widehat{\delta_{1,2}^{H}}$; at $\lambda = 1$, it equals the auto-only estimator on the unlabeled set plus a rectifier term estimated on the labeled set. For any fixed $\lambda$, $\widehat{\delta_{1,2}^{PP}}$ is \textbf{unbiased} for $\delta_{1,2}$. Proposition~\ref{prop:paired-mean-var} in Appendix~\ref{app:paired-theory} provides the proof. Human scores and automatic scores need not share a scale (e.g., 0--100 vs.\ 0--1).

\paragraph{Variance and optimal $\lambda$.} Following
\citet{DBLP:journals/corr/abs-2311-01453,DBLP:journals/corr/abs-2603-16041}, the variance-minimizing tuning parameter $\lambda^\star$ has a closed form (Proposition~\ref{prop:paired-opt} in Appendix~\ref{app:paired-theory} provides the proof):
$$\lambda^\star=\frac{\mathrm{Cov}[Y_1 - Y_2,F(X_1) - F(X_2)]}{\bigl(1+\frac{L}{U}\bigr)\mathrm{Var}[F(X_1) - F(X_2)]},$$ yielding

{\small
\begin{equation}
\begin{split}
& \mathrm{Var}\!\left[\widehat{\delta_{1,2}^{PP}}\right]
= \frac{1}{L}\mathrm{Var}[Y_1 - Y_2] \\
& \qquad -\frac{U \big(\mathrm{Cov}[Y_1 - Y_2,\, F(X_1) - F(X_2)]\big)^2}
     {L(L+U)\, \mathrm{Var}[F(X_1) - F(X_2)]}.
\end{split}
\label{eq:ppi-var}
\end{equation}
}
The second term in \eqref{eq:ppi-var} is the variance reduction PPI buys over the human-only paired estimator. It is non-negative, and vanishes if and only if the metric score differences are uncorrelated with the human score differences. Thus we have $\mathrm{Var}[\widehat{\delta_{1,2}^{PP}}] \le
\mathrm{Var}[\widehat{\delta_{1,2}^{H}}]$, so PPI is never worse than human-only evaluation under the optimal $\lambda^\star$, even when $F$ is highly biased or weakly informative. 

In practice, the optimal value $\lambda^\star$ cannot be computed directly because $\mathrm{Cov}[Y_1-Y_2, F(X_1)-F(X_2)]$ and $\mathrm{Var}[F(X_1)-F(X_2)]$ are unknown. Following \citet{DBLP:journals/corr/abs-2311-01453}, we therefore replace $\lambda^\star$ with its empirical estimate $\hat{\lambda}$, computed using the sample covariance $\widehat{\mathrm{Cov}}[Y_1-Y_2, F(X_1)-F(X_2)]$ and sample variance $\widehat{\mathrm{Var}}[F(X_1)-F(X_2)]$. Note that we do not need to clip $\hat \lambda$ to $[0, 1]$ here.

\paragraph{Confidence intervals and hypothesis test.} Applying the central limit theorem yields the $100(1-\alpha)\%$ CI for human-only and prediction-powered cases\footnote{Note that for auto-only evaluation, we can have similar confidence intervals and hypothesis tests (Alg.~\ref{alg:auto-only-pair-z-test} in the appendix), but neither provides statistical guarantees.}:

{\small
\begin{equation*}
\widehat{\delta_{1,2}^{g}} \pm z_{1-\alpha/2}\sqrt{\widehat{\mathrm{Var}}
[\widehat{\delta_{1,2}^{g}}]}, \quad g=\{H, PP\},
\end{equation*}
}
where $\widehat{\mathrm{Var}}[\cdot]$ plugs
sample variance. Theorem~\ref{thm:paired-ci} and Theorem~\ref{thm:paired-plugin} in Appendix~\ref{app:paired-theory} provide the proof. The corresponding parametric paired $Z$-tests are summarized as Alg.~\ref{alg:human-only-pair-z-test} and  Alg.~\ref{alg:prediction-power-pair-z-test} in the Appendix.

\subsection{Paired Non-Parametric Test}
\label{sec:method-non-param}

The parametric CI and $Z$-test of \S\ref{sec:method-paired-param} inherit the standard PPI asymptotic guarantee only when both (1) $\widehat{\delta_{1,2}^{PP}}$ is well approximated by a normal distribution and (2) the population variance and population covariance are known or estimated with high accuracy. In MT system comparison this is not always the case: human scores are discrete, and the sample size is small, etc. WMT shared tasks have for two decades preferred non-parametric procedures possibly for these reasons, but the existing prediction-powered literature has no off-the-shelf non-parametric test for paired system comparison\footnote{\citet{DBLP:journals/corr/abs-2405-18379} proposed using the bootstrap to construct prediction-powered confidence intervals, but it cannot be directly converted into $p$-values.}.

To fill this gap, we extend the classical paired permutation test (Alg.~\ref{alg:human-only-pair-perm-test} and Alg.~\ref{alg:Auto-only-pair-perm-test} in the appendix) \citep{DBLP:conf/wmt/GrahamMB14} to the prediction-powered setting (Alg.~\ref{alg:prediction-power-pair-perm-test} in the appendix). Unlike the $Z$-test of \S\ref{sec:method-paired-param}, the permutation test avoids the normal approximation but requires stronger symmetry assumptions. Beyond the null hypothesis $\delta_{1,2}\leq 0$, the human-only test assumes that $Y_1-Y_2$ is symmetric about $\delta_{1,2}$ \citep{permutation_textbook}, while the prediction-powered test assumes that $(Y_1-Y_2,F(X_1)-F(X_2))$ is jointly centrally symmetric about $(\delta_{1,2},\mathbb{E}[F(X_1)-F(X_2)])$. We state these assumptions as ``asserts'' in the algorithms.

\subsection{Unpaired Design}
\label{sec:method-unpaired}

Sections~\ref{sec:method-paired-param} and~\ref{sec:method-non-param} assume that both systems are evaluated on the same source inputs, and $\widehat{\delta_{1,2}^{PP}}$ is built from per-segment differences. The unpaired design is the natural fallback when the two systems are evaluated on disjoint inputs.

\paragraph{Unpaired estimators.} Let $I_i^L, I_i^U$ index the labeled and
unlabeled inputs for system $S_i$ (the paired design requires
$I_1^L = I_2^L, I_1^U = I_2^U$). The unpaired human-only estimator,

{\small
\begin{equation*}
\widehat{\delta_{1,2}^{H,\mathrm{Un}}}
=
\frac{1}{|I_1^L|}\sum_{j\in I_1^L} Y_{1j}
-
\frac{1}{|I_2^L|}\sum_{j\in I_2^L} Y_{2j},
\end{equation*}
}
treats the two systems' scores as independent. Similarly, the unpaired
prediction-powered estimator instantiates the generic PPI of
\eqref{eq:generic_ppi} separately for each system, with two tuning
parameters $\lambda_1, \lambda_2$:

{\small
\begin{equation*}
\begin{split}
\widehat{\delta_{1,2}^{PP,\mathrm{Un}}}
&=
\frac{\lambda_1}{|I_1^U|}\!\sum_{j\in I_1^U}\!\!\! F(X_{1j})
+
\frac{1}{|I_1^L|}\!\sum_{j\in I_1^L}\!\!\bigl(Y_{1j} - \lambda_1 F(X_{1j})\bigr) \\
&\quad -
\frac{\lambda_2}{|I_2^U|}\!\sum_{j\in I_2^U}\!\!\! F(X_{2j})
-
\frac{1}{|I_2^L|}\!\sum_{j\in I_2^L}\!\!\bigl(Y_{2j} - \lambda_2 F(X_{2j})\bigr).
\end{split}
\end{equation*}
}
Both estimators are unbiased. The variance-minimizing
$(\lambda_1^\star, \lambda_2^\star)$ have closed forms analogous to
\S\ref{sec:method-paired-param}. Similarly, we have

{\small
\begin{equation*}
\mathrm{Var}\left[\widehat{\delta_{1,2}^{PP,\mathrm{Un}}}\right]
\leq
\mathrm{Var}\left[\widehat{\delta_{1,2}^{H,\mathrm{Un}}}\right].
\end{equation*}
}
The proof is in Proposition~\ref{prop:unpaired} (Appendix~\ref{app:unpaired}). The unpaired design also yields confidence intervals and hypothesis tests, which we omit for brevity.

However, whether the paired or unpaired design is more efficient remains theoretically ambiguous. Proposition~\ref{prop:gap} in Appendix~\ref{app:paired_vs_unpaired} proves that for both human-only and prediction-powered settings, the relative efficiency depends on the covariance structure of the data. Therefore, we investigate this question empirically in \S\ref{sec:exp-gap}.

\section{PPSR: A New Meta-Metric}
\label{sec:ppsr}

In \S\ref{sec:method}, we restrict our focus to a single automatic metric $F$. In practice, however, we have a set of metrics $\{F_k\}_{k=1}^K$	to choose from. In this section, we derive a meta-metric to determine which of these automatic metrics are most effective for prediction-powered evaluation (\S\ref{sec:ppsr-def}), interpret it as an annotation saving ratio (\S\ref{sec:ppsr-interp}), and contrast it with existing meta-metrics (\S\ref{sec:ppsr-relation}).

\subsection{Derivation and Definition}
\label{sec:ppsr-def}

When $U \gg L$, the relative variance reduction in \eqref{eq:ppi-var} simplifies cleanly:
\begin{equation*}
\begin{split}
\frac{\mathrm{Var}[\widehat{\delta_{1,2}^{H}}] -
      \mathrm{Var}[\widehat{\delta_{1,2}^{PP}}]}
     {\mathrm{Var}[\widehat{\delta_{1,2}^{H}}]}
\;\approx\;
\mathrm{Corr}\!\left[Y_1 - Y_2,\; F_k(X_1) - F_k(X_2)\right]^2. 
\end{split}
\label{eq:ppsr-pair}
\end{equation*}
Thus, for a given pair of systems, the relative variance reduction achieved by prediction-powered evaluation over human-only evaluation is approximately equal to the squared Pearson correlation between the human and metric \emph{score differences} for that pair. Moreover, this relative variance reduction is equal to the fraction of human annotations saved by using metric $F_k$, relative to human-only evaluation, while maintaining the same variance. Proposition~\ref{prop:savings} in Appendix~\ref{app:savings} provides the proof. This result can also be viewed as an application of the rule of thumb proposed by \citet{DBLP:journals/corr/abs-2603-16041}.

Averaging this quantity over all $\binom{N}{2}$ pairs of systems and replacing the population Pearson correlations with their sample Pearson estimates yields the \textbf{Prediction-Powered Saving Ratio (PPSR)}:
\begin{equation*}
\begin{split}
\binom{N}{2}^{-1}\sum_{p=1}^{N-1}\sum_{q=p+1}^{N} r\!\left(\left\{\left(Y_{pj}-Y_{qj},\; F_k(X_{pj})-F_k(X_{qj})\right)\right\}_{j=1}^{L}\right)^2.
\end{split}
\label{eq:ppsr}
\end{equation*}
PPSR takes values in $[0, 1]$, and requires only the labeled examples to compute.

\subsection{Interpretation}
\label{sec:ppsr-interp}
PPSR represents the average fraction of human annotations saved by using metric $F_k$ in prediction-powered evaluation, relative to human-only evaluation, while maintaining approximately the same confidence-interval width or test power across system pairs. A PPSR of $0.4$ for a metric $F_k$ on a given dataset means that replacing human-only evaluation with PPI using $F_k$ yields the same statistical conclusion with $\approx 40\%$ fewer human-annotated segments on average. Empirical resampling experiments in Appendix~\ref{app:annotation_saving_vs_ppsr} confirm that PPSR strongly correlates with actual human annotation savings.

PPSR should \textbf{not} be interpreted as an agreement measure because it squares the Pearson correlation and therefore discards its sign. This property highlights an advantage of prediction-powered evaluation over automatic-only evaluation: even a metric that is negatively correlated with human judgments can still reduce variance and save human annotations in prediction-powered evaluation. 

\subsection{Relation to Existing Meta-Metrics}
\label{sec:ppsr-relation}

Interestingly, though PPSR is a system-level meta-metric, that is, it reflects the ability of automatic metrics to perform system comparison, it is closely related to segment-level meta-metrics. Compared with existing schemes \citep{DBLP:conf/emnlp/DeutschFF23} including \textit{No Grouping}, \textit{Group by Item}, and \textit{Group by System}, PPSR can be viewed as a segment-level meta-metric under a \textit{Group by System Pair} scheme. Notably, PDP \citep{DBLP:conf/emnlp/DiIanniD25}, a new segment-level meta-metric, also uses the score differences between systems on the same segment, but its grouping strategy is very different from PPSR. Please read Appendix~\ref{app:comparison_segment_metametric} for more details.

Existing system-level meta-metrics typically aggregate scores at the system level first, by averaging human scores and metric scores over inputs for each system, and then compute a correlation between the resulting $N$ pairs of system-level scores: 

{\small
\begin{equation*}
c\!\left(\left\{\left(\frac{1}{L}\sum_{j=1}^L Y_{ij},\; \frac{1}{L}\sum_{j=1}^L F_k(X_{ij})\right)\right\}_{i=1}^{N}\right), 
\end{equation*}
}
where $c$ can be Pearson's $r$, Spearman's $\rho$, or Kendall's $\tau$\footnote{\citet{DBLP:conf/wmt/0001MDK24} showed that Pairwise Accuracy (PA) and system-level Kendall's $\tau_a$ are equivalent up to a linear scaling and shift. However, to handle possible ties, we consistently use $\tau_b$ for Kendall's $\tau$.}. These meta-metrics lack discriminative power and produce unstable metric rankings since the number of systems $N$ is typically small.

To address this, \citet{DBLP:conf/wmt/0001MDK24} proposed SPA, which leverages the discrepancy between $p$-values from human-only and auto-only paired permutation tests as additional signal:

{\small
\begin{equation*}
\binom{N}{2}^{-1}\sum_{p=1}^{N-1}\sum_{q=p+1}^{N} 1 - \left|p^H_{p,q} - p^A_{p,q}\right|,
\end{equation*}
}
It has been adopted in the latest WMT metric shared tasks \citep{DBLP:conf/wmt/FreitagMDLAR0BK24,lavie-etal-2025-findings}. However, since p-values are not effect sizes or calibrated probabilities, this is best understood as a heuristic measure of agreement.

\section{Experiments}
\label{sec:experiments}

We evaluate the framework on six WMT22--24 datasets and organize the experiments around the design axes and meta-evaluation question that structure the paper. After describing the datasets (\S\ref{sec:datasets}), we first verify that prediction-powered evaluation delivers on its basic promise for MT system comparison (\S\ref{sec:exp-basic}). We then examine the two design axes introduced in \S\ref{sec:method}, parametric vs.\ non-parametric test (\S\ref{sec:exp-nonparam}) and paired vs.\ unpaired design (\S\ref{sec:exp-gap}), to quantify the cost of each design choice. Finally, we assess the discriminative power (\S\ref{sec:exp-ppsr-disc}) and ranking stability (\S\ref{sec:exp-ppsr-stability}) of our new meta-metric, PPSR, and report its metric rankings (\S\ref{sec:exp-ppsr-result}).

\subsection{Datasets}
\label{sec:datasets}

We used the MT Metrics Eval V2 toolkit\footnote{\url{https://github.com/google-research/mt-metrics-eval}} to obtain all WMT data from 2022 to 2025. After carefully inspecting the data, we selected en-de, en-ru, and zh-en from WMT22; en-zh and ja-en from WMT23; and cs-uk from WMT24. These datasets contain relatively large numbers of source inputs across all systems, making them more suitable for treating the data as a population when evaluating confidence intervals and hypothesis tests. Table~\ref{tab:dataset_summary} summarizes the basic statistics of these datasets. For each input, the human score of each system output and the scores assigned by all automatic metrics are available. We use all available automatic metrics, so our dataset configuration is not identical to that used in the metric shared tasks. For reference-based metrics, we follow the dataset guidelines and use the scores computed with ref-A. For WMT24 cs-uk, we additionally exclude sentinel metrics and three pairs of automatic metrics that produce identical outputs.

\begin{table}[]
\centering
\footnotesize
\setlength{\tabcolsep}{4pt}
\begin{tabular}{lcrrr}
\toprule
Dataset & Human & \#Inputs & \#Sys. & \#Met. \\
\midrule
WMT22 en-de & MQM & 1315 & 14 & 31 \\
WMT22 en-ru & MQM & 1315 & 15 & 30 \\
WMT22 zh-en & MQM & 1875 & 14 & 31 \\
WMT23 en-zh & DA-SQM & 1098 & 15 & 34 \\
WMT23 ja-en & DA-SQM & 1120 & 17 & 35 \\
WMT24 cs-uk & ESA & 1955 & 11 & 25 \\
\bottomrule
\end{tabular}
\caption{\small Datasets used in the experiments. Sys.
and Met. denote the number of systems and automatic metrics.}
\label{tab:dataset_summary}
\vspace{-5mm}
\end{table}

\subsection{PPI Works for MT System Comparison}
\label{sec:exp-basic}
\paragraph{Setup.} For each system pair $S_p$ and $S_q$, we define the finite population as the full set of paired examples for which human scores are available, and take the mean human score difference over this full set as the population effect size $\delta_{p,q}$. The effect size is expressed in the original scale of the human scores and is not normalized. We then evaluate confidence intervals and hypothesis tests by resampling from this finite population. In each trial, we sample $U+L$ examples without replacement from the full set and randomly split them into $L$ labeled examples, whose human scores are used, and $U$ unlabeled examples, whose human scores are held out. The sampled data are used to construct confidence intervals and conduct hypothesis tests, which are then evaluated against the population effect size defined above. We sweep $L$ from 20 to 200 while fixing $U=800$. For each $(L,U)$ configuration, we repeat this procedure for 1000 trials. For automatic metrics, we use MetricX: \texttt{MetricX-XXL-20} for WMT22, \texttt{MetricX-23} for WMT23, and \texttt{MetricX-24} for WMT24.

\paragraph{Hypothesis Test.} We record the empirical power of the auto-only, the human-only, and the prediction-powered paired Z-test for each system pair, i.e., the percentage of trials in which the one-sided null hypothesis is rejected at $\alpha = 0.05$. Since each system pair has a different true human effect, this produces a power curve. As shown in Figure~\ref{fig:basic_power_wmt24_cs_uk}, the auto-only test has lower power than the human-only test for some human effect sizes, indicating bias in the automatic metric. If the automatic metric were unbiased, the auto-only test, which uses all $U+L$ examples, would be expected to have greater power than the human-only test, which uses only the $L$ labeled examples, across all human effect sizes. By contrast, the power curves of human-only and prediction-powered tests align closely, with prediction-powered tests consistently achieving higher power. 

\begin{figure}
    \centering
    \includegraphics[width=\linewidth]{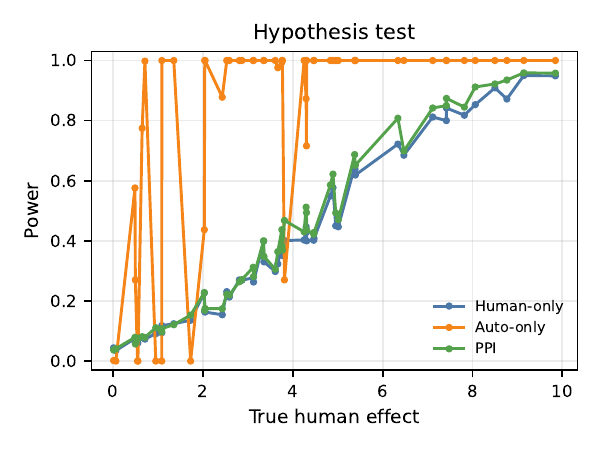}
    \caption{\small Empirical power curves at $L=80,U=800$ on WMT24 cs-uk. MetricX is used as the automatic metric. The fact that the empirical power of the auto-only test is lower than that of the human-only test for some effect sizes suggests that even a strong metric such as MetricX exhibits substantial bias for some system pairs. Similar trends are observed for other values of $L$ and other datasets. The full results are shown in Figure~\ref{fig:basic_power_l40} and~\ref{fig:basic_power_l80} in the appendix.}
    \label{fig:basic_power_wmt24_cs_uk}
    \vspace{-5mm}
\end{figure}

\paragraph{Confidence Intervals.} To more clearly illustrate the bias of automatic-only evaluation, we include an additional LLM-based metric, GEMBA \citep{DBLP:conf/wmt/KocmiF23} in the confidence interval experiments, whose score range matches that of the human scores. 
The auto-only confidence intervals based on GEMBA are very narrow on average, but their empirical coverage is only approximately 30\%, far below the nominal 95\% level. In contrast, the prediction-powered confidence intervals with GEMBA maintain coverage close to the nominal 95\% level while remaining consistently smaller than the human-only confidence intervals. Figure~\ref{fig:gemba_basic_interval_width} and~\ref{fig:gemba_basic_interval_coverage} in the appendix shows the results.

\begin{figure}
    \centering
    \includegraphics[width=\linewidth]{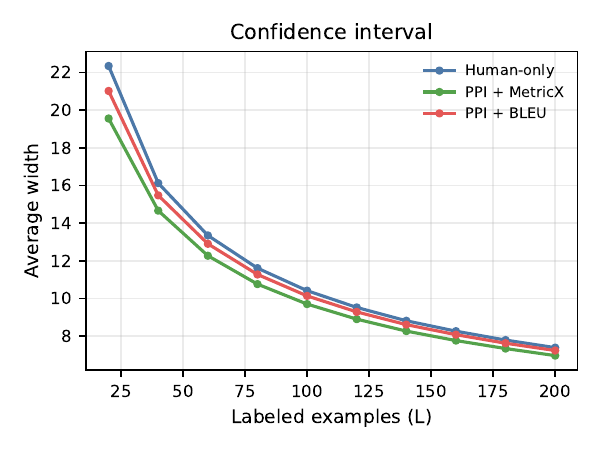}
    \caption{\small Average two-sided 95\% CI width with $U=800$ on WMT24 cs-uk. A smaller CI means better data efficiency. Results are averaged across system pairs. The trend is the same across datasets. Figure~\ref{fig:basic_interval_width} in the appendix shows full results. }
    \label{fig:basic_interval_wmt24_cs_uk}
    \vspace{-5mm}
\end{figure}

We also conduct a preliminary analysis of how the choice of automatic metric affects efficiency gains. Figure~\ref{fig:basic_interval_wmt24_cs_uk} compares the average widths of prediction-powered confidence intervals with MetricX and BLEU. Both are consistently smaller than the human-only baseline, although MetricX produces substantially larger reductions. This observation motivates the development of PPSR. For both metrics, the empirical coverage remains close to the nominal 95\% level, as shown in Figure~\ref{fig:basic_interval_coverage} in the appendix.

\paragraph{Summary.} Overall, PPI works for MT system comparisons. On the one hand, prediction-powered evaluation is unbiased, which is a clear advantage over auto-only evaluation. On the other hand, it is more data-efficient than human-only evaluation.

\subsection{Non-Parametric vs. Parametric Test}
\label{sec:exp-nonparam}

In this section, we investigate two questions: (1) whether the prediction-powered paired permutation test achieves higher power than the human-only paired permutation test, and (2) how the paired $Z$-test and paired permutation test compare in terms of power and Type I error under both the human-only and prediction-powered settings. We follow the setup described in \S\ref{sec:exp-basic} and use $B=1000$ random sign-flip permutations for paired permutation tests.

Consistent with the earlier results for the paired $Z$-test, the human-only and prediction-powered paired permutation tests exhibit nearly identical power trends, while the prediction-powered test consistently achieves higher power. Figures~\ref{fig:perm_only_power_l40},~\ref{fig:perm_only_power_l80},~\ref{fig:perm_only_power_l120} in the appendix present their power curves at different labeled sample sizes.

The paired $Z$-test achieves higher power than the paired permutation test in both the human-only and prediction-powered settings, especially when the labeled sample size is small, as shown in Figure~\ref{fig:z_vs_perm_power_gain_by_l} in the appendix. However, the Type I error simulations show that the paired $Z$-test can be poorly calibrated at small labeled sample sizes in the prediction-powered setting (Figure~\ref{fig:type_i_error_ppi_rho0.7_nu3}). This behavior arises from both error in the normal approximation and finite-sample instability in the plug-in variance and covariance estimators. See Appendix~\ref{app:simulation_type_i} for details of the Type I error simulations. Taken together, these results suggest that \textbf{the paired $Z$-test can be anti-conservative in small samples, whereas the paired permutation test provides a  reliable nonparametric alternative}.

\begin{figure}[t]
\centering
\includegraphics[width=\linewidth]{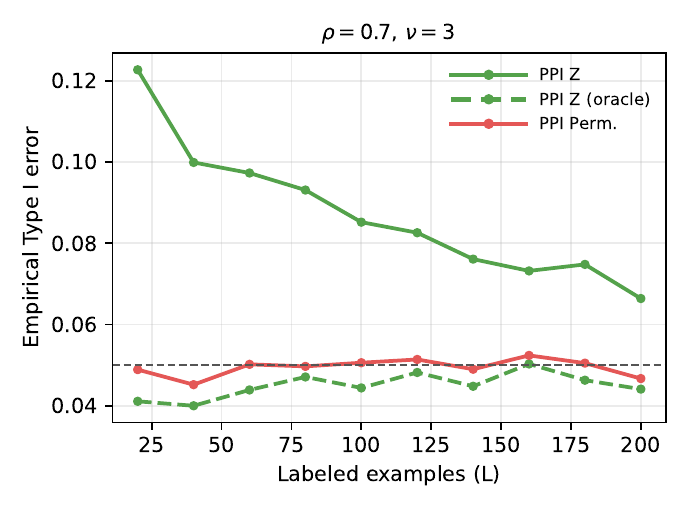}
\caption{\small Type I error under a zero-mean Student's $t$ distribution with $\rho=0.7$ and $\nu=3$. The nominal Type I error rate is 0.05. ``PPI Z'' refers to the original prediction-powered paired $Z$-test using plug-in variance and covariance estimates, whereas ``PPI Z (oracle)'' uses the true variance and covariance. ``PPI Perm'' refers to the prediction-powered paired permutation test. Full results are shown in Figure~\ref{fig:type_i_error_ppi} in the appendix.}
\label{fig:type_i_error_ppi_rho0.7_nu3}
\end{figure}

\begin{table}[]
\centering
\small
\setlength{\tabcolsep}{5pt}
\begin{tabular}{lrr@{\hskip 8pt}rr}
\toprule
& \multicolumn{2}{c}{Human-only} & \multicolumn{2}{c}{Prediction-powered} \\
\cmidrule(lr){2-3}\cmidrule(lr){4-5}
Dataset  & Var.\ Inc. & Pos.  & Var.\ Inc. & Pos. \\
\midrule
WMT22 en-de  & 1.28 & 100\%  & 1.11 & 100\% \\
WMT22 en-ru  & 1.81 & 100\%  & 1.67 & 100\% \\
WMT22 zh-en  & 1.28 & 100\%  & 1.02 & 100\% \\
WMT23 en-zh  & 0.16 & 98\%   & 0.11 & 92\%  \\
WMT23 ja-en  & 0.18 & 100\%  & 0.12 & 97\%  \\
WMT24 cs-uk  & 0.56 & 100\%  & 0.33 & 100\% \\
\bottomrule
\end{tabular}
\caption{\small Paired-versus-unpaired variance comparison for human-only and prediction-powered evaluation. ``Var.\ Inc.'' is the average relative variance increase from using the unpaired design; ``Pos.'' is the fraction of cases where it is positive (i.e., where the unpaired design increases variance). For human-only, each case is one system pair; for prediction-powered, each case is one system-pair/metric combination.}
\label{tab:paired_unpaired_variance}
\vspace{-5mm}
\end{table}

\begin{table*}[t]
\small
\centering
\begin{tabular}{lrrrrrr|rrrrrr}
\toprule
& \multicolumn{6}{c|}{Distinct values ($\uparrow$)}
& \multicolumn{6}{c}{Significant comparisons ($\uparrow$)} \\
\cmidrule(lr){2-7}\cmidrule(lr){8-13}
Dataset & Max & $r$ & $\rho$ & $\tau$ & SPA & PPSR
& Max & $r$ & $\rho$ & $\tau$ & SPA & PPSR \\
\midrule
WMT22 en-de & 31 & \textbf{31} & 23 & 14 & \textbf{31} & \textbf{31}
& 465 & 355 & 284 & 196 & 306 & \textbf{412} \\
WMT22 en-ru & 30 & \textbf{30} & 23 & 16 & \textbf{30} & \textbf{30}
& 435 & 354 & 289 & 270 & 314 & \textbf{373} \\
WMT22 zh-en & 31 & \textbf{31} & 27 & 18 & \textbf{31} & \textbf{31}
& 465 & 402 & 348 & 315 & 358 & \textbf{422} \\
WMT23 en-zh & 34 & \textbf{34} & 26 & 20 & \textbf{34} & \textbf{34}
& 561 & 493 & 423 & 409 & 440 & \textbf{507} \\
WMT23 ja-en & 35 & \textbf{35} & 22 & 13 & \textbf{35} & \textbf{35}
& 595 & 389 & 266 & 272 & 338 & \textbf{500} \\
WMT24 cs-uk & 25 & \textbf{25} & 17 & 14 & \textbf{25} & \textbf{25}
& 300 & 234 & 208 & 197 & 225 & \textbf{247} \\
\bottomrule
\end{tabular}
\caption{\small Discriminative power of system-level meta-metrics. Max gives the largest possible value for each criterion. Bold indicates the highest observed value in each row.}
\label{tab:saving_ratio_discriminative_power}
\vspace{-5mm}
\end{table*}

\subsection{Paired vs. Unpaired Design}
\label{sec:exp-gap}

As noted in \S\ref{sec:method-unpaired}, whether the paired or unpaired design is more efficient depends on the data. For human-only evaluation, we estimate the relative variance increase $(\widehat{\mathrm{Var}}\left[\widehat{\delta_{1,2}^{H,\mathrm{Un}}}\right]-\widehat{\mathrm{Var}}\left[\widehat{\delta_{1,2}^{H}}\right])/\widehat{\mathrm{Var}}\left[\widehat{\delta_{1,2}^{H}}\right]$ for each system pair in the selected datasets. Similarly, for prediction-powered evaluation, we estimate $(\widehat{\mathrm{Var}}\left[\widehat{\delta_{1,2}^{PP,\mathrm{Un}}}\right] - \widehat{\mathrm{Var}}\left[\widehat{\delta_{1,2}^{PP}}\right])/\widehat{\mathrm{Var}}\left[\widehat{\delta_{1,2}^{PP}}\right]$ for each system pair and automatic metric. 

As shown in Table~\ref{tab:paired_unpaired_variance}, the relative variance increase is positive in nearly all cases, indicating that paired designs are generally more efficient than unpaired designs. Therefore, we recommend \textbf{using the paired design for both human-only and prediction-powered evaluation whenever feasible}. However, this effect is weaker in the prediction-powered setting than in the human-only setting. See Appendix~\ref{app:paired_vs_unpaired} for in-depth analysis.

\subsection{PPSR Has Higher Discriminative Power}
\label{sec:exp-ppsr-disc}

A meta-metric with higher discriminative power is more likely to distinguish between automatic metrics. This is desirable in meta-evaluation, as an excess of ties makes comparison difficult.

We evaluate discriminative power in two ways. First, we count how many distinct values each meta-metric assigns to the automatic metrics in a dataset. Second, for every pair of automatic metrics, we follow the practice of \citet{DBLP:conf/wmt/0001MDK24} to apply the PERM-INPUTS permutation test \citep{DBLP:journals/tacl/DeutschDR21} with $B=1000$ and count how many pairwise differences are statistically significant at $p \leq 0.05$. Table~\ref{tab:saving_ratio_discriminative_power} summarizes the results. We find that PPSR achieves the highest discriminative power among all system-level meta-metrics on each dataset. By comparison, SPA has higher discriminative power than system-level $\rho$ and $\tau$, but lower than system-level $r$.

\subsection{PPSR Produces More Stable Rankings}
\label{sec:exp-ppsr-stability}

Another dimension to assess meta-metrics is whether the induced ranking of automatic metrics remains stable when only a subset of data is available. This matters because meta-evaluation studies often compare many metrics on a limited test set.

We use resampling to assess ranking stability. For each dataset and each meta-metric, we first obtain the ranking of automatic metrics using the full dataset. We then sample $L$ inputs without replacement (keeping the system set fixed), recompute the ranking produced by each meta-metric on the sampled data, and measure Kendall's $\tau$ between the sampled ranking and the full-data ranking. This procedure is repeated 1000 times for each $L \in \{100, 200, \ldots, 1000\}$, and we report the average Kendall's $\tau$. Figure~\ref{fig:saving_ratio_ranking_stability_cs_uk} shows PPSR achieves the highest average ranking stability across input size. We emphasize that the higher discriminative power and ranking stability does not imply that PPSR can replace other meta-metrics, because their meanings and intended use cases are substantially different.

\begin{figure}[]
\centering
\includegraphics[width=\linewidth]{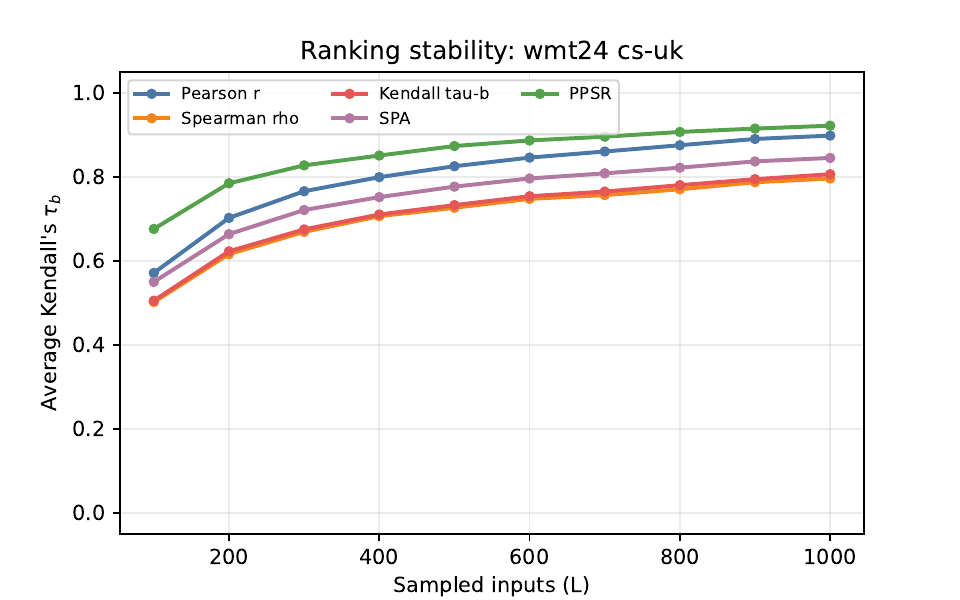}
\caption{\small Ranking stability of system-level meta-metrics on WMT24 cs-uk. Higher values indicate more stable metric rankings. Note that for meta-evaluation, only labeled examples are used and the unlabeled examples are unused. The trend is the same across all datasets. Full results for all datasets are shown in Figure~\ref{fig:saving_ratio_ranking_stability} in the appendix.}
\label{fig:saving_ratio_ranking_stability_cs_uk}
\vspace{-5mm}
\end{figure}

\subsection{Metric Rankings Produced By PPSR}
\label{sec:exp-ppsr-result}

Table~\ref{tab:system_metric_score_ranks_wmt22_en_de}--\ref{tab:segment_metric_score_ranks_wmt24_cs_uk} in the appendix reports the scores and rankings of all automatic metrics under PPSR and other meta-metrics. We find that though PPSR is a system-level meta-metric designed for system comparison, its metric rankings differ substantially from those produced by existing system-level meta-metrics. Instead, its metric rankings align more closely with segment-level meta-metrics, particularly PDP. This supports the interpretation that PPSR measures a related yet distinct quality in automatic metrics.

\section{Conclusion}

We show the advantages of prediction-powered evaluation over human-only and auto-only approaches on WMT data, providing practical recommendations across parametric vs.\ non-parametric inference and paired vs.\ unpaired designs. Building on this framework, we introduce PPSR, a meta-metric offering greater discriminative power and ranking stability than existing system-level alternatives. More broadly, our framework applies to other non-verifiable tasks where both human judgments and automatic metrics are pointwise.

\section*{Limitations}

For simplicity, we do not model inter-annotator disagreement as an additional source of variance in human scores, as considered in prior work \citep{chaganty-etal-2018-price,DBLP:conf/acl/WeiJ20}.

\citet{DBLP:journals/corr/abs-2505-20178} pointed out that reusing the same labeled data both to estimate the $\hat\lambda$, and to estimate the empirical variance to construct the confidence intervals while treating $\hat \lambda$ as fixed can lead to overly optimistic empirical variance. We do not address this issue in the present work.

We also assume that system-level scores are obtained by averaging over inputs. As a result, our framework may not apply directly to the automatic metrics that compute system-level scores using different aggregation methods.

Finally, our experiments focus on machine translation. The applicability of our methods to other domains and tasks remains to be further validated.

\section*{Ethical Considerations}

The MT Metrics Eval V2 toolkit is licensed under Apache-2.0, and our use of it complies with the terms of the license. AI-based writing tools were used to assist with language polishing. Specifically, we prompted Claude and ChatGPT to generate suggested revisions, which we then manually reviewed and edited. 

\section*{Acknowledgments}
We thank Pengcheng Su for insightful feedback and the reviewers in the ARR May 2026 cycle for their valuable suggestions.

\bibliography{custom}

\appendix

\onecolumn

\section{Theoretical Properties of the Paired Prediction-Powered Estimator}
\label{app:paired-theory}

Throughout, $\delta_{1,2}=\mu_1-\mu_2$ with $\mu_i=\mathbb{E}[Y_i]$ and $\nu_i=\mathbb{E}[F(X_i)]$. For the paired design write
\[
D_j^Y = Y_{1j}-Y_{2j},
\qquad
D_j^F = F(X_{1j})-F(X_{2j}),
\]
so the paired prediction-powered estimator with tuning parameter $\lambda$ is
\[
\widehat{\delta_{1,2}^{PP}}
=\frac{1}{U}\sum_{j=L+1}^{L+U}\lambda D_j^F
+\frac{1}{L}\sum_{j=1}^{L}\bigl(D_j^Y-\lambda D_j^F\bigr),
\]
and the human-only estimator is the special case
$\widehat{\delta_{1,2}^{H}}=\widehat{\delta_{1,2}^{PP}}\big|_{\lambda=0}=\frac1L\sum_{j=1}^{L}D_j^Y$.

\begin{assumption}[Paired sampling]
\label{ass:paired}
The labeled pairs $\{(D_j^Y,D_j^F)\}_{j=1}^{L}$ are i.i.d., the unlabeled values $\{D_j^F\}_{j=L+1}^{L+U}$ are i.i.d., the labeled and unlabeled samples are mutually independent, and all relevant second moments are finite.
\end{assumption}

\begin{proposition}[Unbiasedness and variance]
\label{prop:paired-mean-var}
Under Assumption~\ref{ass:paired}, for any fixed $\lambda$ the estimator $\widehat{\delta_{1,2}^{PP}}$ is unbiased,
$\mathbb{E}[\widehat{\delta_{1,2}^{PP}}]=\delta_{1,2}$, with variance
\[
\mathrm{Var}\bigl[\widehat{\delta_{1,2}^{PP}}\bigr]
=\lambda^2\Bigl(\frac1U+\frac1L\Bigr)\mathrm{Var}[D^F]-\frac{2\lambda}{L}\mathrm{Cov}[D^Y,D^F]
+\frac1L\mathrm{Var}[D^Y]. 
\]

\end{proposition}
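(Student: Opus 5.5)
The plan is to treat $\widehat{\delta_{1,2}^{PP}}$ as the sum of two independent sample means and compute first and second moments directly from Assumption~\ref{ass:paired}. Write
\[
\widehat{\delta_{1,2}^{PP}} = A + B,\qquad A=\frac{\lambda}{U}\sum_{j=L+1}^{L+U}D_j^F,\qquad B=\frac{1}{L}\sum_{j=1}^{L}\bigl(D_j^Y-\lambda D_j^F\bigr).
\]
Here $A$ depends only on the unlabeled sample and $B$ only on the labeled sample.

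For unbiasedness, I will use linearity of expectation. The unlabeled $D_j^F$ are identically distributed with the labeled $D_j^F$; this is implicit in the assumption, since both are draws of $F(X_1)-F(X_2)$ with mean $\nu_1-\nu_2$. Hence $\mathbb{E}[A]=\lambda(\nu_1-\nu_2)$. Similarly,
\[
\mathbb{E}[B]=(\mu_1-\mu_2)-\lambda(\nu_1-\nu_2).
\]
The $\lambda$ terms cancel, giving $\mathbb{E}[A+B]=\delta_{1,2}$ for every fixed $\lambda$. This is the one point where I must make explicit that the labeled and unlabeled $D^F$ share a common mean. If the assumption is read literally as only ``i.i.d.\ within each sample'', I will state that both samples are draws of the same population quantity $D^F$, as in the setup of \S\ref{sec:problem_background}.

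For the variance, independence of the labeled and unlabeled samples gives $\mathrm{Var}[A+B]=\mathrm{Var}[A]+\mathrm{Var}[B]$, with no cross term. Within each sample the summands are i.i.d., so
\[
\mathrm{Var}[A]=\frac{\lambda^2}{U}\mathrm{Var}[D^F],\qquad \mathrm{Var}[B]=\frac1L\mathrm{Var}[D^Y-\lambda D^F].
\]
Expanding the last term by bilinearity gives $\mathrm{Var}[D^Y]-2\lambda\,\mathrm{Cov}[D^Y,D^F]+\lambda^2\mathrm{Var}[D^F]$. Collecting the two $\lambda^2\mathrm{Var}[D^F]$ contributions produces the factor $\frac1U+\frac1L$, which yields exactly the stated formula. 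Finite second moments guarantee that every quantity here is well defined.

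The only delicate step is the bookkeeping of the independence structure. The covariance between $D^Y$ and $D^F$ enters only through the labeled term $B$, because it comes from the same segment. The unlabeled term contributes no covariance, because it is independent of the labeled sample. Everything else is routine algebra, and the special case $\lambda=0$ recovers $\mathrm{Var}[\widehat{\delta_{1,2}^{H}}]=\frac1L\mathrm{Var}[D^Y]$ as a sanity check.
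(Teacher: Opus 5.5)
Your proposal is correct and follows the paper's proof essentially step for step: linearity of expectation with the $\lambda$ terms cancelling gives unbiasedness, and independence of the labeled and unlabeled blocks plus expanding $\mathrm{Var}[D^Y-\lambda D^F]$ gives the variance. Your remark that the labeled and unlabeled $D^F$ must be identically distributed is a fair point. The proof needs this for the common mean in unbiasedness and for the common variance in the $\frac{1}{U}\mathrm{Var}[D^F]$ term; the paper's proof uses it, but Assumption~\ref{ass:paired} leaves it implicit.
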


\begin{proof}
For the mean,
\begin{equation*}
\begin{split}
\mathbb{E}[\widehat{\delta_{1,2}^{PP}}]
&=
\mathbb{E}\left[
\frac{1}{U}\sum_{j=L+1}^{L+U}
\lambda \big(F(X_{1j}) - F(X_{2j})\big)
+
\frac{1}{L}\sum_{j=1}^L
\left[
(Y_{1j} - Y_{2j})
-
\lambda \big(F(X_{1j}) - F(X_{2j})\big)
\right]
\right] \\
&=
\frac{\lambda}{U}\sum_{j=L+1}^{L+U}
\mathbb{E}\left[
F(X_{1j}) - F(X_{2j})
\right]
+
\frac{1}{L}\sum_{j=1}^L
\mathbb{E}\left[
Y_{1j} - Y_{2j}
\right]
-
\frac{\lambda}{L}\sum_{j=1}^L
\mathbb{E}\left[
F(X_{1j}) - F(X_{2j})
\right] \\
&= \lambda \nu_1 - \lambda \nu_2 + \mu_1 - \mu_2 - \lambda \nu_1 + \lambda \nu_2 \\
&=\mu_1-\mu_2 \\
&=\delta_{1,2}.
\end{split}
\end{equation*}
For the variance, using independence of the labeled and unlabeled blocks,
\begin{equation*}
\begin{split}
\mathrm{Var}[\widehat{\delta_{1,2}^{PP}}]
&=
\mathrm{Var}\left[
\frac{1}{U}\sum_{j=L+1}^{L+U}
\lambda \big(F(X_{1j}) - F(X_{2j})\big)
\right]
+
\mathrm{Var}\left[
\frac{1}{L}\sum_{j=1}^L
\left[
(Y_{1j} - Y_{2j})
-
\lambda \big(F(X_{1j}) - F(X_{2j})\big)
\right]
\right] \\
&=
\frac{\lambda^2}{U}
\mathrm{Var}\big[
F(X_{1}) - F(X_{2})
\big]
+
\frac{1}{L}
\Big(
\mathrm{Var}[Y_{1} - Y_{2}]
+
\lambda^2
\mathrm{Var}\big[
F(X_{1}) - F(X_{2})
\big] \\
&\qquad
- 2\lambda
\mathrm{Cov}\big[
Y_{1} - Y_{2},\,
F(X_{1}) - F(X_{2})
\big]
\Big) \\
&=
\lambda^2
\left(
\frac{1}{U}+\frac{1}{L}
\right)
\mathrm{Var}\big[
F(X_1) - F(X_2)
\big]
-
\frac{2\lambda}{L}
\mathrm{Cov}\big[
Y_1 - Y_2,\,
F(X_1) - F(X_2)
\big]
+
\frac{1}{L}
\mathrm{Var}[Y_1 - Y_2],
\end{split}
\end{equation*}
which is the claimed expression in the notation $D^Y=Y_1-Y_2$, $D^F=F(X_1)-F(X_2)$.
\end{proof}

\begin{proposition}[Optimal tuning and variance reduction]
\label{prop:paired-opt}
The variance in Proposition~\ref{prop:paired-mean-var} is an upward-opening quadratic in $\lambda$, minimized at
\[
\lambda^\star=\frac{\mathrm{Cov}[D^Y,D^F]}{\bigl(1+\frac{L}{U}\bigr)\mathrm{Var}[D^F]},
\]
at which
\begin{align*}
\mathrm{Var}\bigl[\widehat{\delta_{1,2}^{PP}}\bigr]
&=\mathrm{Var}\bigl[\widehat{\delta_{1,2}^{H}}\bigr]
-\frac{U\bigl(\mathrm{Cov}[D^Y,D^F]\bigr)^2}{L(L+U)\,\mathrm{Var}[D^F]}\le\mathrm{Var}\bigl[\widehat{\delta_{1,2}^{H}}\bigr].   
\end{align*}
Hence the optimally tuned prediction-powered estimator never has larger variance than the human-only estimator.
\end{proposition}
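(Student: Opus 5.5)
The plan is to treat the variance formula from Proposition~\ref{prop:paired-mean-var} as a quadratic in $\lambda$ and minimize it directly. Write it as
\[
V(\lambda)=a\lambda^2-2b\lambda+c,\qquad a=\Bigl(\tfrac1U+\tfrac1L\Bigr)\mathrm{Var}[D^F],\quad b=\tfrac1L\mathrm{Cov}[D^Y,D^F],\quad c=\tfrac1L\mathrm{Var}[D^Y].
\]
The constant term satisfies $c=V(0)=\mathrm{Var}[\widehat{\delta_{1,2}^{H}}]$, because the human-only estimator is the $\lambda=0$ specialization.

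\textbf{Step 1: convexity.} Provided $\mathrm{Var}[D^F]>0$, we have $a>0$, so $V$ is an upward-opening parabola with a unique minimizer. The degenerate case $\mathrm{Var}[D^F]=0$ must be handled separately. There $D^F$ is a.s.\ constant, so $\mathrm{Cov}[D^Y,D^F]=0$ and $V(\lambda)\equiv c$. Every $\lambda$ is then optimal and the bound holds with equality; this case is excluded implicitly, since $\lambda^\star$ has $\mathrm{Var}[D^F]$ in its denominator.

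\textbf{Step 2: minimizer.} Setting $V'(\lambda)=2a\lambda-2b=0$ gives $\lambda^\star=b/a$. Multiplying numerator and denominator by $L$ gives
\[
\lambda^\star=\frac{\mathrm{Cov}[D^Y,D^F]}{(1+L/U)\mathrm{Var}[D^F]}.
\]

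\textbf{Step 3: minimum value.} Substituting back yields $V(\lambda^\star)=c-b^2/a$. Here
\[
\frac{b^2}{a}=\frac{\mathrm{Cov}^2/L^2}{\frac{L+U}{LU}\mathrm{Var}[D^F]}=\frac{U\,\mathrm{Cov}[D^Y,D^F]^2}{L(L+U)\mathrm{Var}[D^F]},
\]
which is exactly the subtracted term in the statement.

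\textbf{Step 4: comparison with human-only.} This subtracted term is non-negative as a ratio of a square to a positive quantity, so $V(\lambda^\star)\le V(0)=\mathrm{Var}[\widehat{\delta_{1,2}^{H}}]$. Equality holds iff the covariance vanishes. The last sentence of the proposition follows because $V(\lambda^\star)\le V(0)$ by minimality alone, with no further argument needed.

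There is no real obstacle. The only points needing care are the algebraic simplification of $b^2/a$, where the factor $\frac1U+\frac1L=\frac{L+U}{LU}$ must be handled correctly, and the degenerate case $\mathrm{Var}[D^F]=0$, which should be mentioned so that $\lambda^\star$ is well defined.
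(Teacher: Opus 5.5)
Your proposal is correct and follows the paper's own route: treat the variance from Proposition~\ref{prop:paired-mean-var} as an upward-opening quadratic in $\lambda$, take its vertex $\lambda^\star$, substitute back, and note that the subtracted term is nonnegative. Your explicit treatment of the degenerate case $\mathrm{Var}[D^F]=0$ is a small extra that the paper leaves implicit.
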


\begin{proof}
The variance is a quadratic in $\lambda$ with positive leading coefficient $\bigl(\frac1U+\frac1L\bigr)\mathrm{Var}[D^F]$, so it is minimized at
\begin{equation*}
\lambda^\star
=
\frac{
\mathrm{Cov}\big[
Y_1 - Y_2,\,
F(X_1) - F(X_2)
\big]
}{
\left(1+\frac{L}{U}\right)
\mathrm{Var}\big[
F(X_1) - F(X_2)
\big]
}.
\end{equation*}
Substituting $\lambda^\star$ into the variance gives
\begin{equation*}
\begin{split}
\mathrm{Var}[\widehat{\delta_{1,2}^{PP}}]
& = \frac{1}{L}\mathrm{Var}[Y_1 - Y_2]  - \frac{
U\left(\mathrm{Cov}\big[Y_1 - Y_2,\, F(X_1) - F(X_2)\big] \right)^2
}{
L(L+U)
\mathrm{Var}\big[
F(X_1) - F(X_2)
\big]
} \\
& = \mathrm{Var}[\widehat{\delta_{1,2}^{H}}] - \frac{
U\left(\mathrm{Cov}\big[Y_1 - Y_2,\, F(X_1) - F(X_2)\big] \right)^2
}{
L(L+U)
\mathrm{Var}\big[
F(X_1) - F(X_2)
\big]
} \\
& \leq \mathrm{Var}[\widehat{\delta_{1,2}^{H}}],
\end{split}
\end{equation*}
since the subtracted term is nonnegative.
\end{proof}

\begin{theorem}[Asymptotic confidence-interval validity]
\label{thm:paired-ci}
Suppose Assumption~\ref{ass:paired} holds, the relevant second moments are
strictly positive, and $L,U\to\infty$ with $L/U\to v\in[0,\infty)$. Then for any
fixed $\lambda$,
\[
\frac{\widehat{\delta_{1,2}^{g}}-\delta_{1,2}}
{\sqrt{\mathrm{Var}[\widehat{\delta_{1,2}^{g}}]}}
\xrightarrow{d}\mathcal{N}(0,1),
\qquad g\in\{H,PP\}.
\]
Replacing the variance by any consistent estimator $\widehat{\mathrm{Var}}$, the interval $\widehat{\delta_{1,2}^{g}}\pm z_{1-\alpha/2} \sqrt{\widehat{\mathrm{Var}}[\widehat{\delta_{1,2}^{g}}]}$
is an asymptotically valid $100(1-\alpha)\%$ confidence interval for $\delta_{1,2}$.
\end{theorem}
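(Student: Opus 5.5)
The plan is to write the centered estimator as a sum of two independent centered sample means, apply the classical CLT to each, and then combine the two limits. Fix $\lambda$ and set $A_L=\frac1L\sum_{j=1}^L\bigl(D_j^Y-\lambda D_j^F\bigr)-(\delta_{1,2}-\lambda(\nu_1-\nu_2))$ and $B_U=\frac{\lambda}{U}\sum_{j=L+1}^{L+U}D_j^F-\lambda(\nu_1-\nu_2)$, so that $\widehat{\delta_{1,2}^{PP}}-\delta_{1,2}=A_L+B_U$. Under Assumption~\ref{ass:paired}, $A_L$ and $B_U$ are means of i.i.d.\ terms with finite second moments, and the two blocks are independent. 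Write $\sigma_A^2=\mathrm{Var}[D^Y-\lambda D^F]$ and $\sigma_B^2=\lambda^2\mathrm{Var}[D^F]$. By Proposition~\ref{prop:paired-mean-var}, $V:=\mathrm{Var}[\widehat{\delta_{1,2}^{PP}}]=\sigma_A^2/L+\sigma_B^2/U$. The human-only case $g=H$ is the special case $\lambda=0$, where $B_U\equiv 0$ and the statement is just the i.i.d.\ CLT.

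For the normal limit I use the Lindeberg--L\'evy CLT: $\sqrt L A_L\xrightarrow{d}\mathcal N(0,\sigma_A^2)$ and $\sqrt U B_U\xrightarrow{d}\mathcal N(0,\sigma_B^2)$. By independence the joint law of the pair converges to the product of these limits, for example via characteristic functions. I then write
\[
\frac{A_L+B_U}{\sqrt V}=a_{L,U}\frac{\sqrt L A_L}{\sigma_A}+b_{L,U}\frac{\sqrt U B_U}{\sigma_B},
\]
with weights
\[
a_{L,U}=\frac{\sigma_A/\sqrt L}{\sqrt V},\qquad b_{L,U}=\frac{\sigma_B/\sqrt U}{\sqrt V},
\]
which satisfy $a_{L,U}^2+b_{L,U}^2=1$. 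Since $L/U\to v$, we get $a_{L,U}^2\to \sigma_A^2/(\sigma_A^2+v\sigma_B^2)$ and $b_{L,U}^2\to v\sigma_B^2/(\sigma_A^2+v\sigma_B^2)$. The continuous mapping theorem, applied to the jointly convergent pair together with the deterministic converging weights via Slutsky, then gives a limit $a Z_1+bZ_2$ with $Z_1,Z_2$ independent standard normals and $a^2+b^2=1$, which is $\mathcal N(0,1)$. A cleaner alternative is to work directly with the characteristic function $\mathbb E e^{it(A_L+B_U)/\sqrt V}$, which factorizes over the two blocks; a uniform-in-weights CLT for each factor then avoids taking subsequences.

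\textbf{Main obstacle.} The delicate step is making sure the normalization is non-degenerate, so that dividing by $\sqrt V$ is legitimate and the weights converge. The hypothesis that the relevant second moments are strictly positive is exactly what is needed: it gives $\sigma_A^2>0$. With $v\in[0,\infty)$ the denominator $\sigma_A^2+v\sigma_B^2$ is then positive. The case $v=0$ (where $U\gg L$) just sends $b\to0$. If $\sigma_B=0$ (for instance $\lambda=0$), the $B$ term vanishes identically and the argument reduces to the one-sample CLT.

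For the interval, the last step is to note that $\widehat{\mathrm{Var}}/V\to_p1$; this is what ``consistent'' should mean here, since $V\to0$. Slutsky then gives $(\widehat{\delta^{g}_{1,2}}-\delta_{1,2})/\sqrt{\widehat{\mathrm{Var}}}\xrightarrow{d}\mathcal N(0,1)$. Hence
\[
P\bigl(\delta_{1,2}\in\widehat{\delta^{g}_{1,2}}\pm z_{1-\alpha/2}\sqrt{\widehat{\mathrm{Var}}}\bigr)\to1-\alpha,
\]
because the standard normal CDF is continuous. As a remark, the plug-in estimator $\widehat{\sigma_A^2}/L+\widehat{\sigma_B^2}/U$ satisfies this ratio consistency by the law of large numbers combined with $\sigma_A^2>0$.
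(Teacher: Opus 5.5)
Your proposal is correct and follows essentially the same route as the paper. Both split the centered estimator into independent labeled and unlabeled blocks, apply the CLT to each block, combine the limits via independence and Slutsky, treat $g=H$ as the case $\lambda=0$, and finish the interval claim with ratio-consistency $\widehat{\mathrm{Var}}/\mathrm{Var}\to_p 1$, which the paper also uses implicitly. The only difference is normalization: the paper scales both blocks by $\sqrt{L}$ and divides by $\sqrt{L\,\mathrm{Var}}\to\sqrt{V_\infty}$, whereas your self-normalized weights with $a^2+b^2=1$, in the characteristic-function variant, would not even need $L/U$ to converge.
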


\begin{proof}
Fix $\lambda$ and write $\sigma_F^2=\mathrm{Var}[D^F]$ and
$\tau^2=\mathrm{Var}[D^Y-\lambda D^F]$, both strictly positive by assumption.
Since $\mathbb{E}[\widehat{\delta_{1,2}^{PP}}]=\delta_{1,2}$
(Proposition~\ref{prop:paired-mean-var}) and
$\lambda\mathbb{E}[D^F]+\mathbb{E}[D^Y-\lambda D^F]=\mathbb{E}[D^Y]=\delta_{1,2}$,
the centered estimator splits into two centered blocks,
\begin{equation*}
\widehat{\delta_{1,2}^{PP}}-\delta_{1,2}
=\underbrace{\frac{1}{U}\sum_{j=L+1}^{L+U}
\lambda\bigl(D_j^F-\mathbb{E}[D^F]\bigr)}_{=:\,\bar A_U}
\;+\;
\underbrace{\frac{1}{L}\sum_{j=1}^{L}
\Bigl[(D_j^Y-\lambda D_j^F)-\mathbb{E}[D^Y-\lambda D^F]\Bigr]}_{=:\,\bar B_L},
\end{equation*}
where $\bar A_U$ depends only on the unlabeled block and $\bar B_L$ only on the labeled block, so $\bar A_U$ and $B_L$ are independent. 

We scale by $\sqrt{L}$ to put both blocks on a common $1/\sqrt{L}$ rate. For the
labeled block, the central limit theorem gives
\begin{equation*}
\sqrt{L}\,\bar B_L
=\frac{1}{\sqrt{L}}\sum_{j=1}^{L}
\Bigl[(D_j^Y-\lambda D_j^F)-\mathbb{E}[D^Y-\lambda D^F]\Bigr]
\xrightarrow{d}\mathcal{N}\!\bigl(0,\tau^2\bigr).
\end{equation*}
For the unlabeled block, factor out the rate mismatch:
\begin{equation*}
\sqrt{L}\,\bar A_U
=\sqrt{\tfrac{L}{U}}\;\cdot\;
\frac{1}{\sqrt{U}}\sum_{j=L+1}^{L+U}
\lambda\bigl(D_j^F-\mathbb{E}[D^F]\bigr).
\end{equation*}
The central limit theorem gives
$\frac{1}{\sqrt{U}}\sum_{j=L+1}^{L+U}\lambda(D_j^F-\mathbb{E}[D^F])
\xrightarrow{d}\mathcal{N}(0,\lambda^2\sigma_F^2)$, and $\sqrt{L/U}\to\sqrt{v}$,
so by Slutsky's theorem
\begin{equation*}
\sqrt{L}\,\bar A_U\xrightarrow{d}\mathcal{N}\!\bigl(0,\,v\,\lambda^2\sigma_F^2\bigr).
\end{equation*}
Because $\bar A_U$ and $\bar B_L$ are independent, the pair
$(\sqrt{L}\,\bar A_U,\sqrt{L}\,\bar B_L)$ converges jointly to a pair of
independent normals, and hence their sum converges to the sum of those normals:
\begin{equation*}
\sqrt{L}\bigl(\widehat{\delta_{1,2}^{PP}}-\delta_{1,2}\bigr)
=\sqrt{L}\,\bar A_U+\sqrt{L}\,\bar B_L
\xrightarrow{d}\mathcal{N}\!\bigl(0,\,V_\infty\bigr),
\qquad
V_\infty:=v\,\lambda^2\sigma_F^2+\tau^2.
\end{equation*}
It remains to normalize by the exact variance. By
Proposition~\ref{prop:paired-mean-var},
$\mathrm{Var}[\widehat{\delta_{1,2}^{PP}}]
=\frac{\lambda^2\sigma_F^2}{U}+\frac{\tau^2}{L}$, so
\begin{equation*}
L\,\mathrm{Var}[\widehat{\delta_{1,2}^{PP}}]
=\frac{L}{U}\lambda^2\sigma_F^2+\tau^2
\longrightarrow v\,\lambda^2\sigma_F^2+\tau^2=V_\infty\in(0,\infty).
\end{equation*}
Writing
\begin{equation*}
\frac{\widehat{\delta_{1,2}^{PP}}-\delta_{1,2}}
{\sqrt{\mathrm{Var}[\widehat{\delta_{1,2}^{PP}}]}}
=\frac{\sqrt{L}\bigl(\widehat{\delta_{1,2}^{PP}}-\delta_{1,2}\bigr)}
{\sqrt{L\,\mathrm{Var}[\widehat{\delta_{1,2}^{PP}}]}},
\end{equation*}
the numerator converges in distribution to $\mathcal{N}(0,V_\infty)$ and the
denominator converges to $\sqrt{V_\infty}>0$, so Slutsky's
theorem yields
\begin{equation*}
\frac{\widehat{\delta_{1,2}^{PP}}-\delta_{1,2}}
{\sqrt{\mathrm{Var}[\widehat{\delta_{1,2}^{PP}}]}}
\xrightarrow{d}\mathcal{N}(0,1).
\end{equation*}
The human-only estimator $\widehat{\delta_{1,2}^{H}}$ is the special case $\lambda=0$, for which
$\bar A_U=0$, $\widehat{\delta_{1,2}^{PP}}=\frac1L\sum_{j=1}^{L}D_j^Y
=\widehat{\delta_{1,2}^{H}}$, $\tau^2=\mathrm{Var}[D^Y]$, and
$V_\infty=\mathrm{Var}[D^Y]$. Now a direct application of the
central limit theorem to the labeled block gives the limit.

Finally, for either $g\in\{H,PP\}$ and any consistent variance estimator,
\begin{equation*}
\frac{\widehat{\delta_{1,2}^{g}}-\delta_{1,2}}
{\sqrt{\widehat{\mathrm{Var}}[\widehat{\delta_{1,2}^{g}}]}}
=\frac{\widehat{\delta_{1,2}^{g}}-\delta_{1,2}}
{\sqrt{\mathrm{Var}[\widehat{\delta_{1,2}^{g}}]}}
\cdot
\sqrt{\frac{\mathrm{Var}[\widehat{\delta_{1,2}^{g}}]}
{\widehat{\mathrm{Var}}[\widehat{\delta_{1,2}^{g}}]}}
\xrightarrow{d}\mathcal{N}(0,1),
\end{equation*}
since the first factor converges to $\mathcal{N}(0,1)$ and the second converges in
probability to $1$. This establishes the validity of the stated interval.
\end{proof}

\begin{theorem}[Plug-in validity with data-driven tuning]
\label{thm:paired-plugin}
Maintain the hypotheses of Theorem~\ref{thm:paired-ci}: Assumption~\ref{ass:paired} holds, the relevant second moments are strictly positive, and $L,U\to\infty$ with $L/U\to v\in[0,\infty)$. Let $\widehat{\mathrm{Cov}}[D^Y,D^F]$ and
$\widehat{\mathrm{Var}}[D^F]$ be consistent moment estimators computed from the labeled pairs, and define the data-driven tuning parameter
\[
\hat\lambda
=\frac{\widehat{\mathrm{Cov}}[D^Y,D^F]}
{\bigl(1+\frac{L}{U}\bigr)\widehat{\mathrm{Var}}[D^F]},
\]
together with the plug-in point estimator $\widehat{\delta_{1,2}^{PP}}(\hat\lambda)$ and the plug-in variance estimator obtained by substituting $\hat\lambda$ and consistent moment estimators into Proposition~\ref{prop:paired-mean-var},
\[
\widehat{\mathrm{Var}}\bigl[\widehat{\delta_{1,2}^{PP}}(\hat\lambda)\bigr]
=\hat\lambda^2\Bigl(\tfrac1U+\tfrac1L\Bigr)\widehat{\mathrm{Var}}[D^F]
-\frac{2\hat\lambda}{L}\widehat{\mathrm{Cov}}[D^Y,D^F]
+\frac1L\widehat{\mathrm{Var}}[D^Y].
\]
Then
\[
\frac{\widehat{\delta_{1,2}^{PP}}(\hat\lambda)-\delta_{1,2}}
{\sqrt{\widehat{\mathrm{Var}}\bigl[\widehat{\delta_{1,2}^{PP}}(\hat\lambda)\bigr]}}
\xrightarrow{d}\mathcal{N}(0,1),
\]
so $\widehat{\delta_{1,2}^{PP}}(\hat\lambda)\pm z_{1-\alpha/2}
\sqrt{\widehat{\mathrm{Var}}[\widehat{\delta_{1,2}^{PP}}(\hat\lambda)]}$
is an asymptotically valid $100(1-\alpha)\%$ confidence interval for
$\delta_{1,2}$.
\end{theorem}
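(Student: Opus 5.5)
The plan is to reduce to Theorem~\ref{thm:paired-ci}. Let $\bar\lambda=\lim \hat\lambda$. By consistency of the moment estimators, $\mathrm{Var}[D^F]>0$, the continuous mapping theorem, and $L/U\to v$, we get
\[
\hat\lambda\xrightarrow{p}\bar\lambda:=\frac{\mathrm{Cov}[D^Y,D^F]}{(1+v)\mathrm{Var}[D^F]}.
\]
This limit is a fixed, deterministic constant.

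Next we compare the plug-in estimator with the oracle estimator at $\bar\lambda$. Since the estimator is linear in $\lambda$,
\[
\widehat{\delta_{1,2}^{PP}}(\hat\lambda)-\widehat{\delta_{1,2}^{PP}}(\bar\lambda)
=(\hat\lambda-\bar\lambda)\Bigl(\frac1U\sum_{j=L+1}^{L+U}D_j^F-\frac1L\sum_{j=1}^{L}D_j^F\Bigr).
\]
The bracket is a difference of two sample means of $D^F$, each centred at $\mathbb{E}[D^F]$. Because the common mean cancels, the bracket equals $\bar A'_U-\bar A'_L$ with centred means. By Chebyshev's inequality, each of these is $O_p(L^{-1/2})$; for the unlabeled mean we use $U^{-1/2}\le C L^{-1/2}$, which holds for large $L$ when $v>0$ and holds trivially when $v=0$. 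Multiplying by $\hat\lambda-\bar\lambda=o_p(1)$ shows that the difference is $o_p(L^{-1/2})$.

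Theorem~\ref{thm:paired-ci}, applied with the fixed value $\lambda=\bar\lambda$, gives
\[
\sqrt{L}\bigl(\widehat{\delta_{1,2}^{PP}}(\bar\lambda)-\delta_{1,2}\bigr)\xrightarrow{d}\mathcal{N}(0,V_\infty),
\qquad
V_\infty=v\bar\lambda^2\sigma_F^2+\mathrm{Var}[D^Y-\bar\lambda D^F].
\]
We need $V_\infty>0$. This follows from the strict-positivity assumption, which I read as ruling out the degenerate case $D^Y=\bar\lambda D^F$ almost surely. I will state this reading explicitly, since it is exactly the hypothesis $\tau^2>0$ from Theorem~\ref{thm:paired-ci}. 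Slutsky's theorem then transfers the same normal limit to $\sqrt{L}(\widehat{\delta_{1,2}^{PP}}(\hat\lambda)-\delta_{1,2})$.

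For the denominator, multiply the plug-in variance by $L$:
\[
L\,\widehat{\mathrm{Var}}=\hat\lambda^2\bigl(\tfrac LU+1\bigr)\widehat{\mathrm{Var}}[D^F]-2\hat\lambda\widehat{\mathrm{Cov}}[D^Y,D^F]+\widehat{\mathrm{Var}}[D^Y].
\]
By continuous mapping, this converges in probability to
\[
\bar\lambda^2(v+1)\sigma_F^2-2\bar\lambda\,\mathrm{Cov}[D^Y,D^F]+\mathrm{Var}[D^Y].
\]
This limit equals $V_\infty$, because $\mathrm{Var}[D^Y-\bar\lambda D^F]=\mathrm{Var}[D^Y]-2\bar\lambda\mathrm{Cov}+\bar\lambda^2\sigma_F^2$. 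We then write the studentized ratio as $\sqrt{L}(\cdot)/\sqrt{L\,\widehat{\mathrm{Var}}}$ and apply Slutsky's theorem to obtain the $\mathcal{N}(0,1)$ limit. Coverage of the stated interval follows immediately.

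The main obstacle is the second step. Because $\hat\lambda$ is computed from the same labeled data, the clean independence split used in Theorem~\ref{thm:paired-ci} is no longer available. The linearity argument sidesteps this, since it only needs $\hat\lambda-\bar\lambda=o_p(1)$ times an $O_p(L^{-1/2})$ term and never requires independence. The remaining care is needed for the rate of the unlabeled mean when $v=0$, and for the nondegeneracy of $V_\infty$.
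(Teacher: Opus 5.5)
Your proposal is correct and is essentially the paper's proof. It uses the same affine identity $\widehat{\delta_{1,2}^{PP}}(\hat\lambda)-\widehat{\delta_{1,2}^{PP}}(\bar\lambda)=(\hat\lambda-\bar\lambda)\widehat{\Delta}$ with $\sqrt{L}\,\widehat{\Delta}=O_p(1)$, the same reduction to Theorem~\ref{thm:paired-ci} at the fixed limit $\bar\lambda$, continuous mapping for $L\,\widehat{\mathrm{Var}}$, and a final Slutsky step. The differences are minor: you bound $\widehat{\Delta}$ with Chebyshev rather than the CLT, and you explicitly flag $V_\infty>0$, which the paper merely asserts and which fails only if $v=0$ and $|\mathrm{Corr}[D^Y,D^F]|=1$ (the degenerate case is $D^Y-\bar\lambda D^F$ being a.s.\ constant, not literally $D^Y=\bar\lambda D^F$).
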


\begin{proof}
Write $\sigma_F^2=\mathrm{Var}[D^F]>0$ and $c=\mathrm{Cov}[D^Y,D^F]$, and let $\lambda_0=c/\bigl((1+v)\sigma_F^2\bigr)$ denote the population limit of the tuning parameter. Since the moment estimators are consistent, we have $\widehat{\mathrm{Cov}}[D^Y,D^F]\xrightarrow{p}c$ and
$\widehat{\mathrm{Var}}[D^F]\xrightarrow{p}\sigma_F^2$, while
$1+\frac{L}{U}\to 1+v$; because $\sigma_F^2>0$ keeps the denominator bounded away from zero, the continuous mapping theorem gives $\hat\lambda\xrightarrow{p}\lambda_0$, so $\hat{\lambda} - \lambda_0 = o_p(1)$.

The estimator is affine in its tuning parameter. Grouping by powers of $\lambda$,
\[
\widehat{\delta_{1,2}^{PP}}(\lambda)
=\underbrace{\frac1L\sum_{j=1}^{L}D_j^Y}_{=\widehat{\delta_{1,2}^{H}}}
+\lambda\,\widehat{\Delta},
\qquad
\widehat{\Delta}:=\frac1U\sum_{j=L+1}^{L+U}D_j^F-\frac1L\sum_{j=1}^{L}D_j^F,
\]
so that
$\widehat{\delta_{1,2}^{PP}}(\hat\lambda)-\widehat{\delta_{1,2}^{PP}}(\lambda_0)
=(\hat\lambda-\lambda_0)\,\widehat{\Delta}$. The coefficient $\widehat{\Delta}$ is
a difference of two independent sample averages of i.i.d.\ copies of $D^F$, with
$\mathbb{E}[\widehat{\Delta}]=0$ and
$\mathrm{Var}[\widehat{\Delta}]=\bigl(\tfrac1U+\tfrac1L\bigr)\sigma_F^2$. Applying the $\sqrt{L}$ scaling of Theorem~\ref{thm:paired-ci} to $\widehat{\Delta}$, the central limit theorem on each block together with $\sqrt{L/U}\to\sqrt v$ and Slutsky's theorem, gives
\[
\sqrt{L}\,\widehat{\Delta}
\xrightarrow{d}\mathcal{N}\!\bigl(0,(1+v)\sigma_F^2\bigr),
\qquad\text{so in particular }\sqrt{L}\,\widehat{\Delta}=O_p(1).
\]
Consequently the substitution of $\hat\lambda$ for $\lambda_0$ perturbs the estimator only through the product of a vanishing consistency gap and a bounded centered term,
\[
\sqrt{L}\Bigl(\widehat{\delta_{1,2}^{PP}}(\hat\lambda)
-\widehat{\delta_{1,2}^{PP}}(\lambda_0)\Bigr)
=(\hat\lambda-\lambda_0)\,\sqrt{L}\,\widehat{\Delta}
=o_p(1)\cdot O_p(1)=o_p(1).
\]
Equivalently,
\[\sqrt{L}\Bigl(\widehat{\delta_{1,2}^{PP}}(\hat\lambda) -\widehat{\delta_{1,2}^{PP}}(\lambda_0)\Bigr)\xrightarrow{p} 0.\]
The limit law now follows by comparison with the fixed-$\lambda$ result. Because $\lambda_0$ is a fixed constant, Theorem~\ref{thm:paired-ci} applies at $\lambda=\lambda_0$ and gives
\[
\sqrt{L}\bigl(\widehat{\delta_{1,2}^{PP}}(\lambda_0)-\delta_{1,2}\bigr)
\xrightarrow{d}\mathcal{N}\!\bigl(0,V_\infty(\lambda_0)\bigr),
\qquad
V_\infty(\lambda_0)=(1+v)\lambda_0^2\sigma_F^2-2\lambda_0 c+\mathrm{Var}[D^Y]>0,
\]
Adding $\sqrt{L}\Bigl(\widehat{\delta_{1,2}^{PP}}(\hat\lambda) -\widehat{\delta_{1,2}^{PP}}(\lambda_0)\Bigr)\xrightarrow{p} 0$ above, via Slutsky's theorem, transfers this limit to the plug-in estimator,
\[
\sqrt{L}\bigl(\widehat{\delta_{1,2}^{PP}}(\hat\lambda)-\delta_{1,2}\bigr)
\xrightarrow{d}\mathcal{N}\!\bigl(0,V_\infty(\lambda_0)\bigr).
\]

It remains to check that the plug-in variance estimator recovers the same constant. Scaling by $L$,
\[
L\,\widehat{\mathrm{Var}}\bigl[\widehat{\delta_{1,2}^{PP}}(\hat\lambda)\bigr]
=\hat\lambda^2\Bigl(\tfrac{L}{U}+1\Bigr)\widehat{\mathrm{Var}}[D^F]
-2\hat\lambda\,\widehat{\mathrm{Cov}}[D^Y,D^F]
+\widehat{\mathrm{Var}}[D^Y]
\]
is a continuous function of $\bigl(\hat\lambda,\tfrac{L}{U},\widehat{\mathrm{Var}}[D^F], \widehat{\mathrm{Cov}}[D^Y,D^F],\widehat{\mathrm{Var}}[D^Y]\bigr)$, and these converge in probability to $(\lambda_0,v,\sigma_F^2,c,\mathrm{Var}[D^Y])$ by the
consistency established above. The continuous mapping theorem therefore yields
\[
L\,\widehat{\mathrm{Var}}\bigl[\widehat{\delta_{1,2}^{PP}}(\hat\lambda)\bigr]
\xrightarrow{p}
(1+v)\lambda_0^2\sigma_F^2-2\lambda_0 c+\mathrm{Var}[D^Y]
=V_\infty(\lambda_0)>0.
\]
Writing the studentized statistic as
\[
\frac{\widehat{\delta_{1,2}^{PP}}(\hat\lambda)-\delta_{1,2}}
{\sqrt{\widehat{\mathrm{Var}}[\widehat{\delta_{1,2}^{PP}}(\hat\lambda)]}}
=\frac{\sqrt{L}\bigl(\widehat{\delta_{1,2}^{PP}}(\hat\lambda)-\delta_{1,2}\bigr)}
{\sqrt{L\,\widehat{\mathrm{Var}}[\widehat{\delta_{1,2}^{PP}}(\hat\lambda)]}},
\]
its numerator converges in distribution to $\mathcal{N}(0,V_\infty(\lambda_0))$ and its denominator converges in probability to $\sqrt{V_\infty(\lambda_0)}>0$, so a final application of Slutsky's theorem gives the standard normal limit and the stated coverage.
\end{proof}

\section{Unpaired Design}
\label{app:unpaired}

For the unpaired design let $L_i=|I_i^L|$, $U_i=|I_i^U|$, and
\begin{equation*}
\begin{split}
\widehat{\delta_{1,2}^{PP,\mathrm{Un}}}
& =\frac{1}{U_1}\sum_{j\in I_1^U}\lambda_1 F(X_{1j})
+\frac{1}{L_1}\sum_{j\in I_1^L}\bigl(Y_{1j}-\lambda_1 F(X_{1j})\bigr) \\ 
&\qquad - \frac{1}{U_2}\sum_{j\in I_2^U}\lambda_2 F(X_{2j})
-\frac{1}{L_2}\sum_{j\in I_2^L}\bigl(Y_{2j}-\lambda_2 F(X_{2j})\bigr).
\end{split}
\end{equation*}

\begin{assumption}[Unpaired sampling]
\label{ass:unpaired}
Samples from different systems are independent; for each system the labeled and unlabeled samples are drawn from the same system-specific distribution; and within each system the i.i.d.\ and finite-second-moment conditions hold.
\end{assumption}

\begin{proposition}[Unbiasedness and variance]
\label{prop:unpaired}
Under Assumption~\ref{ass:unpaired}, $\widehat{\delta_{1,2}^{PP,\mathrm{Un}}}$ is unbiased for $\delta_{1,2}$ with
\begin{equation*}
\begin{split}
\mathrm{Var}\bigl[\widehat{\delta_{1,2}^{PP,\mathrm{Un}}}\bigr] &=\lambda_1^2\bigl(\frac{1}{U_1}+\frac{1}{L_1}\bigr)\mathrm{Var}[F(X_1)]
-\frac{2\lambda_1}{L_1}\mathrm{Cov}[Y_1,F(X_1)]
+\frac{1}{L_1}\mathrm{Var}[Y_1] \\
&\qquad + \lambda_2^2\bigl(\frac{1}{U_2}+\frac{1}{L_2}\bigr)\mathrm{Var}[F(X_2)]
-\frac{2\lambda_2}{L_2}\mathrm{Cov}[Y_2,F(X_2)]
+\frac{1}{L_2}\mathrm{Var}[Y_2]
\end{split}
\end{equation*}

This is minimized at
\[\lambda_i^\star=\dfrac{\mathrm{Cov}[Y_i,F(X_i)]}{(1+L_i/U_i)\mathrm{Var}[F(X_i)]}, \quad i\in\{1,2\},\]
giving
\begin{equation*}
\begin{split}
\mathrm{Var}\bigl[\widehat{\delta_{1,2}^{PP,\mathrm{Un}}}\bigr] =\mathrm{Var}\bigl[\widehat{\delta_{1,2}^{H,\mathrm{Un}}}\bigr] 
-\frac{U_1\bigl(\mathrm{Cov}[Y_1,F(X_1)]\bigr)^2}{L_1(L_1+U_1)\,\mathrm{Var}[F(X_1)]} - \frac{U_2\bigl(\mathrm{Cov}[Y_2,F(X_2)]\bigr)^2}{L_2(L_2+U_2)\,\mathrm{Var}[F(X_2)]} \le\mathrm{Var}\bigl[\widehat{\delta_{1,2}^{H,\mathrm{Un}}}\bigr].
\end{split}
\end{equation*}
\end{proposition}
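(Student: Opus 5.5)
The plan is to reduce the unpaired case to two independent copies of the single-system computation already done in Proposition~\ref{prop:paired-mean-var} and Proposition~\ref{prop:paired-opt}. For $i\in\{1,2\}$, write
\[
T_i=\frac{1}{U_i}\sum_{j\in I_i^U}\lambda_i F(X_{ij})+\frac{1}{L_i}\sum_{j\in I_i^L}\bigl(Y_{ij}-\lambda_i F(X_{ij})\bigr),
\]
so that $\widehat{\delta_{1,2}^{PP,\mathrm{Un}}}=T_1-T_2$. Each $T_i$ is exactly the generic PPI estimator \eqref{eq:generic_ppi} with $Z=Y_i$, $g(W)=F(X_i)$, and sample sizes $(L_i,U_i)$.

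\textbf{Unbiasedness.} By linearity of expectation and Assumption~\ref{ass:unpaired}, the labeled and unlabeled draws of system $i$ share the same distribution. Hence
\[
\mathbb{E}[T_i]=\lambda_i\nu_i+\mu_i-\lambda_i\nu_i=\mu_i,
\]
and so $\mathbb{E}[T_1-T_2]=\mu_1-\mu_2=\delta_{1,2}$.

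\textbf{Variance.} Independence across systems gives $\mathrm{Var}[T_1-T_2]=\mathrm{Var}[T_1]+\mathrm{Var}[T_2]$, since the cross-covariance vanishes. Within system $i$, the unlabeled and labeled blocks are independent. This is the one point where Assumption~\ref{ass:unpaired} must be read as including independence of the labeled and unlabeled samples within a system, which holds for disjoint index sets of i.i.d.\ draws. The unlabeled block therefore contributes $\lambda_i^2\mathrm{Var}[F(X_i)]/U_i$. The labeled block contributes
\[
\frac{1}{L_i}\Bigl(\mathrm{Var}[Y_i]+\lambda_i^2\mathrm{Var}[F(X_i)]-2\lambda_i\mathrm{Cov}[Y_i,F(X_i)]\Bigr).
\]
Summing these reproduces the stated expression. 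The calculation is line by line the proof of Proposition~\ref{prop:paired-mean-var} with $D^Y,D^F$ replaced by $Y_i,F(X_i)$.

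\textbf{Optimization.} The variance is a sum of two quadratics, one in $\lambda_1$ alone and one in $\lambda_2$ alone, so it is minimized separately in each coordinate. When $\mathrm{Var}[F(X_i)]>0$, each quadratic opens upward with leading coefficient $(1/U_i+1/L_i)\mathrm{Var}[F(X_i)]$, and completing the square gives
\[
\lambda_i^\star=\frac{\mathrm{Cov}[Y_i,F(X_i)]}{(1+L_i/U_i)\mathrm{Var}[F(X_i)]}.
\]
The minimal value of term $i$ is
\[
\frac{1}{L_i}\mathrm{Var}[Y_i]-\frac{\mathrm{Cov}^2}{L_i^2(1/U_i+1/L_i)\mathrm{Var}}
=\frac{1}{L_i}\mathrm{Var}[Y_i]-\frac{U_i\,\mathrm{Cov}[Y_i,F(X_i)]^2}{L_i(L_i+U_i)\mathrm{Var}[F(X_i)]}.
\]
This matches Proposition~\ref{prop:paired-opt} termwise.

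\textbf{Comparison with human-only.} The unpaired human-only estimator is the special case $\lambda_1=\lambda_2=0$, whose variance under independence is $\mathrm{Var}[Y_1]/L_1+\mathrm{Var}[Y_2]/L_2$. This gives the stated identity, and the inequality follows because both subtracted terms are nonnegative.

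No step is a real obstacle. The only care needed is the degenerate case $\mathrm{Var}[F(X_i)]=0$. There the quadratic in $\lambda_i$ is constant, so any $\lambda_i$, for instance $\lambda_i=0$, is optimal, and the corresponding reduction term is interpreted as $0$.
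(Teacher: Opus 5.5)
Your proof is correct and follows essentially the same route as the paper. Both use linearity for the mean, additivity of variance over the four mutually independent blocks, termwise minimization of the two upward-opening quadratics, and recover the human-only variance at $\lambda_1=\lambda_2=0$; your remarks on within-system labeled/unlabeled independence and on the degenerate case $\mathrm{Var}[F(X_i)]=0$ only make explicit points that the paper's proof leaves implicit.
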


\begin{proof}
For the mean, with $\mu_i=\mathbb{E}[Y_i]$ and $\nu_i=\mathbb{E}[F(X_i)]$,
\begin{equation*}
\begin{split}
\mathbb{E}\left[\widehat{\delta_{1,2}^{PP,\mathrm{Un}}}\right]
&=
\frac{\lambda_1}{U_1}\sum_{j\in I_1^U}
\mathbb{E}[F(X_{1j})]
+
\frac{1}{L_1}\sum_{j\in I_1^L}
\mathbb{E}[Y_{1j}]
-
\frac{\lambda_1}{L_1}\sum_{j\in I_1^L}
\mathbb{E}[F(X_{1j})] \\
&\qquad
-
\frac{\lambda_2}{U_2}\sum_{j\in I_2^U}
\mathbb{E}[F(X_{2j})]
-
\frac{1}{L_2}\sum_{j\in I_2^L}
\mathbb{E}[Y_{2j}]
+
\frac{\lambda_2}{L_2}\sum_{j\in I_2^L}
\mathbb{E}[F(X_{2j})] \\
&=
\lambda_1 \nu_1 + \mu_1 - \lambda_1 \nu_1
-
\lambda_2 \nu_2 - \mu_2 + \lambda_2 \nu_2 \\
&= \mu_1-\mu_2 \\
& =\delta_{1,2}.
\end{split}
\end{equation*}
For the variance, the four blocks are mutually independent, so
\begin{equation*}
\begin{split}
\mathrm{Var}\left[\widehat{\delta_{1,2}^{PP,\mathrm{Un}}}\right]
&=
\frac{\lambda_1^2}{U_1}\mathrm{Var}[F(X_1)]
+
\frac{1}{L_1}
\left(
\mathrm{Var}[Y_1]
+
\lambda_1^2\mathrm{Var}[F(X_1)]
-
2\lambda_1\mathrm{Cov}[Y_1,F(X_1)]
\right) \\
&\qquad
+
\frac{\lambda_2^2}{U_2}\mathrm{Var}[F(X_2)]
+
\frac{1}{L_2}
\left(
\mathrm{Var}[Y_2]
+
\lambda_2^2\mathrm{Var}[F(X_2)]
-
2\lambda_2\mathrm{Cov}[Y_2,F(X_2)]
\right) \\
&=
\sum_{i=1}^{2}\Bigl[
\lambda_i^2
\Bigl(
\frac{1}{U_i}+\frac{1}{L_i}
\Bigr)
\mathrm{Var}[F(X_i)]
-
\frac{2\lambda_i}{L_i}
\mathrm{Cov}[Y_i,F(X_i)]
+
\frac{1}{L_i}\mathrm{Var}[Y_i]
\Bigr].
\end{split}
\end{equation*}
This is a sum of two upward-opening quadratics, in $\lambda_1$ and $\lambda_2$ respectively, so it is minimized termwise at
\begin{equation*}
\lambda_i^\star
=
\frac{
\mathrm{Cov}[Y_i,F(X_i)]
}{
\left(1+\frac{L_i}{U_i}\right)
\mathrm{Var}[F(X_i)]
},
\qquad i\in\{1,2\}.
\end{equation*}
Substituting $\lambda_1^\star,\lambda_2^\star$ gives
\begin{equation*}
\begin{split}
\mathrm{Var}\left[\widehat{\delta_{1,2}^{PP,\mathrm{Un}}}\right]
&=
\mathrm{Var}\left[\widehat{\delta_{1,2}^{H,\mathrm{Un}}}\right]
-
\frac{
U_1\left(\mathrm{Cov}[Y_1,F(X_1)]\right)^2
}{
L_1(L_1+U_1)\mathrm{Var}[F(X_1)]
}
-
\frac{
U_2\left(\mathrm{Cov}[Y_2,F(X_2)]\right)^2
}{
L_2(L_2+U_2)\mathrm{Var}[F(X_2)]
}
\;\leq\;
\mathrm{Var}\left[\widehat{\delta_{1,2}^{H,\mathrm{Un}}}\right],
\end{split}
\end{equation*}
where $\mathrm{Var}[\widehat{\delta_{1,2}^{H,\mathrm{Un}}}]=\frac{1}{L_1}\mathrm{Var}[Y_1]+\frac{1}{L_2}\mathrm{Var}[Y_2]$.
\end{proof}

\section{Paired vs. Unpaired Design}
\label{app:paired_vs_unpaired}

\begin{proposition}[Paired-versus-unpaired variance gap]
\label{prop:gap}
Assume equal sizes $L_1=L_2=L$ and $U_1=U_2=U$. For human-only evaluation,
\[
\mathrm{Var}\bigl[\widehat{\delta_{1,2}^{H,\mathrm{Un}}}\bigr]
-\mathrm{Var}\bigl[\widehat{\delta_{1,2}^{H}}\bigr]
=\frac{2}{L}\mathrm{Cov}[Y_1,Y_2].
\]
For prediction-powered evaluation,
\begin{equation*}
\begin{split}
\mathrm{Var}\bigl[\widehat{\delta_{1,2}^{PP,\mathrm{Un}}}\bigr]
-\mathrm{Var}\bigl[\widehat{\delta_{1,2}^{PP}}\bigr] & =\frac1L\Bigl(
2\,\mathrm{Cov}[Y_1,Y_2] +\frac{U}{L+U}\bigl[
\mathrm{Corr}[Y_1-Y_2,F(X_1)-F(X_2)]^2\,\mathrm{Var}[Y_1-Y_2] \\
& \qquad -\mathrm{Corr}[Y_1,F(X_1)]^2\,\mathrm{Var}[Y_1] -\mathrm{Corr}[Y_2,F(X_2)]^2\,\mathrm{Var}[Y_2]
\bigr]\Bigr).
\end{split}
\end{equation*}
\end{proposition}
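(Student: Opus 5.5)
The plan is to read off both variance gaps directly from the closed-form variances already established, specialized to equal sample sizes $L_1=L_2=L$ and $U_1=U_2=U$. Throughout, the paired quantities refer to the joint distribution of $(Y_1,Y_2,F(X_1),F(X_2))$ on a common input, while the unpaired quantities use only the marginals of each system. This makes the two sides directly comparable.

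\textbf{Human-only case.} I will use $\mathrm{Var}[\widehat{\delta_{1,2}^{H}}]=\frac1L\mathrm{Var}[Y_1-Y_2]$ from the paired setting. From the proof of Proposition~\ref{prop:unpaired}, $\mathrm{Var}[\widehat{\delta_{1,2}^{H,\mathrm{Un}}}]=\frac1L(\mathrm{Var}[Y_1]+\mathrm{Var}[Y_2])$. Expanding $\mathrm{Var}[Y_1-Y_2]=\mathrm{Var}[Y_1]+\mathrm{Var}[Y_2]-2\mathrm{Cov}[Y_1,Y_2]$ and subtracting gives $\frac2L\mathrm{Cov}[Y_1,Y_2]$ immediately.

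\textbf{Prediction-powered case.} Take the optimally tuned variances, which is the implicit convention in the statement. By Proposition~\ref{prop:paired-opt}, the paired variance equals $\mathrm{Var}[\widehat{\delta_{1,2}^{H}}]$ minus $\frac{U}{L(L+U)}\,\mathrm{Cov}[D^Y,D^F]^2/\mathrm{Var}[D^F]$. I rewrite this subtracted term as $\frac{U}{L(L+U)}\mathrm{Corr}[D^Y,D^F]^2\,\mathrm{Var}[D^Y]$, using the identity $\mathrm{Cov}[A,B]^2/\mathrm{Var}[B]=\mathrm{Corr}[A,B]^2\mathrm{Var}[A]$.

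Similarly, by Proposition~\ref{prop:unpaired} with equal sizes, the unpaired variance equals $\mathrm{Var}[\widehat{\delta_{1,2}^{H,\mathrm{Un}}}]$ minus $\frac{U}{L(L+U)}\sum_{i=1}^{2}\mathrm{Corr}[Y_i,F(X_i)]^2\mathrm{Var}[Y_i]$.

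Subtracting the paired expression from the unpaired one gives three pieces:
\begin{itemize}
\item the human-only gap $\frac2L\mathrm{Cov}[Y_1,Y_2]$ from the first part;
\item plus the paired reduction term;
\item minus the two unpaired reduction terms.
\end{itemize}
Factoring out $\frac1L$ and $\frac{U}{L+U}$ yields exactly the displayed formula.

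\textbf{Caveats.} The only real care needed is twofold. First, be explicit that $\lambda^\star$ (respectively $\lambda_1^\star,\lambda_2^\star$) is used in each design. Second, the correlation rewriting requires $\mathrm{Var}[D^F]>0$ and $\mathrm{Var}[F(X_i)]>0$. If any of these variances vanishes, the corresponding reduction term is interpreted as zero, and $\lambda=0$ is optimal for that design. I do not expect any genuine obstacle; the main risk is simply a sign or bookkeeping slip in combining the reduction terms.
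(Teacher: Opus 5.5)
Your proposal is correct and matches the paper's proof step for step: subtract the two human-only variances, then subtract the optimally tuned paired variance (Proposition~\ref{prop:paired-opt}) from the optimally tuned unpaired variance (Proposition~\ref{prop:unpaired}), and convert each squared-covariance term via $\mathrm{Cov}[A,B]^2/\mathrm{Var}[B]=\mathrm{Corr}[A,B]^2\,\mathrm{Var}[A]$. You also state explicitly two things the paper leaves implicit: that the optimal $\lambda$'s are used in each design, and that the correlation rewriting requires nonzero variances.
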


\begin{proof}
For human-only evaluation, the paired variance is
\[
\mathrm{Var}[\widehat{\delta_{1,2}^{H}}]
=\frac1L(\mathrm{Var}[Y_1]+\mathrm{Var}[Y_2]-2\mathrm{Cov}[Y_1,Y_2])
\]
while the unpaired variance is
\[
\mathrm{Var}[\widehat{\delta_{1,2}^{H,\mathrm{Un}}}]
=\frac1L(\mathrm{Var}[Y_1]+\mathrm{Var}[Y_2]),
\]
so their difference is $\frac{2}{L}\mathrm{Cov}[Y_1,Y_2]$.
For prediction-powered evaluation, subtracting the paired variance from the unpaired variance (with $L_1=L_2=L$, $U_1=U_2=U$) gives
\begin{equation*}
\begin{split}
\mathrm{Var}\left[\widehat{\delta_{1,2}^{PP,\mathrm{Un}}}\right] - \mathrm{Var}\left[\widehat{\delta_{1,2}^{PP}}\right] &=
\frac{1}{L}\big(\mathrm{Var}[Y_1]+\mathrm{Var}[Y_2]-\mathrm{Var}[Y_1-Y_2]\big) 
-
\frac{
U\left(\mathrm{Cov}[Y_1,F(X_1)]\right)^2
}{
L(L+U)\mathrm{Var}[F(X_1)]
} \\
& \qquad -
\frac{
U\left(\mathrm{Cov}[Y_2,F(X_2)]\right)^2
}{
L(L+U)\mathrm{Var}[F(X_2)]
} + \frac{U\left(
\mathrm{Cov}\big[
Y_1-Y_2,\,
F(X_1)-F(X_2)
\big]
\right)^2
}{
L(L+U)
\mathrm{Var}\big[
F(X_1)-F(X_2)
\big]
}.
\end{split}
\end{equation*}
Using 
\[
\mathrm{Var}[Y_1]+\mathrm{Var}[Y_2]-\mathrm{Var}[Y_1-Y_2]=2\mathrm{Cov}[Y_1,Y_2]
\]
and the identity
\[
\frac{(\mathrm{Cov}[A,B])^2}{\mathrm{Var}[B]}=\mathrm{Corr}[A,B]^2\mathrm{Var}[A]
\]
applied to each covariance term yields
\begin{equation*}
\begin{split}
\mathrm{Var}\left[\widehat{\delta_{1,2}^{PP,\mathrm{Un}}}\right]
-
\mathrm{Var}\left[\widehat{\delta_{1,2}^{PP}}\right] 
&=
\frac{1}{L}
\Bigg(
2\mathrm{Cov}[Y_1,Y_2]
+
\frac{U}{L+U}
\Big(
\mathrm{Corr}\big[
Y_1-Y_2,\,
F(X_1)-F(X_2)
\big]^2
\mathrm{Var}[Y_1-Y_2] \\
&\qquad
-
\mathrm{Corr}[Y_1,F(X_1)]^2
\mathrm{Var}[Y_1]
-
\mathrm{Corr}[Y_2,F(X_2)]^2
\mathrm{Var}[Y_2]
\Big)
\Bigg).
\end{split}
\end{equation*}
\end{proof}

When $U\gg L$ one may use $\frac{U}{L+U}\approx 1$, in which case the prediction-powered gap simplifies to
\begin{equation*}
\begin{split}
\mathrm{Var}\left[\widehat{\delta_{1,2}^{PP,\mathrm{Un}}}\right]
-
\mathrm{Var}\left[\widehat{\delta_{1,2}^{PP}}\right]
&\approx
\frac{1}{L}
\Big(
2\mathrm{Cov}[Y_1,Y_2]
+
\mathrm{Corr}\big[
Y_1-Y_2,\,
F(X_1)-F(X_2)
\big]^2
\mathrm{Var}[Y_1-Y_2] \\
&\qquad
-
\mathrm{Corr}[Y_1,F(X_1)]^2
\mathrm{Var}[Y_1]
-
\mathrm{Corr}[Y_2,F(X_2)]^2
\mathrm{Var}[Y_2]
\Big).
\end{split}
\end{equation*}

\begin{table}[t]
\centering
\begin{tabular}{lrrr}
\toprule
Dataset &  $|\widehat{\mathrm{Corr}_{\delta}}|$ &  $|\widehat{\mathrm{Corr}_{\mathrm{sep}}}|$ & Smaller \\
\midrule
WMT22 en-de & 0.227 & 0.327 & 92.6\% \\
WMT22 en-ru & 0.202 & 0.271 & 81.9\% \\
WMT22 zh-en & 0.173 & 0.342 & 95.4\% \\
WMT23 en-zh & 0.244 & 0.288 & 69.7\% \\
WMT23 ja-en & 0.215 & 0.291 & 89.7\% \\
WMT24 cs-uk & 0.246 & 0.427 & 96.1\% \\
\bottomrule
\end{tabular}
\caption{Correlation diagnostics for the prediction-powered paired-versus-unpaired comparison. Here $\widehat{\mathrm{Corr}_{\mathrm{\delta}}}= \widehat{\mathrm{Corr}}[Y_1-Y_2,F(X_1)-F(X_2)]$, and $\widehat{\mathrm{Corr}_{\mathrm{sep}}}=\frac{1}{2}(\widehat{\mathrm{Corr}}[Y_1,F(X_1)]+\widehat{\mathrm{Corr}}[Y_2,F(X_2)])$. Mean values across system pairs and automatic metrics are shown. ``Smaller'' is the percentage of cases where $|\widehat{\mathrm{Corr}_{\delta}}|<|\widehat{\mathrm{Corr}_{\mathrm{sep}}}|$.}
\label{tab:paired_unpaired_pp_corr}
\end{table}

The result above shows that in addition to $\mathrm{Cov}[Y_1,Y_2]$, the three correlation terms $\mathrm{Corr}[Y_1-Y_2,F(X_1)-F(X_2)]$, $\mathrm{Corr}[Y_1,F(X_1)]$, and $\mathrm{Corr}[Y_2,F(X_2)]$ also have a critical impact on the gap. In \S\ref{sec:exp-gap}, we observe that the unpaired design has larger estimated variance than the paired design, but this effect in prediction-powered evaluation is weaker than in the human-only cases. The correlation diagnostics in Table~\ref{tab:paired_unpaired_pp_corr} explain why: the automatic metric is usually less correlated with the human score difference $Y_1-Y_2$ than with the individual human scores $Y_1$ or $Y_2$. This implies that in a paired design the power of automatic metrics is utilized less than in an unpaired design, suggesting a direction for improving automatic metrics.

\section{Saving Ratio}
\label{app:savings}

\begin{proposition}[Saving ratio]
\label{prop:savings}
For automatic metric $F$, the relative variance reduction of paired prediction-powered evaluation over human-only evaluation is
\begin{align*}
\frac{\mathrm{Var}[\widehat{\delta_{1,2}^{H}}]-\mathrm{Var}[\widehat{\delta_{1,2}^{PP}}]}
{\mathrm{Var}[\widehat{\delta_{1,2}^{H}}]} &= \frac{U}{L+U}\,
\mathrm{Corr}\bigl[Y_1-Y_2,F(X_1)-F(X_2)\bigr]^2 \\ 
& \xrightarrow{\,U\gg L\,}
\mathrm{Corr}\bigl[Y_1-Y_2,F(X_1)-F(X_2)\bigr]^2.
\end{align*}
Equivalently, if $L_{\mathrm{eq}}$ is the human-only labeled size that matches the prediction-powered variance obtained with $L$ labeled examples, then
\[
\frac{L_{\mathrm{eq}}-L}{L_{\mathrm{eq}}}
=\frac{\mathrm{Var}[\widehat{\delta_{1,2}^{H}}]-\mathrm{Var}[\widehat{\delta_{1,2}^{PP}}]}
{\mathrm{Var}[\widehat{\delta_{1,2}^{H}}]},
\]
which is precisely the fraction of human judgments saved by using metric $F$.
\end{proposition}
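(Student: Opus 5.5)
The proof will be a direct computation from Proposition~\ref{prop:paired-opt}, which already gives the optimally tuned variance as $\mathrm{Var}[\widehat{\delta_{1,2}^{H}}]$ minus an explicit reduction term. Write $D^Y=Y_1-Y_2$ and $D^F=F(X_1)-F(X_2)$. The human-only variance is $\mathrm{Var}[\widehat{\delta_{1,2}^{H}}]=\frac1L\mathrm{Var}[D^Y]$. The reduction at $\lambda^\star$ is $\frac{U\,\mathrm{Cov}[D^Y,D^F]^2}{L(L+U)\mathrm{Var}[D^F]}$.

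\textbf{Relative reduction.} Dividing the reduction term by $\frac1L\mathrm{Var}[D^Y]$ cancels the factor $1/L$. This leaves $\frac{U}{L+U}\cdot\frac{\mathrm{Cov}[D^Y,D^F]^2}{\mathrm{Var}[D^Y]\mathrm{Var}[D^F]}$, and the second factor is exactly $\mathrm{Corr}[D^Y,D^F]^2$. This step uses the same identity $\mathrm{Cov}[A,B]^2/\mathrm{Var}[B]=\mathrm{Corr}[A,B]^2\mathrm{Var}[A]$ as in Proposition~\ref{prop:gap}, and needs both variances to be strictly positive. The limiting statement then follows because $U/(L+U)\to1$ when $U/L\to\infty$, or equivalently $L/U\to0$ in the notation of Theorem~\ref{thm:paired-ci}.

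\textbf{Saving-ratio equivalence.} Define $L_{\mathrm{eq}}$ by $\frac{1}{L_{\mathrm{eq}}}\mathrm{Var}[D^Y]=\mathrm{Var}[\widehat{\delta_{1,2}^{PP}}]$. Here $L_{\mathrm{eq}}$ is allowed to be real, since the human-only variance scales exactly as $1/L$. Substituting $\mathrm{Var}[\widehat{\delta_{1,2}^{PP}}]=\frac1L\mathrm{Var}[D^Y](1-\rho^2_U)$, with $\rho^2_U:=\frac{U}{L+U}\mathrm{Corr}[D^Y,D^F]^2$, gives $L_{\mathrm{eq}}=L/(1-\rho_U^2)$. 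Hence $(L_{\mathrm{eq}}-L)/L_{\mathrm{eq}}=1-L/L_{\mathrm{eq}}=\rho_U^2$, which matches the relative reduction just computed. More abstractly, $L/L_{\mathrm{eq}}$ equals the variance ratio $\mathrm{Var}[\widehat{\delta_{1,2}^{PP}}]/\mathrm{Var}[\widehat{\delta_{1,2}^{H}}]$, and subtracting from one gives the claim.

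\textbf{Main obstacle.} The algebra is routine, so the main point of care is interpretive. In the equivalence, $U$ is held fixed while only the human-only labeled size changes to $L_{\mathrm{eq}}$; the argument compares against a human-only design, whose variance involves no $U$ at all. One must also handle the degenerate case $\rho_U^2=1$, which is excluded since it requires $\mathrm{Corr}^2=1$ and $U/(L+U)<1$. The result is stated at the population-optimal $\lambda^\star$, not at $\hat\lambda$; any statement about $\hat\lambda$ would hold only asymptotically, via Theorem~\ref{thm:paired-plugin}.
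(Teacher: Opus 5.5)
Your proposal is correct and follows the paper's proof essentially step for step. You divide the reduction term from Proposition~\ref{prop:paired-opt} by $\frac1L\mathrm{Var}[Y_1-Y_2]$ to get $\frac{U}{L+U}\mathrm{Corr}^2$, then define $L_{\mathrm{eq}}$ by $\frac{1}{L_{\mathrm{eq}}}\mathrm{Var}[Y_1-Y_2]=\mathrm{Var}[\widehat{\delta_{1,2}^{PP}}]$ and read off the relative reduction as $1-L/L_{\mathrm{eq}}$; the explicit formula $L_{\mathrm{eq}}=L/(1-\rho_U^2)$ and your remarks on positive variances and $\rho_U^2<1$ are harmless details the paper leaves implicit.
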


\begin{proof}
From $\mathrm{Var}[\widehat{\delta_{1,2}^{H}}]=\frac1L\mathrm{Var}[Y_1-Y_2]$ and Proposition~\ref{prop:paired-opt},
\begin{equation*}
\begin{split}
\mathrm{Var}\!\left[\widehat{\delta_{1,2}^{H}}\right] -   \mathrm{Var}\!\left[\widehat{\delta_{1,2}^{PP}}\right] = \frac{
U\left(
\mathrm{Cov}\big[
Y_1-Y_2,\,
F(X_1)-F(X_2)
\big]
\right)^2
}{
L(L+U)
\mathrm{Var}\big[
F(X_1)-F(X_2)
\big]
}.
\end{split}
\end{equation*}
Dividing by $\mathrm{Var}[\widehat{\delta_{1,2}^{H}}]=\frac1L\mathrm{Var}[Y_1-Y_2]$,
\begin{equation*}
\begin{split}
\frac{
\mathrm{Var}\!\left[\widehat{\delta_{1,2}^{H}}\right]
-
\mathrm{Var}\!\left[\widehat{\delta_{1,2}^{PP}}\right]
}{
\mathrm{Var}\!\left[\widehat{\delta_{1,2}^{H}}\right]
}
&=
\frac{U}{L+U}
\frac{
\left(
\mathrm{Cov}\big[
Y_1-Y_2,\,
F(X_1)-F(X_2)
\big]
\right)^2
}{
\mathrm{Var}[Y_1-Y_2]\,
\mathrm{Var}\big[
F(X_1)-F(X_2)
\big]
}\\
&=\frac{U}{L+U}
\mathrm{Corr}\big[
Y_1-Y_2,\,
F(X_1)-F(X_2)
\big]^2,
\end{split}
\end{equation*}
which tends to $\mathrm{Corr}\big[Y_1-Y_2,\, F(X_1)-F(X_2)
\big]^2$ as $\frac{U}{L+U}\to 1$ when $U\gg L$. For the equivalence, let $L_{\mathrm{eq}}$ satisfy
$\frac{1}{L_{\mathrm{eq}}}\mathrm{Var}[Y_1-Y_2]=\mathrm{Var}[\widehat{\delta_{1,2}^{PP}}]$. Combined with $\mathrm{Var}[\widehat{\delta_{1,2}^{H}}]=\frac1L\mathrm{Var}[Y_1-Y_2]$,
\begin{equation*}
\frac{
\mathrm{Var}\!\left[\widehat{\delta_{1,2}^{H}}\right]
-
\mathrm{Var}\!\left[\widehat{\delta_{1,2}^{PP}}\right]
}{
\mathrm{Var}\!\left[\widehat{\delta_{1,2}^{H}}\right]
}
= \frac{\frac{1}{L}\mathrm{Var}[Y_1-Y_2]-\frac{1}{L_{\mathrm{eq}}}\mathrm{Var}[Y_1-Y_2]}{\frac{1}{L}\mathrm{Var}[Y_1-Y_2]}=
\frac{L_{\mathrm{eq}}-L}{L_{\mathrm{eq}}},
\end{equation*}
which is the proportion of human judgment saved while achieving the same variance.
\end{proof}

\twocolumn

\newpage

\section{Algorithms of Hypothesis Tests}
\label{app:alg}

\begin{algorithm}[H]
\caption{Human-only Paired $Z$-Test}
\label{alg:human-only-pair-z-test}
\small
\begin{algorithmic}[1]
\State \textbf{Input:} human scores $\{Y_{1j},Y_{2j}\}_{j=1}^L$
\State \textbf{Output:} one-sided p-value for testing $H_0: \delta_{1,2} \le 0$ vs. $H_1: \delta_{1,2} > 0$
\State Compute
\[
\widehat{\delta_{1,2}^{H}} \gets \frac{1}{L}\sum_{j=1}^L ({\color{blue}{Y_{1j}-Y_{2j}}} )
\]

\State Compute \[
\widehat{\mathrm{Var}}[{\color{blue}{Y_1-Y_2}}]  \gets \frac{1}{L-1}\sum_{j=1}^L ({\color{blue}{Y_{1j}-Y_{2j}}}-\widehat{\delta_{1,2}^{H}})^2
\]

\State Compute \[
\widehat{\mathrm{Var}}[\widehat{\delta_{1,2}^{H}}]  \gets \frac{1}{L}\widehat{\mathrm{Var}}[{\color{blue}{Y_1-Y_2}}]
\]

\State Compute
\[
p_{1,2}^{H} \gets 1-\Phi \big (\frac{\widehat{\delta_{1,2}^{H}}}{\sqrt{\widehat{\mathrm{Var}}[\widehat{\delta_{1,2}^{H}}]}} \big)
\]
\State \Return $p_{1,2}^{H}$
\end{algorithmic}
\end{algorithm}

\begin{algorithm}[H]
\caption{Auto-only Paired $Z$-Test}
\label{alg:auto-only-pair-z-test}
\small
\begin{algorithmic}[1]
\State \textbf{Input:} metric scores $\{F(X_{1j}),F(X_{2j})\}_{j=1}^M$
\State \textbf{Output:} metric-based one-sided p-value used as a proxy for testing $H_0: \delta_{1,2} \le 0$ vs. $H_1: \delta_{1,2} > 0$
\State Compute
\[
\widehat{\delta_{1,2}^{A}} \gets \frac{1}{M}\sum_{j=1}^M ({\color{red}{F(X_{1j})-F(X_{2j})}})
\]

\State Compute 
\begin{equation*}
\resizebox{0.44\textwidth}{!}{$\widehat{\mathrm{Var}}[{\color{red}{F(X_1)-F(X_2)}}]  \gets \frac{1}{M-1}\sum_{j=1}^M ({\color{red}{F(X_{1j})-F(X_{2j})}}-\widehat{\delta_{1,2}^{A}})^2$}
\end{equation*}

\State Compute \[
\widehat{\mathrm{Var}}[\widehat{\delta_{1,2}^{A}}]  \gets \frac{1}{M} \widehat{\mathrm{Var}}[{\color{red}{F(X_1)-F(X_2)}}] 
\]

\State Compute
\[
p_{1,2}^{A} \gets 1-\Phi \big (\frac{\widehat{\delta_{1,2}^{A}}}{\sqrt{\widehat{\mathrm{Var}}[\widehat{\delta_{1,2}^{A}}]}} \big)
\]
\State \Return $p_{1,2}^{A}$
\end{algorithmic}
\end{algorithm}

\begin{algorithm}[H]
\caption{Prediction-powered Paired $Z$-Test}
\label{alg:prediction-power-pair-z-test}
\small
\begin{algorithmic}[1]
\State \textbf{Input:} system outputs and their human scores $\{X_{1j},X_{2j},Y_{1j},Y_{2j}\}_{j=1}^L$, system outputs without human scores $\{X_{1j},X_{2j}\}_{j=L+1}^{L+U}$, automatic metric $F$.
\State \textbf{Output:} one-sided p-value for testing $H_0: \delta_{1,2} \le 0$ vs. $H_1: \delta_{1,2} > 0$

\State Compute $M\gets L+U$

\State Set $\color{blue}{d_j \gets Y_{1j}-Y_{2j}}$ for $j=1,\ldots,L$ 

\State Set $\color{red}{f_j\gets F(X_{1j})-F(X_{2j})}$ for $j=1,\ldots,M$

\State Compute
\[
\bar d_L \gets \frac{1}{L}\sum_{j=1}^L d_j
\]

\State Compute \[
\widehat{\mathrm{Var}}[d]  \gets \frac{1}{L-1}\sum_{j=1}^L (d_j-\bar d_L)^2
\]

\State Compute
\begin{align*}
\bar f_L &\gets \frac{1}{L}\sum_{j=1}^{L} f_j, &
\bar f_U &\gets \frac{1}{U}\sum_{j=L+1}^{L+U} f_j, \\
\bar f_M &\gets \frac{1}{M}\sum_{j=1}^{M} f_j
\end{align*}

\State Compute \[
\widehat{\mathrm{Var}}[f]  \gets \frac{1}{M-1}\sum_{j=1}^{M} (f_j-\bar f_M)^2
\]

\State Compute \[
\widehat{\mathrm{Cov}}[d,f] 
\gets \frac{1}{L-1}\sum_{j=1}^L (d_j-\bar d_L)(f_j-\bar f_L)
\]

\State Compute \[
\hat\lambda \gets \frac{\widehat{\mathrm{Cov}}[d,f]}{(1+\frac{L}{U})\widehat{\mathrm{Var}}[f]}
\]

\State Compute \[
\widehat{\delta_{1,2}^{PP}} \gets
\hat\lambda \bar f_U 
 + \frac{1}{L}\sum_{j=1}^L (d_j - \hat\lambda f_j)
\]

\State Compute \[
\widehat{\mathrm{Var}}[\widehat{\delta_{1,2}^{PP}}]  \gets \frac{1}{L}\widehat{\mathrm{Var}}[d] -
\frac{
U\left(
\widehat{\mathrm{Cov}}\big[d,f\big]\right)^2
}{
L(L+U)
\widehat{\mathrm{Var}}\big[f\big]
}
\]

\State Compute
\[
p_{1,2}^{PP} \gets 1-\Phi \big (\frac{\widehat{\delta_{1,2}^{PP}}}{\sqrt{\widehat{\mathrm{Var}}[\widehat{\delta_{1,2}^{PP}}]}} \big)
\]
\State \Return $p_{1,2}^{PP}$
\end{algorithmic}
\end{algorithm}

\begin{algorithm}[H]
\caption{Human-only Paired Permutation Test}
\label{alg:human-only-pair-perm-test}
\small
\begin{algorithmic}[1]
\State \textbf{Input:} human scores $\{Y_{1j},Y_{2j}\}_{j=1}^L$, number of permutations $B$. 
\State \textbf{Assert}: $Y_1-Y_2$ is symmetric about $\delta_{1,2}$.
\State \textbf{Output:} one-sided p-value for testing $H_0: \delta_{1,2} \le 0$  vs. $H_1: \delta_{1,2} > 0$ 
\State Compute
\[
\widehat{\delta_{1,2}^{H}} \gets \frac{1}{L}\sum_{j=1}^L ({\color{blue}{Y_{1j}-Y_{2j}}})
\]
\For{$b = 1, \ldots, B$}
    \State Draw $\varepsilon_j^{(b)} \overset{\text{iid}}{\sim} \mathrm{Uniform}\{-1,+1\}$ for $j=1,\ldots,L$
    \State Compute
    \[
    \widehat{\delta_{1,2}^{H,(b)}} \gets \frac{1}{L}\sum_{j=1}^L \varepsilon_j^{(b)} ({\color{blue}{Y_{1j}-Y_{2j}}})
    \]
\EndFor
\State Compute
\[
p_{1,2}^{H} \gets \frac{1+\sum_{b=1}^B \mathbb{1}\!\left\{\widehat{\delta_{1,2}^{H,(b)}} \ge \widehat{\delta_{1,2}^{H}}\right\}}{B+1}
\]
\State \Return $p_{1,2}^{H}$
\end{algorithmic}
\end{algorithm}

\begin{algorithm}[H]
\caption{Auto-only Paired Permutation Test}
\label{alg:Auto-only-pair-perm-test}
\small
\begin{algorithmic}[1]
\State \textbf{Input:} metric scores $\{F(X_{1j}),F(X_{2j})\}_{j=1}^M$, number of permutations $B$
\State \textbf{Output:} one-sided p-value for testing $H_0: \delta_{1,2} \le 0$ vs. $H_1: \delta_{1,2} > 0$
\State Compute
\[
\widehat{\delta_{1,2}^{A}} \gets \frac{1}{M}\sum_{j=1}^M ({\color{red}{F(X_{1j})-F(X_{2j})}})
\]
\For{$b = 1, \ldots, B$}
    \State Draw $\varepsilon_j^{(b)} \overset{\text{iid}}{\sim} \mathrm{Uniform}\{-1,+1\}$ for $j=1,\ldots,M$
    \State Compute
    \[
    \widehat{\delta_{1,2}^{A,(b)}} \gets \frac{1}{M}\sum_{j=1}^M \varepsilon_j^{(b)} ({\color{red}{F(X_{1j})-F(X_{2j})}})
    \]
\EndFor
\State Compute
\[
p_{1,2}^{A} \gets \frac{1+\sum_{b=1}^B \mathbb{1}\!\left\{\widehat{\delta_{1,2}^{A,(b)}} \ge \widehat{\delta_{1,2}^{A}}\right\}}{B+1}
\]
\State \Return $p_{1,2}^{A}$
\end{algorithmic}
\end{algorithm}

\begin{algorithm}[H]
\caption{\small Prediction-powered Paired Permutation Test}
\label{alg:prediction-power-pair-perm-test}
\small
\begin{algorithmic}[1]
\State \textbf{Input:} system outputs and their human scores $\{X_{1j},X_{2j},Y_{1j},Y_{2j}\}_{j=1}^L$, system outputs without human scores $\{X_{1j},X_{2j}\}_{j=L+1}^{L+U}$, automatic metric $F$, number of permutations $B$
\State \textbf{Assert}: The joint distribution of
$(Y_1-Y_2,\; F(X_1)-F(X_2))$ is centrally symmetric about
$(\delta_{1,2},\; \mathbb{E}[F(X_1)-F(X_2)])$.
\State \textbf{Output:} one-sided p-value for testing $H_0: \delta_{1,2} \le 0$ vs. $H_1: \delta_{1,2} > 0$
\State Compute $M \gets L+U$
\State Set $\color{blue}{d_j \gets Y_{1j}-Y_{2j}}$ for $j=1,\ldots,L$ 

\State Set $\color{red}{f_j\gets F(X_{1j})-F(X_{2j})}$ for $j=1,\ldots,M$
\State Center the metric score differences: $\bar f_M \gets \frac{1}{M}\sum_{j=1}^{M} f_j$, then set $f_j \gets f_j-\bar f_M$ for $j=1,\dots,M$
\State Compute $\widehat{\delta_{1,2}^{PP}}$ from $\{d_j\}_{j=1}^L,\{f_j\}_{j=1}^M$ via steps~6--11 of Alg.~\ref{alg:prediction-power-pair-z-test}
\For{$b = 1, \ldots, B$}
    \State Draw $\varepsilon_j^{(b)} \overset{\text{iid}}{\sim} \mathrm{Uniform}\{-1,+1\}$ for $j=1,\ldots,M$
    \State Set $d_j^{(b)} \gets \varepsilon_j^{(b)} d_j$ for $j=1,\ldots,L$
    \State Set $f_j^{(b)} \gets \varepsilon_j^{(b)} f_j$ for $j=1,\ldots,M$
    \State Compute $\widehat{\delta_{1,2}^{PP,(b)}}$ from $\{d_j^{(b)}\}_{j=1}^L,\{f_j^{(b)}\}_{j=1}^M$ via steps~6--11 of Alg.~\ref{alg:prediction-power-pair-z-test} (re-estimating $\hat\lambda$)
\EndFor
\State Compute
\[
p_{1,2}^{PP} \gets \frac{1+\sum_{b=1}^B \mathbb{1}\!\left\{\widehat{\delta_{1,2}^{PP,(b)}} \ge \widehat{\delta_{1,2}^{PP}}\right\}}{B+1}
\]
\State \Return $p_{1,2}^{PP}$
\end{algorithmic}
\end{algorithm}

\section{Simulation Studies of Type I Error}
\label{app:simulation_type_i}

\S\ref{sec:exp-nonparam} shows that the paired $Z$-test achieves higher empirical power than the paired permutation test in both the human-only and prediction-powered settings. This raises a natural question about calibration: how do the two tests behave under the null hypothesis, and do they control the Type I error rate at the nominal level?

Answering this question requires data generated under the null hypothesis. For the null hypothesis $\delta_{1,2}\le 0$, we evaluate the test at the boundary $\delta_{1,2}=0$, where the Type I error rate is maximized. The paired permutation test additionally assumes that the distribution of $Y_1-Y_2$ is symmetric about $\delta_{1,2}$. Together, these conditions imply that $Y_1-Y_2$ must follow a distribution that is symmetric about zero.

Such null conditions are difficult to obtain from real-world data, as any two systems are likely to differ to some extent and therefore may not satisfy $\delta_{1,2}=0$. We therefore conduct a simulation study in which data are drawn from a distribution that satisfies the null assumptions of both the paired $Z$-test and the paired permutation test.

\subsection{Heavy-tailed null simulations}

To examine the error in the normal approximation of $Z$-test, we take the null distribution to be a bivariate Student's $t$ with a mean of zero. For each Monte Carlo trial, we independently draw $U+L$ observations,
\[
(d_j,f_j) \sim t_\nu(0,\Sigma),
\qquad j=1,\ldots,U+L,
\]
where
\[
\Sigma =
\begin{pmatrix}
1 & \rho \\
\rho & 1
\end{pmatrix},
\]
$\rho \in \{0.3,0.7\}$, and $\nu \in \{3,10,\infty\}$. Here, $d_j$ denotes the human score difference between the two systems, and $f_j$ denotes the corresponding metric score difference. A larger $\rho$ indicates a stronger correlation between the metric and human score differences. As $\nu$ increases, the bivariate $t$ distribution approaches a bivariate normal distribution. $\nu=\infty$ corresponds to
\[
(d_j,f_j)\sim \mathcal{N}(0,\Sigma).
\]
For finite $\nu$, we adjust the scale matrix of the $t$ distribution so that the covariance matrix of $(d_j,f_j)$ is exactly $\Sigma$.

Among the $U+L$ sampled pairs, $L$ are treated as labeled examples, for which both $d_j$ and $f_j$ are retained. The remaining $U$ are treated as unlabeled examples: their metric score differences $f_j$ are retained, whereas their human score differences $d_j$ are removed. Thus, the labeled sample consists of $\{(d_j,f_j)\}_{j=1}^{L}$, and the unlabeled sample consists of $\{f_j\}_{j=L+1}^{L+U}$. Under the null hypothesis, $\mathbb{E}[d]=0$, $\mathbb{E}[f]=0$, $\mathrm{Var}(d)=1$, $\mathrm{Var}(f)=1$, and $\mathrm{Cov}(d,f)=\rho$.

Following \S\ref{sec:exp-nonparam}, we fix $U=800$ unlabeled examples and vary the number of labeled examples over $L \in \{20,40,60,80,100,120,140,160,180,200\}$.
For each $(\rho,\nu,L)$ configuration, we run $10000$ Monte Carlo trials. Within each trial, the permutation tests use $B=1000$ random sign-flip permutations. All tests are one-sided with significance level $\alpha=0.05$.

\paragraph{The Oracle Z-Tests.}
In the human-only setting, the original paired $Z$-test uses the sample variance $\widehat{\mathrm{Var}}[d]$. In the prediction-powered setting, the original paired $Z$-test also estimates $\widehat{\mathrm{Var}}[d]$, $\widehat{\mathrm{Var}}[f]$, and $\widehat{\mathrm{Cov}}[d,f]$ from the sampled data.

To further examine the potential finite-sample instability in
the plug-in estimates, the oracle $Z$-test instead plugs in the true nuisance values $\mathrm{Var}[d]$, $\mathrm{Var}[f]$, and $\mathrm{Cov}[d,f]$ in both the human-only setting and the prediction-powered setting.

\paragraph{Results.}
In the human-only setting (Figure~\ref{fig:type_i_error_human_only}), the paired permutation test stays very close to the nominal level across all configurations, while the original paired $Z$-test is slightly anti-conservative, with a maximum Type I error of $0.059$. The Type I error of oracle paired $Z$-test is lower than the nominal level at $\rho=0.7, \nu=3$, and this deviation reflects the error in the normal approximation.

In the prediction-powered setting (Figure~\ref{fig:type_i_error_ppi} and Table~\ref{tab:type_i_error_ppi_l20}), the paired permutation test still stays very close to the nominal level across all configurations, while the original prediction-powered paired $Z$-test can be substantially anti-conservative when $L$ is small: its largest Type I error is $0.123$ at $\rho=0.7$, $\nu=3$, and $L=20$. Even in the normal setting $\nu=\infty$, it remains inflated for small $L$, reaching $0.104$ at $\rho=0.7$ and $L=20$. The type I error of oracle paired $Z$-test is still lower than the nominal level at $\rho=0.7, \nu=3$, and this deviation reflects the error in the normal approximation as in the human-only setting.

\begin{table}[]
\centering
\small
\begin{tabular}{ccccc}
\toprule
$\rho$ & $\nu$ & PPI Z & PPI Z (oracle) & PPI permutation \\
\midrule
$0.3$ & $3$ & 0.088 & 0.043 & 0.051 \\
$0.3$ & $10$ & 0.069 & 0.049 & 0.048 \\
$0.3$ & $\infty$ & 0.072 & 0.052 & 0.049 \\
$0.7$ & $3$ & 0.123 & 0.041 & 0.049 \\
$0.7$ & $10$ & 0.114 & 0.052 & 0.050 \\
$0.7$ & $\infty$ & 0.104 & 0.048 & 0.048 \\
\bottomrule
\end{tabular}
\caption{Empirical Type I error for prediction-powered tests at the smallest labeled sample size, $L=20$.}
\label{tab:type_i_error_ppi_l20}
\end{table}

Taken together, these results indicate that the Z-test's miscalibration is not driven merely by the inaccuracy of the normal approximation. \textbf{It also arises from the finite-sample instability in the plug-in variance and covariance estimators used to construct the test statistic.} This source of error appears to be more consequential in the prediction-powered setting than in the human-only setting. In the human-only setting, only $\widehat{\mathrm{Var}}[d]$ must be estimated, so the resulting distortion is relatively small. In the prediction-powered test, $\widehat{\mathrm{Var}}[f]$ and $\widehat{\mathrm{Cov}}[d,f]$ must also be estimated, making the test substantially more sensitive to small labeled samples. The permutation test seems to avoid this problem by recomputing the full statistic under each sign flip.

\begin{figure*}[]
\centering
\includegraphics[width=\linewidth]{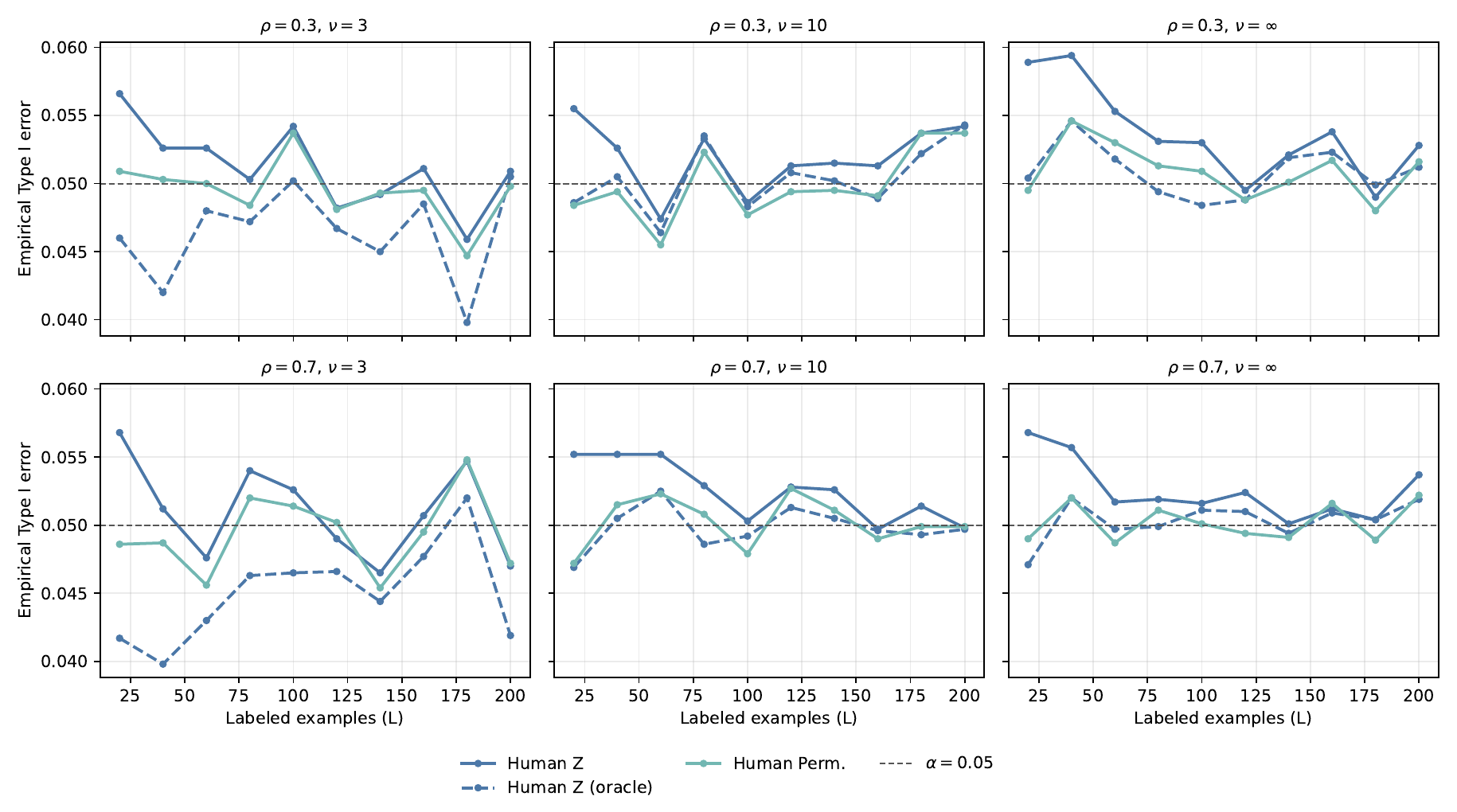}
\caption{Human-only Type I error under the null simulation. The dashed blue curve uses the true variance $\mathrm{Var}[d]=1$. The oracle and permutation tests stay close to the nominal level $\alpha=0.05$, while the original paired $Z$-test is mildly anti-conservative for small $L$.}
\label{fig:type_i_error_human_only}
\end{figure*}

\begin{figure*}[]
\centering
\includegraphics[width=\linewidth]{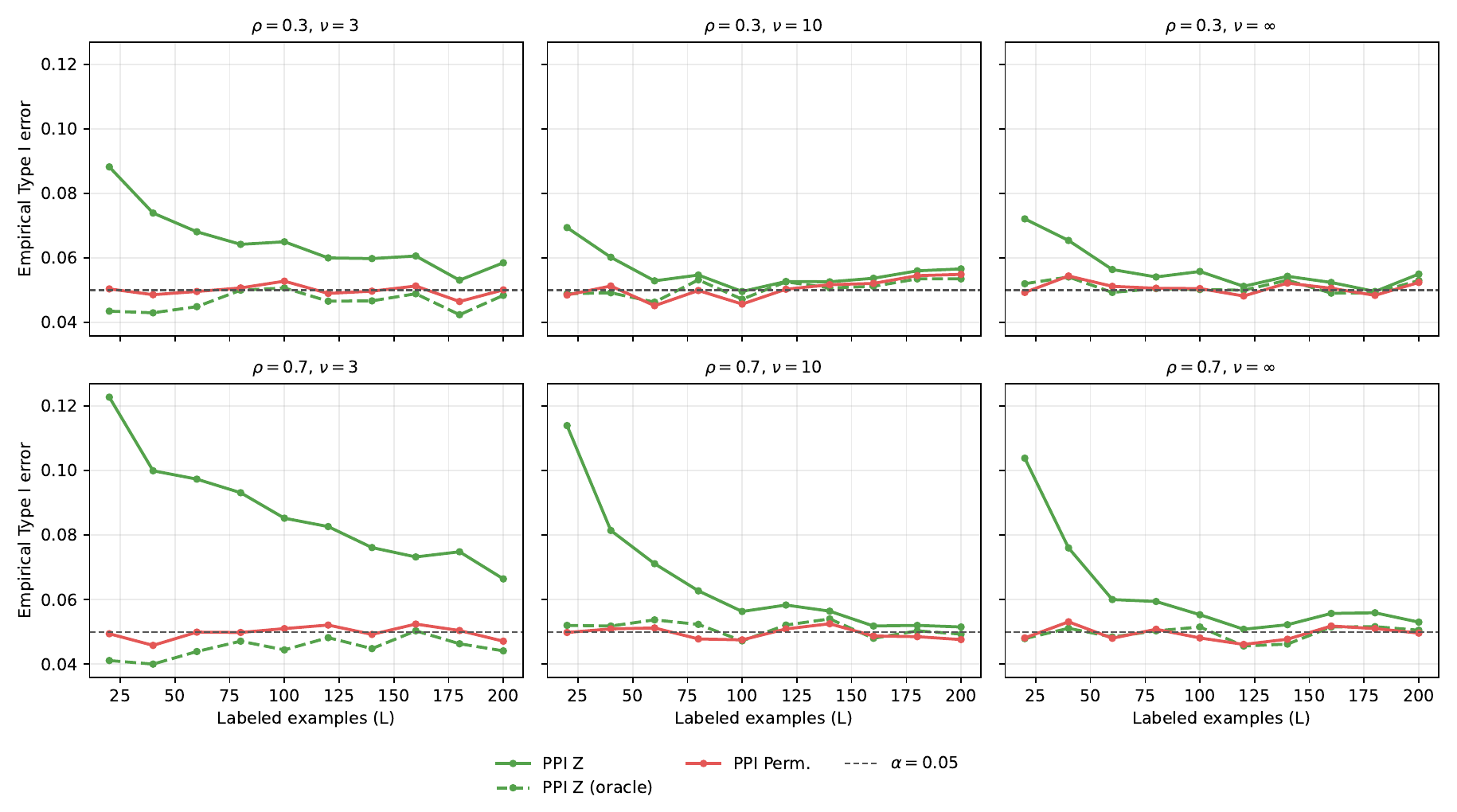}
\caption{Prediction-powered Type I error under the null simulation. The dashed green curve uses the true values of $\mathrm{Var}[d]$, $\mathrm{Var}[f]$, and $\mathrm{Cov}[d,f]$. The original prediction-powered $Z$-test can be substantially anti-conservative for small $L$, especially when $\rho=0.7$.}
\label{fig:type_i_error_ppi}
\end{figure*}

\begin{figure*}[]
\centering
\includegraphics[width=\linewidth]{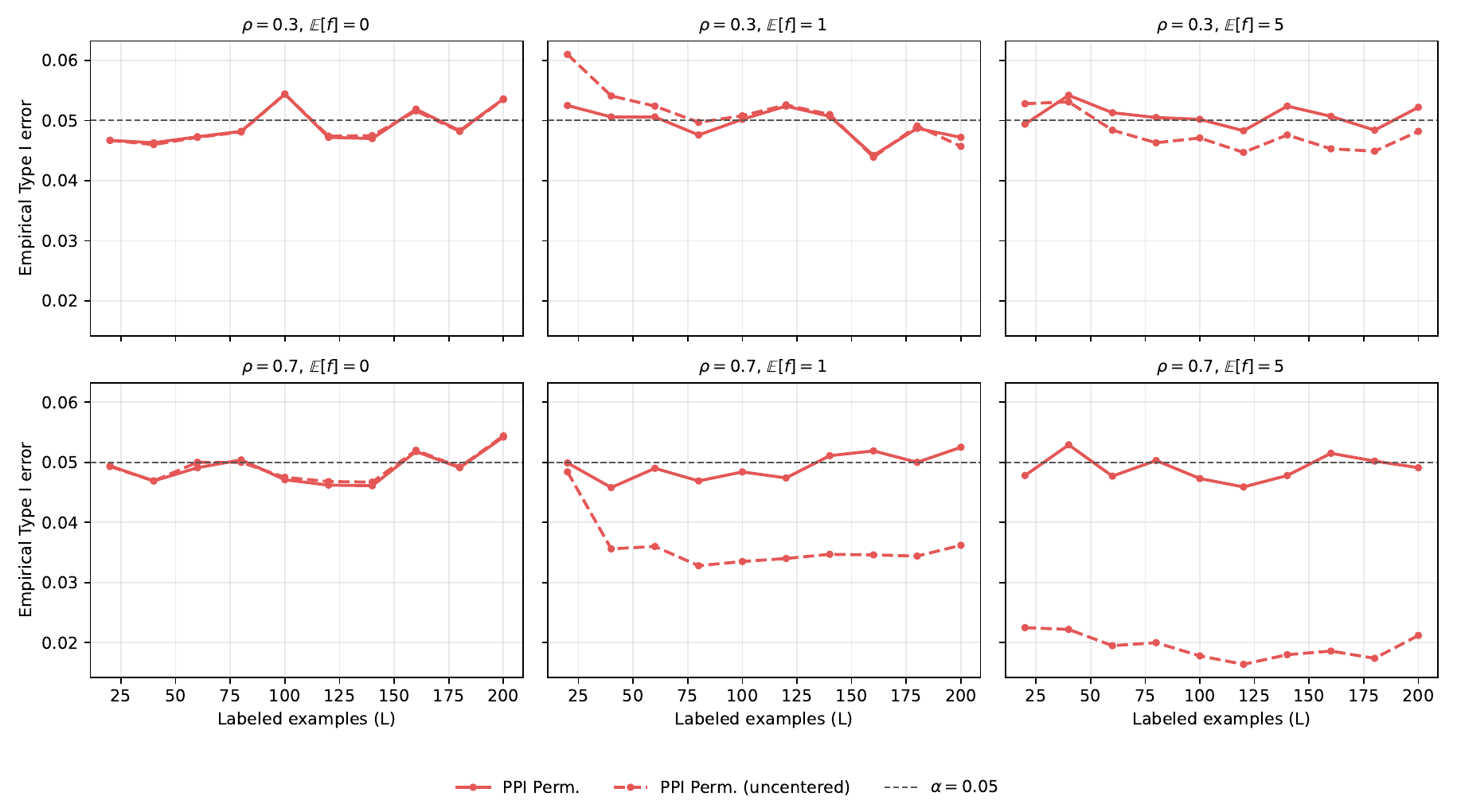}
\caption{Effect of centering metric differences in the prediction-powered paired permutation test. The simulations use a bivariate normal null with $\mathbb{E}[d]=0$ and varying $\mathbb{E}[f]$.}
\label{fig:type_i_error_metric_mean_ppi_perm_centering}
\end{figure*}

\subsection{Metric-shifted null simulations}
The preceding simulations assume $\mathbb{E}[d]=\mathbb{E}[f]=0.$ In practice, when the human score differences are centered at zero, the metric score differences need not be centered at zero  because automatic metrics are biased. Consequently, $\mathbb{E}[f]\neq 0$ is the typical case under $\mathbb{E}[d]=0$. This seems to be in tension with the sign-flip construction in the prediction-powered paired permutation test. Because the test applies the same sign flip $\varepsilon_j$ to $d_j$ and $f_j$, it requires that both distributions of $d$ and $f$ are symmetric about zero, which leads to $\mathbb{E}[f]=0$ in addition to $\mathbb{E}[d]=0$, a condition the metric has no reason to satisfy. Fortunately, centering the metric score differences in Step 7 of Alg.~\ref{alg:prediction-power-pair-perm-test} removes this dependence, rendering $\mathbb{E}[f]=0$ unnecessary.

To isolate this effect, we run an additional null simulation with $(d_j,f_j)$ drawn from a bivariate normal distribution satisfying $\mathbb{E}[d]=0$, $\mathrm{Var}(d)=1$, $\mathrm{Var}(f)=1$, and $\mathrm{Cov}(d,f)=\rho$, while varying $\mathbb{E}[f]\in \{0, 1,5\}$. All other settings remain unchanged. 

Figure~\ref{fig:type_i_error_metric_mean_ppi_perm_centering} compares the prediction-powered paired permutation test with an uncentered variant obtained by deleting the Step 7 of Alg.~\ref{alg:prediction-power-pair-perm-test}. As shown in the figure, the uncentered permutation test can become overly conservative when the metric differences have a nonzero mean, especially when the correlation between the human score differences and metric score differences is high. Centering the metric differences keeps the Type I error close to the nominal level $\alpha=0.05$ across the simulated mean shifts.

\begin{figure*}[t]
\centering
\includegraphics[width=\textwidth]{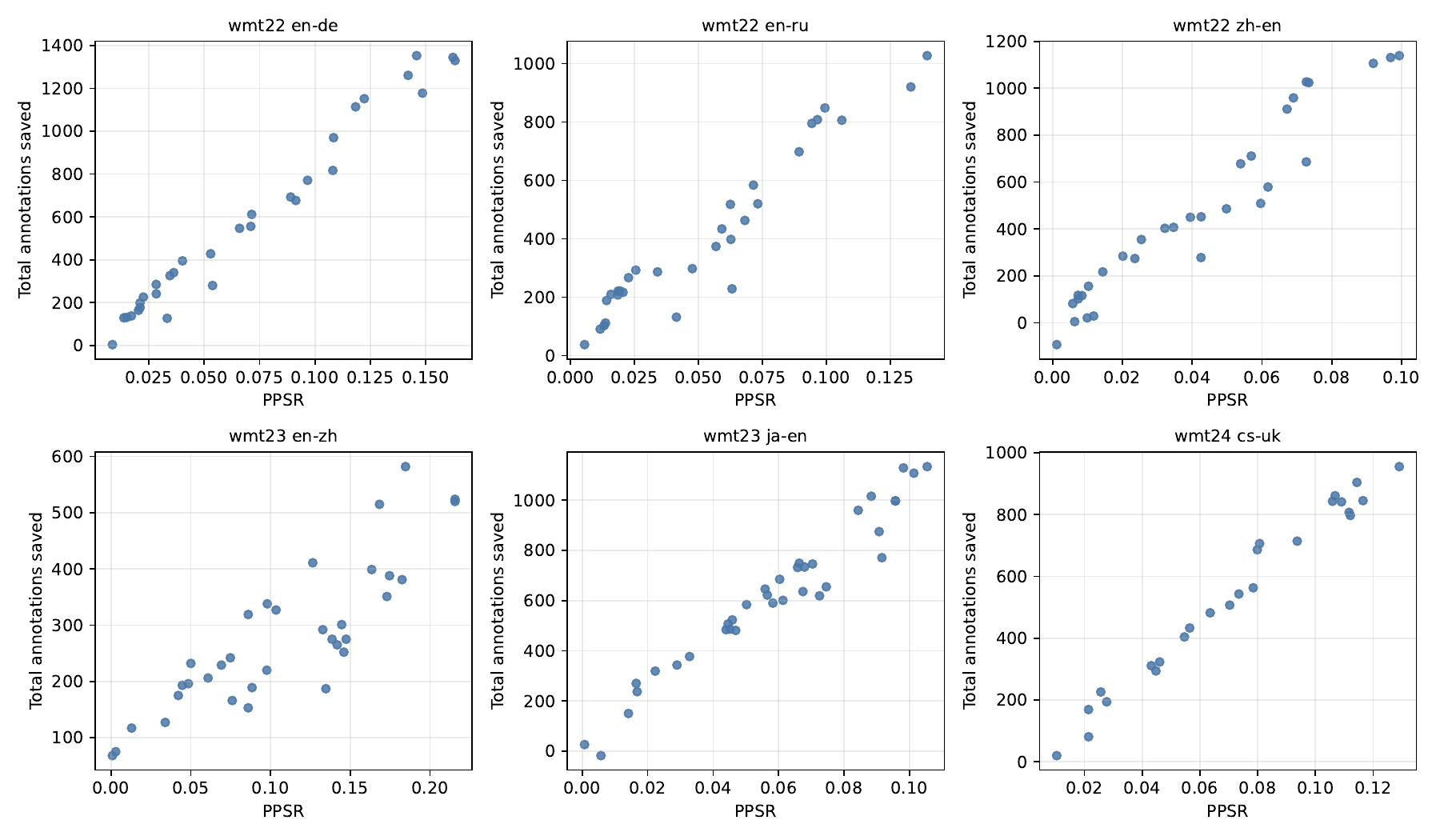}
\caption{Relationship between PPSR and the total number of human annotations saved.}
\label{fig:annotation_saving_ppsr_vs_total_saved}
\end{figure*}

\section{Actual Annotation Savings and PPSR}
\label{app:annotation_saving_vs_ppsr}

PPSR is derived as a theoretical saving ratio in \S\ref{sec:ppsr}. To check whether it reflects actual annotation savings, we estimate the number of human annotations required by human-only evaluation and prediction-powered evaluation via resampling. Specifically, for each dataset and each system pair, we treat the full-dataset human mean difference as the population effect size. We then determine the minimum labeled sample size needed to reach 80\% empirical power for the human-only paired $Z$-test, denoted $L_h$, and for the prediction-powered paired $Z$-test using metric $F_k$, denoted $L_k$. The annotations saved by $F_k$ on that system pair are defined as $L_h-L_k$. We exclude system pairs for which the human-only test does not reach the target power and sum $L_h-L_k$ over the remaining system pairs to obtain the total annotations saved by each metric. For consistency, PPSR is also recomputed for each metric using this same set of remaining system pairs, rather than all original system pairs.

\begin{table}[]
\centering
\small
\begin{tabular}{lrrr}
\toprule
Dataset & \#Metrics & Pearson $r$ & Spearman $\rho$ \\
\midrule
WMT22 en-de & 31 & 0.987 & 0.970 \\
WMT22 en-ru & 30 & 0.957 & 0.956 \\
WMT22 zh-en & 31 & 0.966 & 0.972 \\
WMT23 en-zh & 34 & 0.859 & 0.856 \\
WMT23 ja-en & 35 & 0.974 & 0.964 \\
WMT24 cs-uk & 25 & 0.989 & 0.978 \\
\bottomrule
\end{tabular}
\caption{Correlation between total annotations saved and PPSR across automatic metrics, computed separately for each dataset.}
\label{tab:annotation_saving_ppsr_corr}
\end{table}

Figure~\ref{fig:annotation_saving_ppsr_vs_total_saved} compares total annotations saved with PPSR across automatic metrics. Each point represents one automatic metric. Table~\ref{tab:annotation_saving_ppsr_corr} reports the corresponding Pearson and Spearman correlations. The correlations are high across all six datasets, indicating that PPSR is not only theoretically well-grounded, but also tracks the empirical annotation savings produced by prediction-powered evaluation using different automatic metrics. 

The relationship is not exact because actual annotation savings depend on factors beyond the relative variance reduction summarized by PPSR. In particular, "easy" system pairs offer limited room for savings because the human-only test already reaches the target power with few labels, whereas "harder" pairs benefit substantially more from a strong automatic metric. Furthermore, the choice of target power plays a role: shifting the $80\%$ threshold changes the required sample sizes and therefore the total savings.

\begin{table*}[t]
\centering
\small
\setlength{\tabcolsep}{4pt}
\begin{tabular}{lrrrrrr}
\toprule
Dataset & Max & Group-by-Item & No-Grouping & Group-by-System & PDP & PPSR \\
\midrule
WMT22 en-de & 465 & \textbf{421} & 365 & 358 & 403 & 412 \\
WMT22 en-ru & 435 & \textbf{388} & 330 & 319 & 372 & 373 \\
WMT22 zh-en & 465 & \textbf{431} & 382 & 379 & 418 & 422 \\
WMT23 en-zh & 561 & \textbf{508} & 494 & 421 & 507 & 507 \\
WMT23 ja-en & 595 & \textbf{506} & 483 & 468 & 483 & 500 \\
WMT24 cs-uk & 300 & \textbf{256} & 225 & 229 & 237 & 247 \\
\bottomrule
\end{tabular}
\caption{Discriminative power of segment-level meta-metrics and PPSR. Max gives the largest possible value for each criterion. Bold indicates the highest observed value in each row.}
\label{tab:segment_meta_metric_discriminative_power}
\vspace{-5mm}
\end{table*}

\begin{figure*}[t]
\centering
\includegraphics[width=\textwidth]{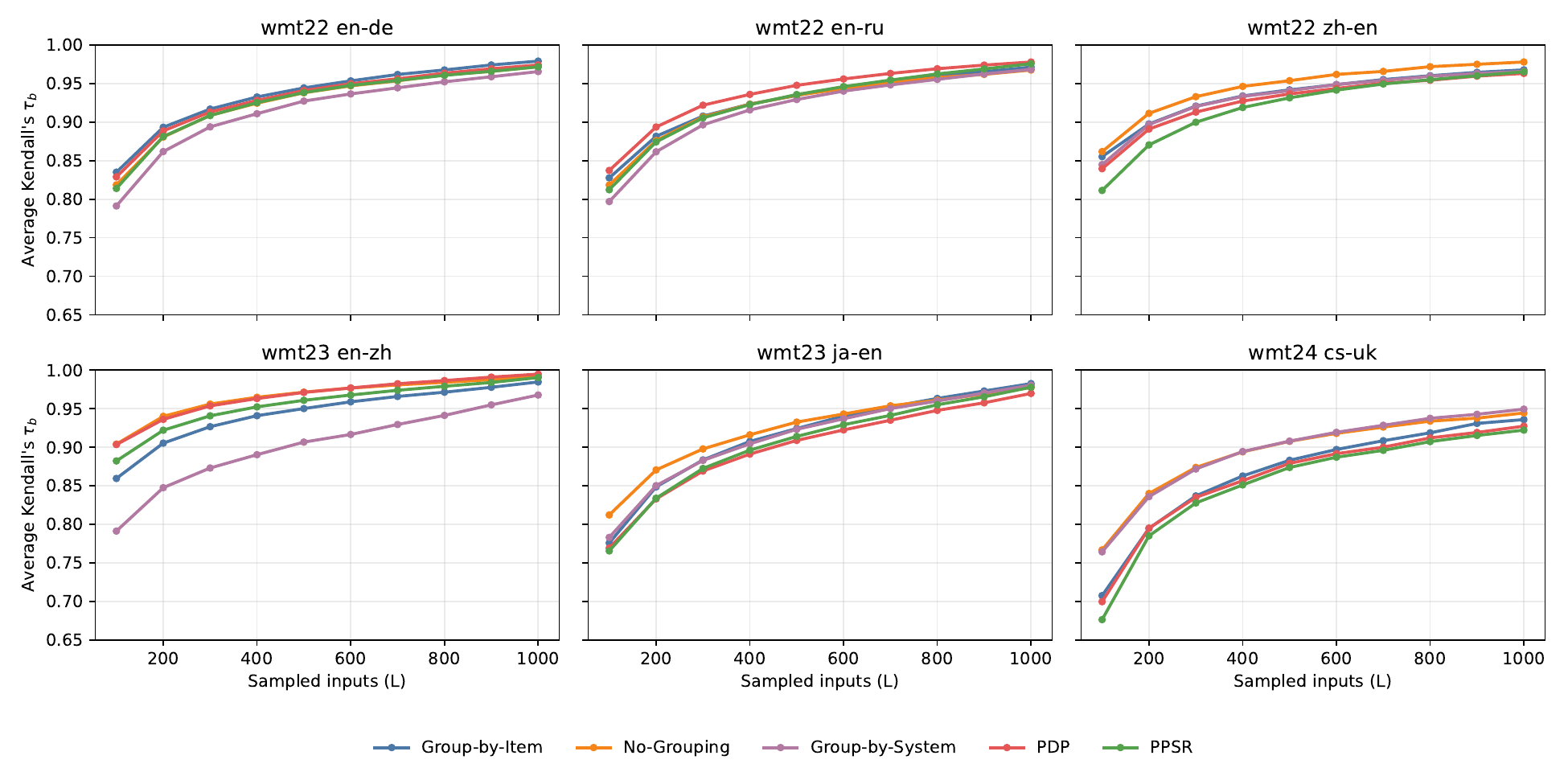}
\caption{Ranking stability of segment-level meta-metrics and PPSR. Higher values indicate that the metric ranking computed from a subsample is closer to the full-dataset ranking.}
\label{fig:ranking_stability_segment_all_datasets}
\vspace{-5mm}
\end{figure*}

\section{Comparison with Segment-level Meta-metrics}
\label{app:comparison_segment_metametric}

\subsection{Conceptual Comparison}

Following \citep{DBLP:conf/emnlp/DeutschFF23}, segment-level meta-metrics can be categorized according to three primary grouping strategies:
\begin{enumerate}
\item \textbf{Group-by-Item (Input level):}
$$\frac{1}{L}\sum_{j=1}^{L} r\!\left(\left\{\left(Y_{ij},\; F_k(X_{ij})\right)\right\}_{i=1}^{N}\right)$$
\item \textbf{No-Grouping (Global level):}
$$r\!\left(\left\{\left(Y_{ij},\; F_k(X_{ij})\right)\right\}_{i=1,j=1}^{N,L}\right)$$
\item \textbf{Group-by-System:}
$$\frac{1}{N}\sum_{i=1}^{N} r\!\left(\left\{\left(Y_{ij},\; F_k(X_{ij})\right)\right\}_{j=1}^{L}\right)$$
\end{enumerate}
To ensure consistency with PPSR, we uniformly adopt the sample Pearson correlation coefficient $r$ as the correlation metric. PPSR can be interpreted as following a \textit{Group-by-System-Pair} scheme if the square of the Pearson correlation is ignored:
\begin{equation*}
\begin{split}
\binom{N}{2}^{-1}\sum_{p=1}^{N-1}\sum_{q=p+1}^{N} r\!\left(\left\{\left(Y_{pj}-Y_{qj},\; F_k(X_{pj})-F_k(X_{qj})\right)\right\}_{j=1}^{L}\right).
\end{split}
\end{equation*}

Although PDP \citep{DBLP:conf/emnlp/DiIanniD25} also operates on score differences between systems on the same segment, its grouping strategy differs from that of PPSR. Specifically, PDP corresponds to a \textit{No-Grouping} scheme:
\begin{equation*}
\begin{split}
r\!\left(\left\{\left( Y_{pj}-Y_{qj},\; F_k(X_{pj})-F_k(X_{qj})\right)\right\}_{p=1,q=1,j=1}^{N,N,L}\right)
\end{split}
\end{equation*}
Furthermore, some readers may argue that PPSR is not a system-level meta-metric because it uses the score of each segment. We disagree with this restriction. Under that reasoning, SPA would also not be a system-level meta-metric, because it uses the score of each segment when calculating the $p$-values of system comparisons from either human or automatic scores. Nevertheless, SPA served as an official system-level meta-metric in both the 2024 \citep{DBLP:conf/wmt/FreitagMDLAR0BK24} and 2025 \citep{lavie-etal-2025-findings} WMT Metrics Shared Tasks. Ultimately, whether a meta-metric is system-level should depend on its meaning and intended use case.

\subsection{Discriminative Power}

To compare the discriminative power of segment-level meta-metrics and PPSR, we use the number of distinct values and the number of significant comparisons as in \S\ref{sec:exp-ppsr-disc}. Because PPSR and all segment-level meta-metrics attain the maximum possible number of distinct values, we focus on the number of significant comparisons. Table~\ref{tab:segment_meta_metric_discriminative_power} shows that \textit{Group-by-Item} (input level) is the most discriminative meta-metric on most datasets. PPSR is usually close to \textit{Group-by-Item}. This suggests that PPSR remains competitive even when compared against segment-level meta-metrics.

\subsection{Ranking Stability}

We also compare ranking stability under the same resampling protocol used in \S\ref{sec:exp-ppsr-stability}. Figure~\ref{fig:ranking_stability_segment_all_datasets} shows that the segment-level meta-metrics are often slightly more stable than PPSR, especially No-Grouping and PDP. However, PPSR remains competitive across datasets.

\section{Additional Figures and Tables}
\label{app:addition_alg_fig}

\begin{figure*}[]
\centering
\includegraphics[width=\textwidth]{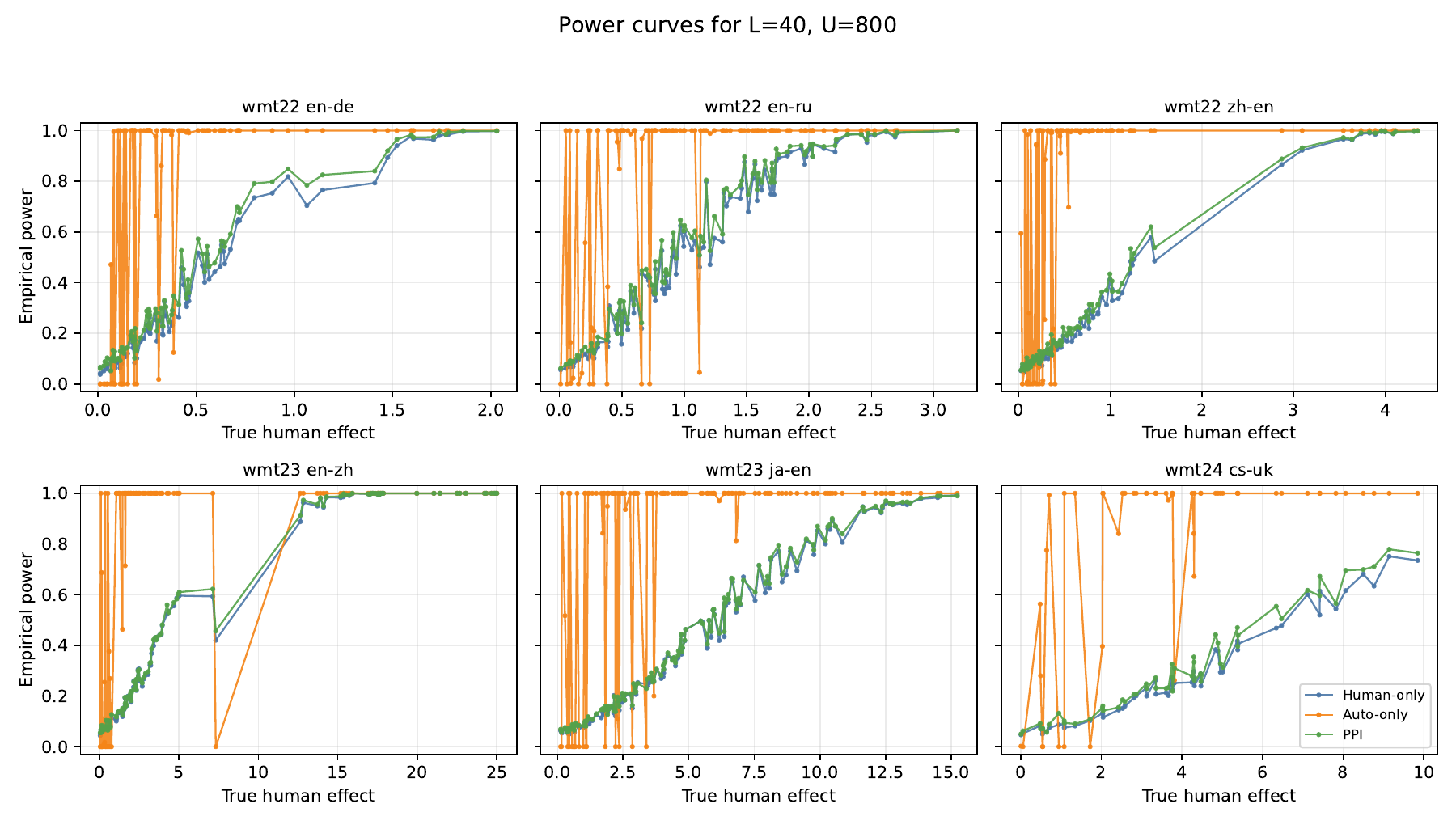}
\caption{Empirical power curves for $L=40$ and $U=800$. Each point is one ordered system pair, and the x-axis is the full-population human-score difference. MetricX is used as the automatic metric.} 
\label{fig:basic_power_l40}
\end{figure*}

\begin{figure*}[]
\centering
\includegraphics[width=\textwidth]{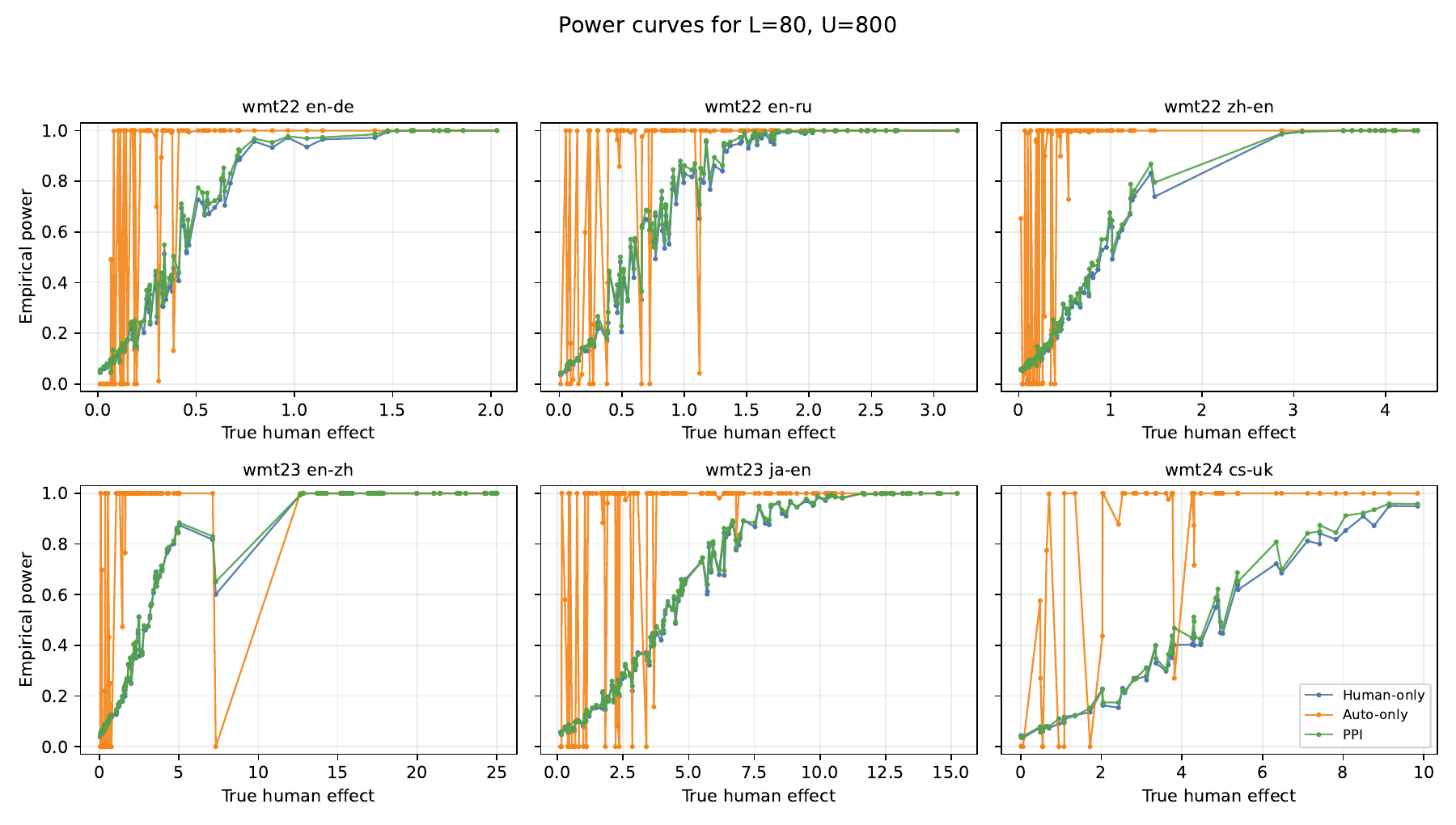}
\caption{Empirical power curves for $L=80$ and $U=800$. Increasing the labeled sample size improves both human-only and prediction-powered tests. MetricX is used as the automatic metric.}
\label{fig:basic_power_l80}
\end{figure*}

\begin{figure*}[]
\centering
\includegraphics[width=\textwidth]{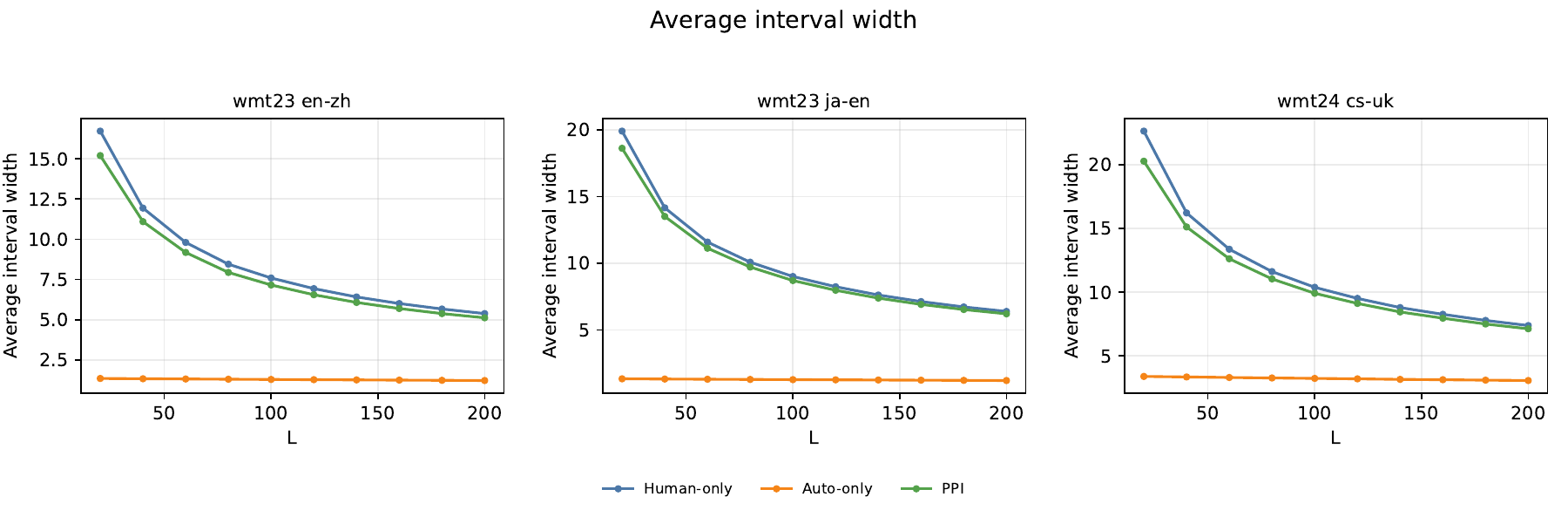}
\caption{Average two-sided 95\% confidence-interval width with $U=800$. GEMBA is used as the automatic metric. Results are averaged across system pairs.}
\label{fig:gemba_basic_interval_width}
\end{figure*}

\begin{figure*}[]
\centering
\includegraphics[width=\textwidth]{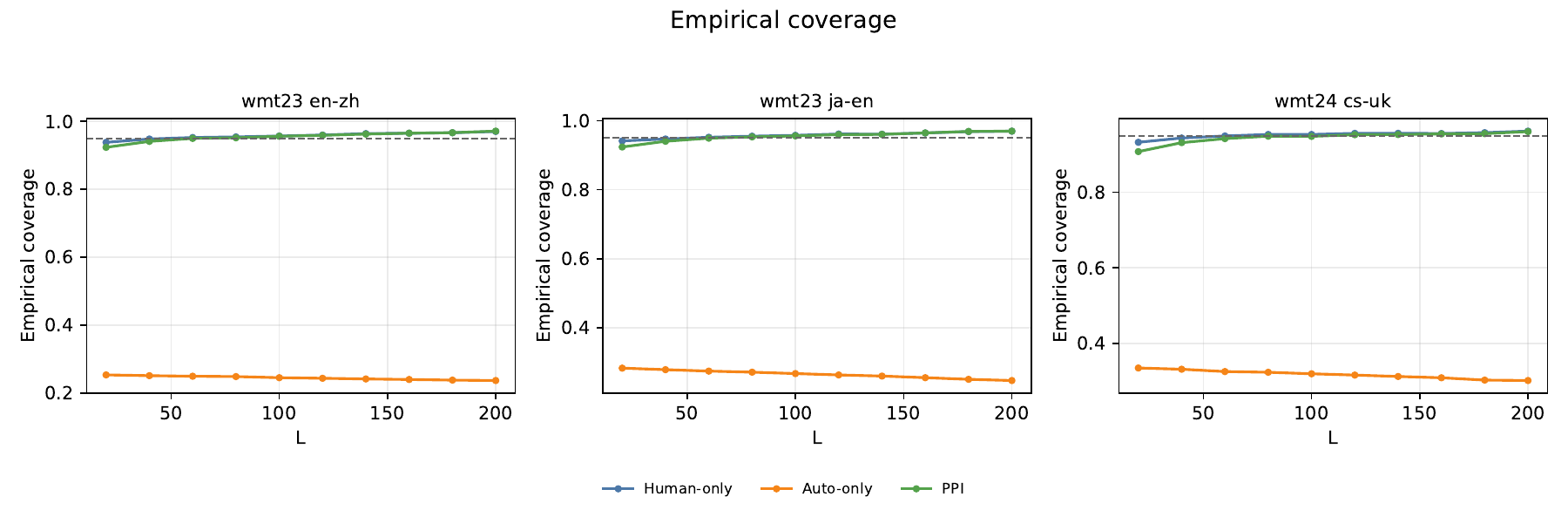}
\caption{Empirical coverage of the two-sided 95\% confidence intervals. GEMBA is used as the automatic metric. The dashed line marks the nominal 95\% level.}
\label{fig:gemba_basic_interval_coverage}
\end{figure*}

\begin{figure*}[]
\centering
\includegraphics[width=\textwidth]{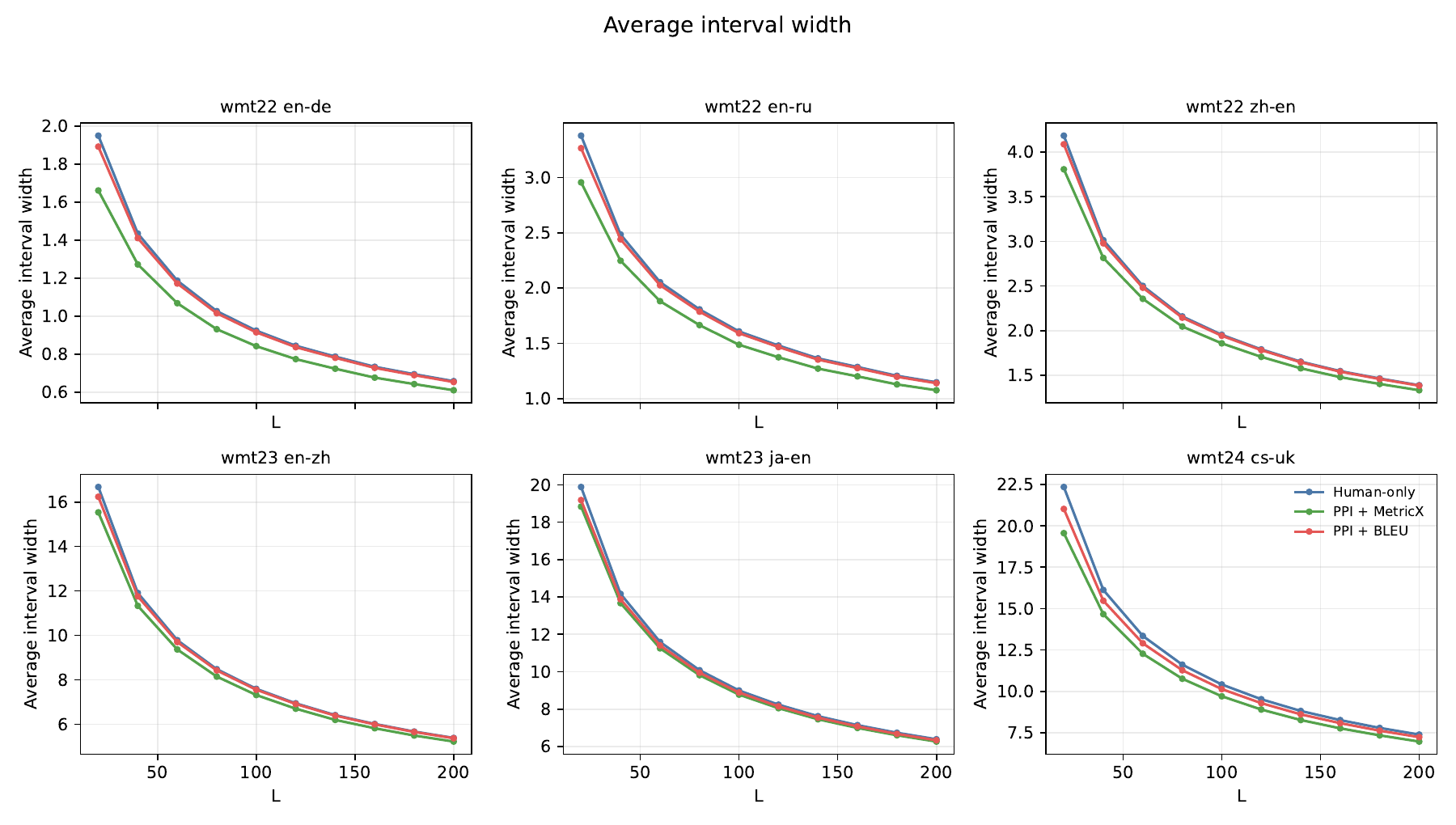}
\caption{Average two-sided 95\% confidence-interval width with $U=800$. Results are averaged across system pairs.}
\label{fig:basic_interval_width}
\end{figure*}

\begin{figure*}[]
\centering
\includegraphics[width=\textwidth]{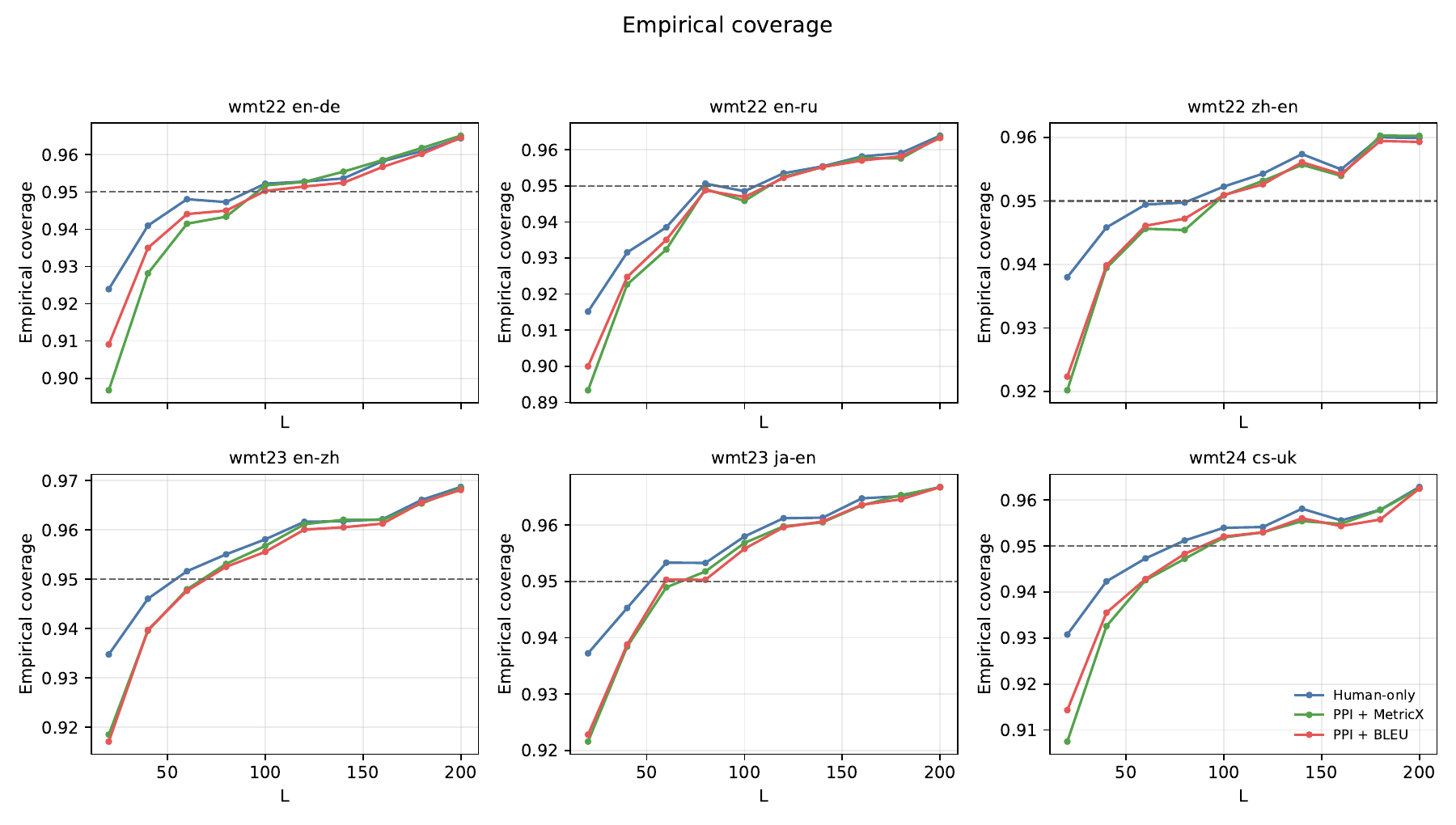}
\caption{Empirical coverage of the two-sided 95\% confidence intervals. The dashed line marks the nominal 95\% level.}
\label{fig:basic_interval_coverage}
\end{figure*}

\begin{figure*}[]
\centering
\includegraphics[width=\textwidth]{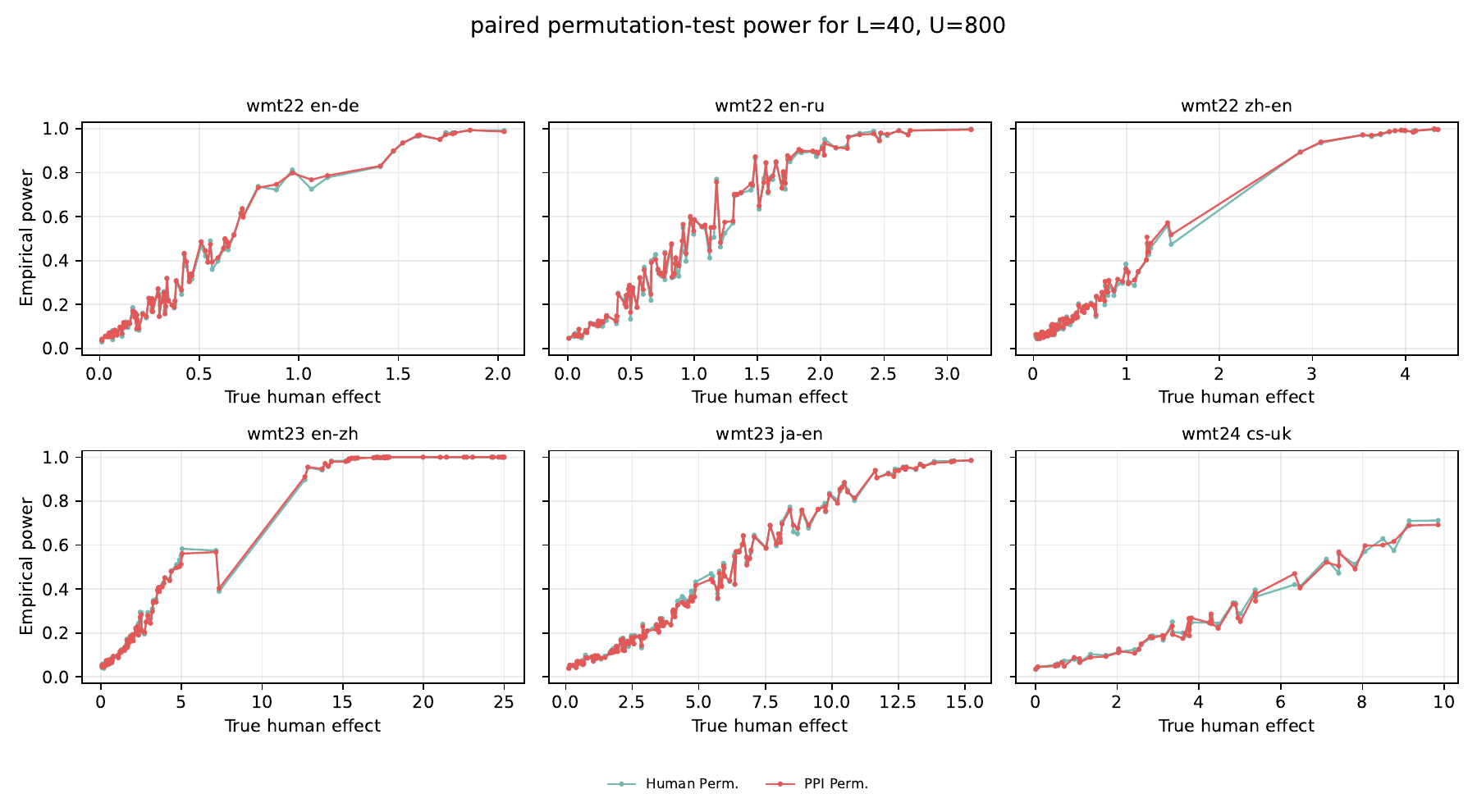}
\caption{Empirical power curves for $L=40$ and $U=800$. ``Human Perm'' means the human-only paired permutation test and ``PPI Perm'' means the prediction-powered paired permutation test. Each point is one ordered system pair, and the x-axis is the full-population human-score difference. MetricX is used as the automatic metric.}
\label{fig:perm_only_power_l40}
\end{figure*}

\begin{figure*}[]
\centering
\includegraphics[width=\textwidth]{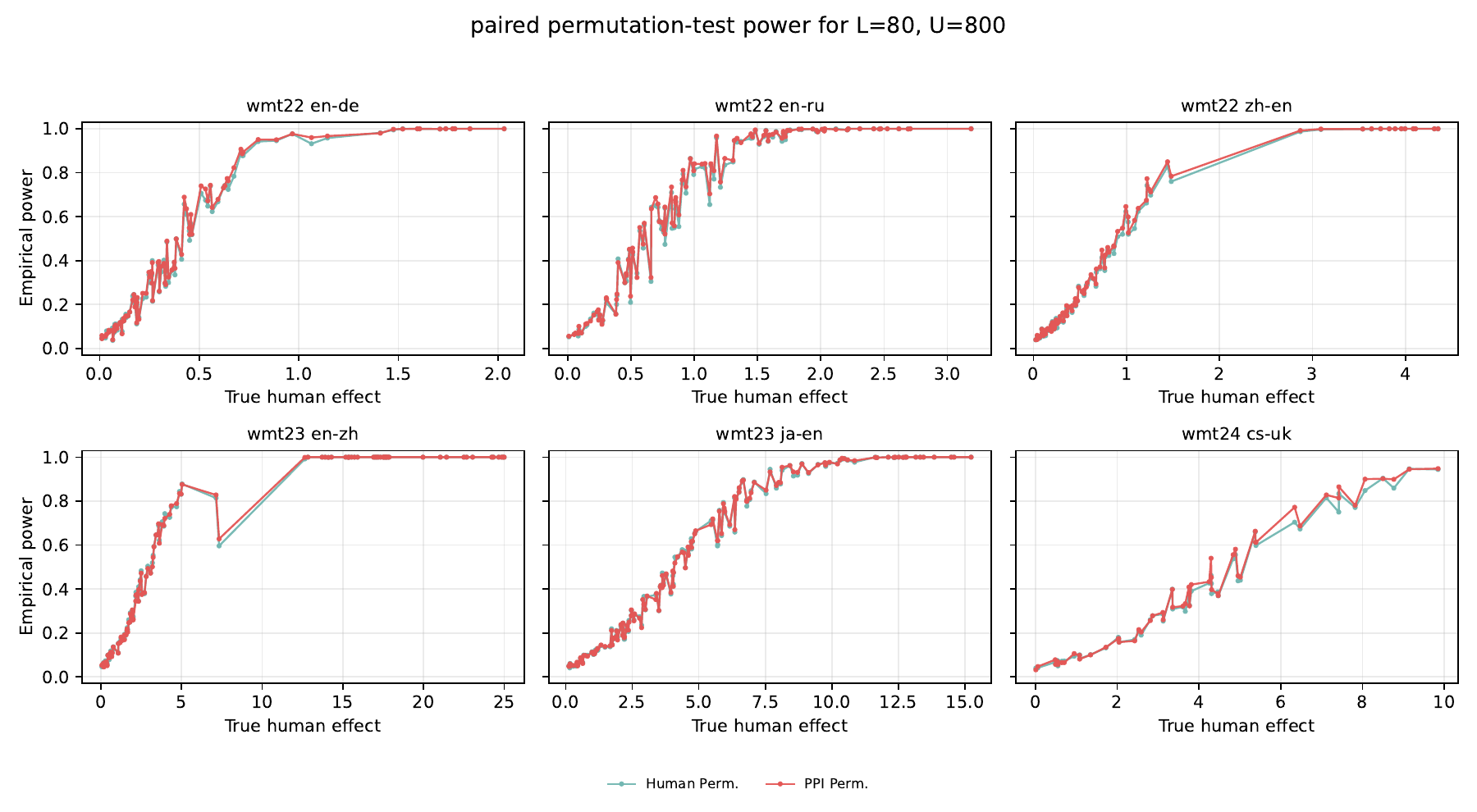}
\caption{Empirical power curves for $L=80$ and $U=800$. ``Human Perm'' means the human-only paired permutation test and ``PPI Perm'' means the prediction-powered paired permutation test. Each point is one ordered system pair, and the x-axis is the full-population human-score difference. MetricX is used as the automatic metric.}
\label{fig:perm_only_power_l80}
\end{figure*}

\begin{figure*}[]
\centering
\includegraphics[width=\textwidth]{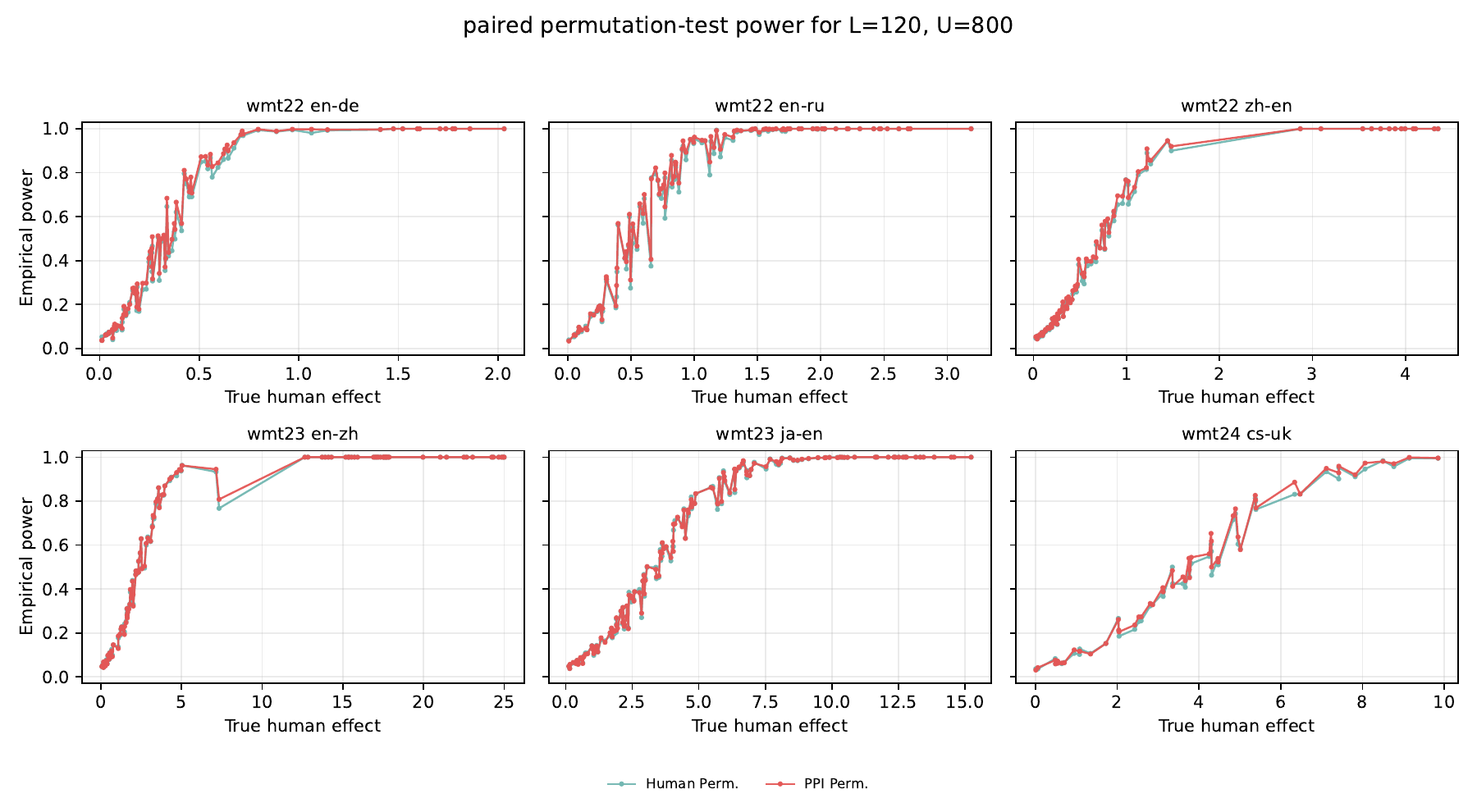}
\caption{Empirical power curves for $L=120$ and $U=800$. ``Human Perm'' means the human-only paired permutation test and ``PPI Perm'' means the prediction-powered paired permutation test. Each point is one ordered system pair, and the x-axis is the full-population human-score difference. MetricX is used as the automatic metric.}
\label{fig:perm_only_power_l120}
\end{figure*}

\begin{figure*}[]
\centering
\includegraphics[width=\textwidth]{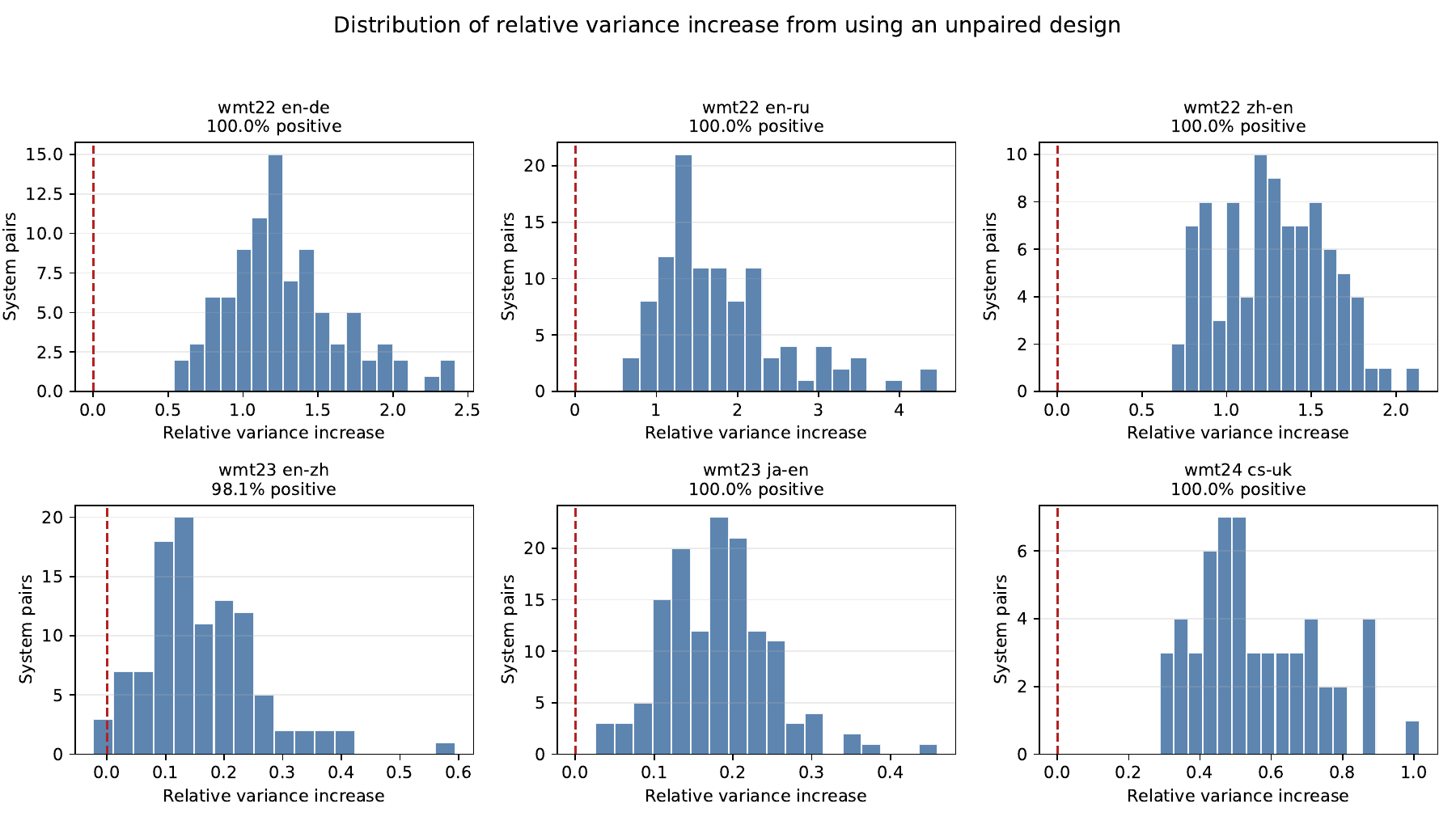}
\caption{Distribution of the relative variance increase from using an unpaired design instead of the paired design under the human-only cases. The dashed vertical line marks zero. Values to the right indicate larger variance under the unpaired design.}
\label{fig:paired_unpaired_human_variance}
\end{figure*}

\begin{figure*}[]
\centering
\includegraphics[width=\textwidth]{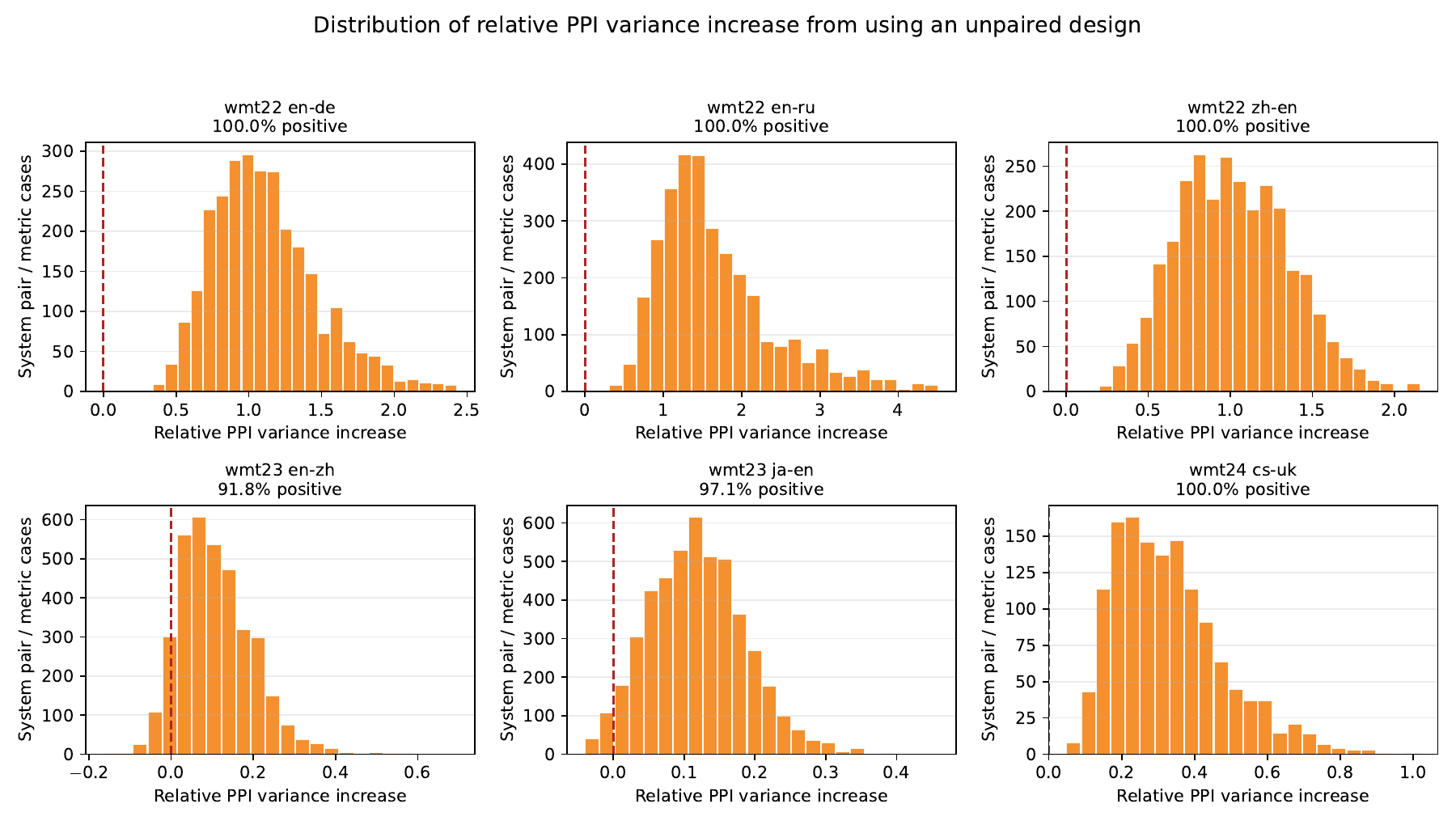}
\caption{Distribution of the relative variance increase from using an unpaired design instead of the paired design under the prediction-powered cases. Each observation is one system-pair/metric case. The dashed vertical line marks zero. Values to the right indicate larger variance under the unpaired design.}
\label{fig:paired_unpaired_pp_variance}
\end{figure*}

\begin{figure*}[]
\centering
\includegraphics[width=\textwidth]{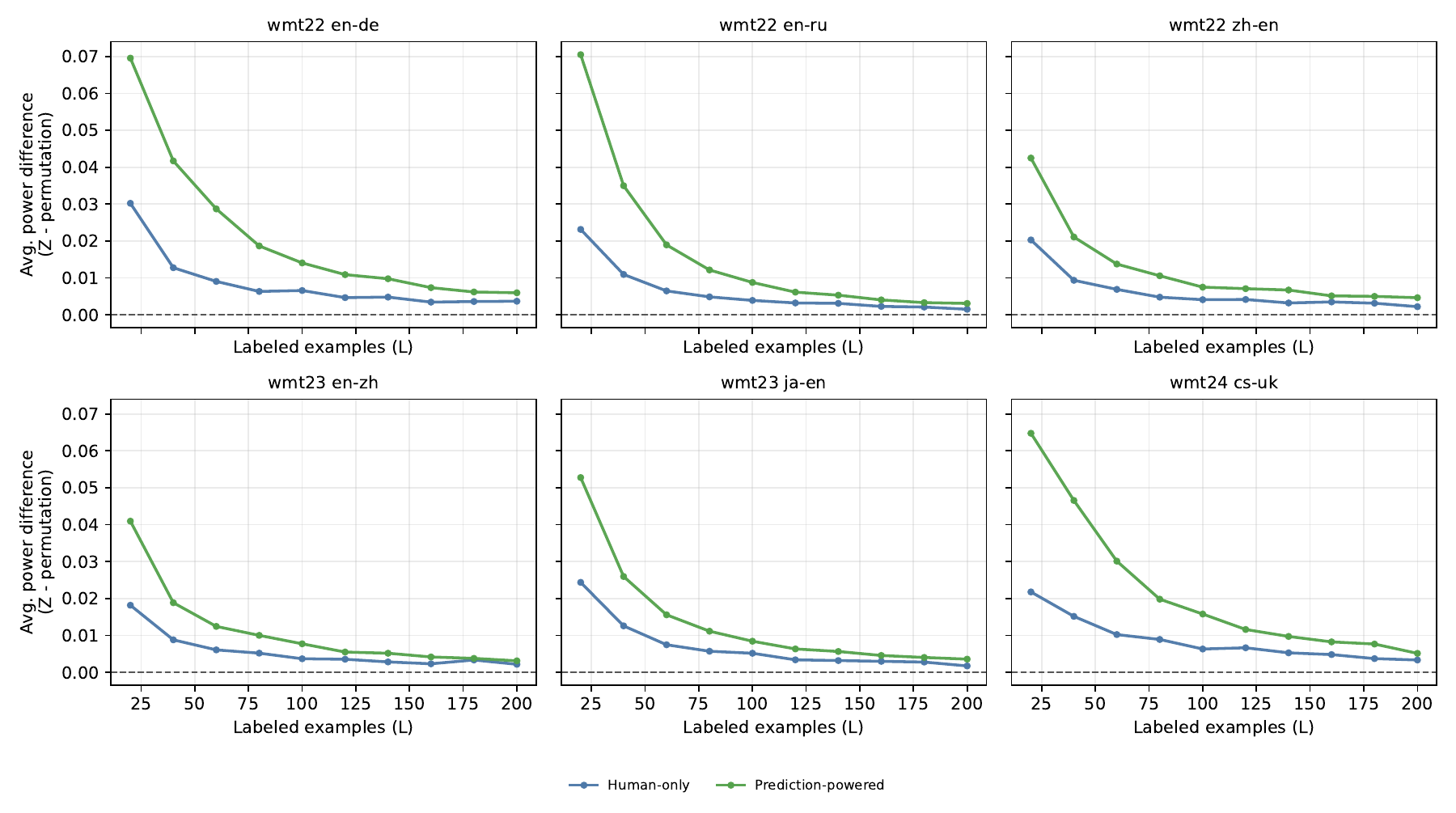}
\caption{Average power difference across all system pairs Average empirical power difference (paired Z-test minus paired permutation test). Positive values indicate higher empirical power for the paired Z-test.}
\label{fig:z_vs_perm_power_gain_by_l}
\end{figure*}

\begin{figure*}[]
\centering
\includegraphics[width=\textwidth]{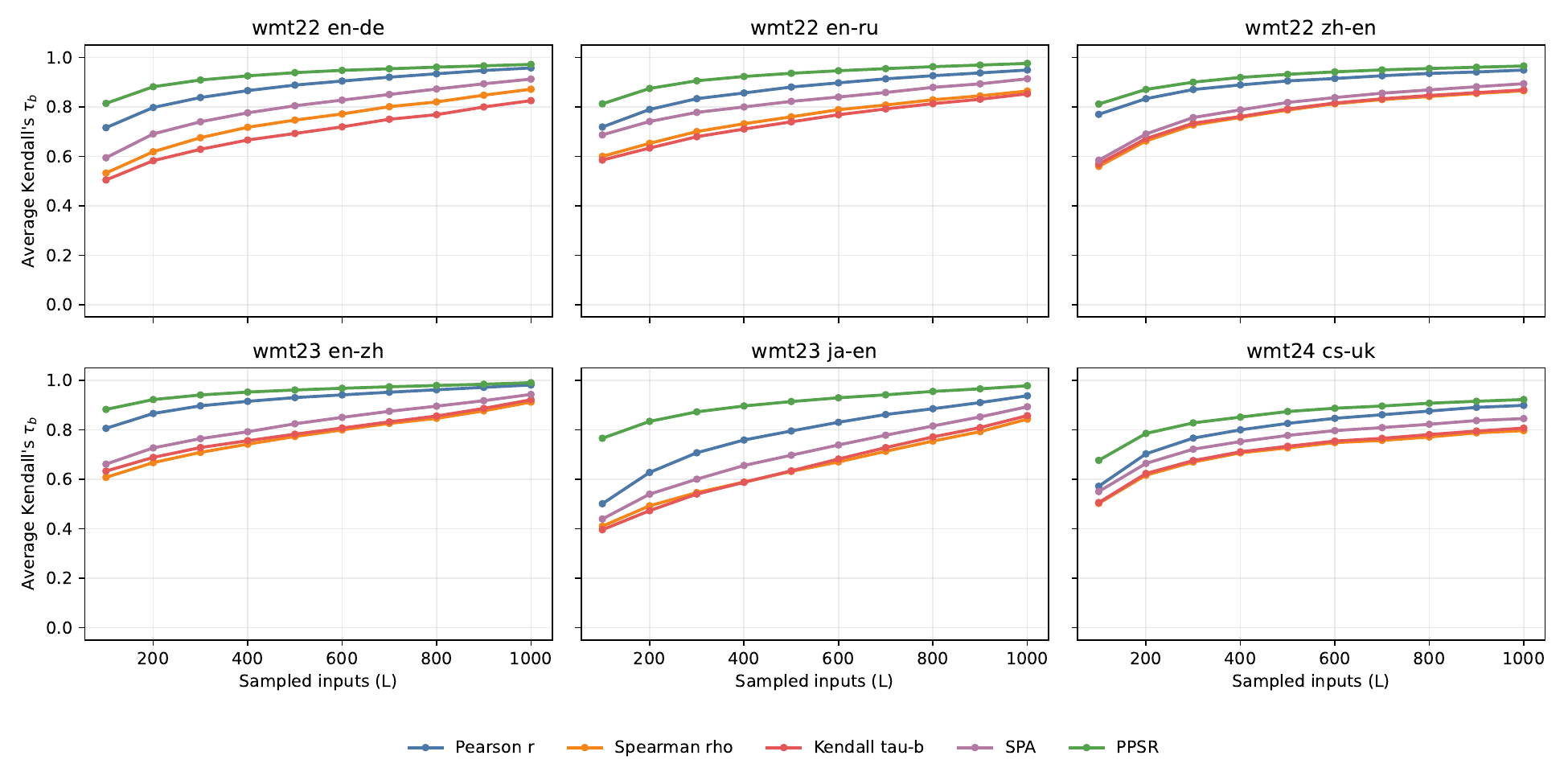}
\caption{Ranking stability of system-level meta-metrics. Higher values indicate more stable metric rankings. Note that for meta-evaluation, only labeled examples are used and the unlabeled examples are unused.}
\label{fig:saving_ratio_ranking_stability}
\end{figure*}

\begin{table*}[t]
\centering
\scriptsize
\setlength{\tabcolsep}{3pt}
\begin{adjustbox}{max width=\textwidth}
\begin{tabular}{lrrrrr}
\toprule
Metric & $r$ & $\rho$ & $\tau_b$ & SPA & PPSR \\
\midrule
metricx\_xl\_MQM\_2020-refA & 0.939 (\phantom{0}5) & 0.771 (\phantom{0}7) & 0.582 (\phantom{0}8) & 0.819 (\phantom{0}6) & 0.155 (\phantom{0}1) \\
metricx\_xxl\_MQM\_2020-refA & 0.953 (\phantom{0}4) & 0.837 (\phantom{0}2) & 0.626 (\phantom{0}2) & 0.831 (\phantom{0}3) & 0.155 (\phantom{0}2) \\
COMET-22-refA & 0.922 (\phantom{0}6) & 0.749 (11) & 0.604 (\phantom{0}6) & 0.810 (\phantom{0}7) & 0.140 (\phantom{0}3) \\
metricx\_xl\_DA\_2019-refA & 0.969 (\phantom{0}2) & 0.829 (\phantom{0}4) & 0.582 (\phantom{0}8) & 0.829 (\phantom{0}4) & 0.138 (\phantom{0}4) \\
metricx\_xxl\_DA\_2019-refA & 0.972 (\phantom{0}1) & 0.842 (\phantom{0}1) & 0.626 (\phantom{0}2) & 0.835 (\phantom{0}2) & 0.136 (\phantom{0}5) \\
UniTE-refA & 0.875 (11) & 0.763 (\phantom{0}8) & 0.538 (12) & 0.785 (11) & 0.116 (\phantom{0}6) \\
UniTE-ref-refA & 0.866 (14) & 0.763 (\phantom{0}8) & 0.538 (12) & 0.784 (12) & 0.112 (\phantom{0}7) \\
BLEURT-20-refA & 0.895 (\phantom{0}8) & 0.815 (\phantom{0}5) & 0.604 (\phantom{0}6) & 0.821 (\phantom{0}5) & 0.102 (\phantom{0}8) \\
COMETKiwi-src & 0.885 (10) & 0.705 (17) & 0.560 (10) & 0.785 (10) & 0.101 (\phantom{0}9) \\
COMET-20-refA & 0.964 (\phantom{0}3) & 0.833 (\phantom{0}3) & 0.670 (\phantom{0}1) & 0.848 (\phantom{0}1) & 0.091 (10) \\
Cross-QE-src & 0.895 (\phantom{0}9) & 0.710 (15) & 0.516 (16) & 0.775 (14) & 0.085 (11) \\
UniTE-src-src & 0.797 (21) & 0.631 (24) & 0.451 (25) & 0.737 (23) & 0.084 (12) \\
MS-COMET-22-refA & 0.875 (12) & 0.758 (10) & 0.626 (\phantom{0}2) & 0.804 (\phantom{0}8) & 0.068 (13) \\
COMET-QE-src & 0.676 (28) & 0.662 (22) & 0.495 (21) & 0.761 (18) & 0.066 (14) \\
SEScore-refA & 0.855 (15) & 0.675 (20) & 0.495 (21) & 0.766 (16) & 0.062 (15) \\
MATESE-refA & 0.872 (13) & 0.749 (11) & 0.516 (16) & 0.778 (13) & 0.051 (16) \\
MS-COMET-QE-22-src & 0.748 (24) & 0.675 (20) & 0.516 (16) & 0.740 (22) & 0.050 (17) \\
YiSi-1-refA & 0.897 (\phantom{0}7) & 0.815 (\phantom{0}5) & 0.626 (\phantom{0}2) & 0.800 (\phantom{0}9) & 0.038 (18) \\
MEE4-refA & 0.833 (17) & 0.719 (13) & 0.538 (12) & 0.770 (15) & 0.034 (19) \\
BERTScore-refA & 0.818 (19) & 0.679 (18) & 0.516 (16) & 0.757 (20) & 0.033 (20) \\
MATESE-QE-src & 0.752 (22) & 0.560 (28) & 0.341 (30) & 0.681 (30) & 0.032 (21) \\
MEE2-refA & 0.824 (18) & 0.719 (13) & 0.538 (12) & 0.758 (19) & 0.027 (22) \\
chrF-refA & 0.848 (16) & 0.710 (15) & 0.560 (10) & 0.763 (17) & 0.027 (23) \\
HWTSC-Teacher-Sim-src & 0.641 (29) & 0.565 (27) & 0.429 (27) & 0.702 (28) & 0.022 (24) \\
MEE-refA & 0.816 (20) & 0.679 (18) & 0.516 (16) & 0.743 (21) & 0.020 (25) \\
f200spBLEU-refA & 0.723 (26) & 0.618 (25) & 0.473 (24) & 0.717 (25) & 0.020 (26) \\
f101spBLEU-refA & 0.727 (25) & 0.662 (22) & 0.495 (21) & 0.721 (24) & 0.019 (27) \\
KG-BERTScore-src & 0.614 (30) & 0.481 (30) & 0.363 (29) & 0.694 (29) & 0.017 (28) \\
HWTSC-TLM-src & 0.749 (23) & 0.556 (29) & 0.429 (27) & 0.713 (26) & 0.014 (29) \\
BLEU-refA & 0.685 (27) & 0.609 (26) & 0.451 (25) & 0.708 (27) & 0.013 (30) \\
REUSE-src & -0.651 (31) & -0.490 (31) & -0.363 (31) & 0.327 (31) & 0.008 (31) \\
\bottomrule
\end{tabular}
\end{adjustbox}
\caption{Scores and ranks of automatic metrics under each system-level meta-metric for WMT22 en-de. Metrics are sorted by PPSR; tied scores receive the same rank.}
\label{tab:system_metric_score_ranks_wmt22_en_de}
\end{table*}

\begin{table*}[t]
\centering
\scriptsize
\setlength{\tabcolsep}{3pt}
\begin{adjustbox}{max width=\textwidth}
\begin{tabular}{lrrrrr}
\toprule
Metric & Group-by-Item $r$ & No-Grouping $r$ & Group-by-System $r$ & PDP & PPSR \\
\midrule
metricx\_xl\_MQM\_2020-refA & 0.461 (\phantom{0}2) & 0.570 (\phantom{0}2) & 0.540 (\phantom{0}2) & 0.436 (\phantom{0}2) & 0.155 (\phantom{0}1) \\
metricx\_xxl\_MQM\_2020-refA & 0.460 (\phantom{0}3) & 0.575 (\phantom{0}1) & 0.540 (\phantom{0}1) & 0.440 (\phantom{0}1) & 0.155 (\phantom{0}2) \\
COMET-22-refA & 0.446 (\phantom{0}5) & 0.534 (\phantom{0}3) & 0.498 (\phantom{0}3) & 0.420 (\phantom{0}3) & 0.140 (\phantom{0}3) \\
metricx\_xl\_DA\_2019-refA & 0.453 (\phantom{0}4) & 0.512 (\phantom{0}4) & 0.484 (\phantom{0}4) & 0.416 (\phantom{0}4) & 0.138 (\phantom{0}4) \\
metricx\_xxl\_DA\_2019-refA & 0.468 (\phantom{0}1) & 0.501 (\phantom{0}5) & 0.470 (\phantom{0}5) & 0.416 (\phantom{0}5) & 0.136 (\phantom{0}5) \\
UniTE-refA & 0.433 (\phantom{0}6) & 0.482 (\phantom{0}6) & 0.454 (\phantom{0}6) & 0.379 (\phantom{0}6) & 0.116 (\phantom{0}6) \\
UniTE-ref-refA & 0.419 (\phantom{0}7) & 0.473 (\phantom{0}8) & 0.443 (\phantom{0}8) & 0.372 (\phantom{0}7) & 0.112 (\phantom{0}7) \\
BLEURT-20-refA & 0.392 (\phantom{0}8) & 0.478 (\phantom{0}7) & 0.451 (\phantom{0}7) & 0.362 (\phantom{0}9) & 0.102 (\phantom{0}8) \\
COMETKiwi-src & 0.355 (10) & 0.461 (\phantom{0}9) & 0.418 (10) & 0.364 (\phantom{0}8) & 0.101 (\phantom{0}9) \\
COMET-20-refA & 0.372 (\phantom{0}9) & 0.430 (12) & 0.403 (11) & 0.342 (10) & 0.091 (10) \\
Cross-QE-src & 0.320 (14) & 0.433 (11) & 0.396 (12) & 0.333 (11) & 0.085 (11) \\
UniTE-src-src & 0.342 (12) & 0.425 (13) & 0.392 (13) & 0.323 (12) & 0.084 (12) \\
MS-COMET-22-refA & 0.327 (13) & 0.379 (15) & 0.343 (15) & 0.286 (13) & 0.068 (13) \\
COMET-QE-src & 0.265 (19) & 0.435 (10) & 0.420 (\phantom{0}9) & 0.276 (14) & 0.066 (14) \\
SEScore-refA & 0.318 (15) & 0.371 (16) & 0.337 (16) & 0.272 (15) & 0.062 (15) \\
MATESE-refA & 0.346 (11) & 0.412 (14) & 0.379 (14) & 0.250 (16) & 0.051 (16) \\
MS-COMET-QE-22-src & 0.215 (26) & 0.318 (17) & 0.280 (18) & 0.240 (17) & 0.050 (17) \\
YiSi-1-refA & 0.310 (16) & 0.293 (19) & 0.280 (19) & 0.220 (18) & 0.038 (18) \\
MEE4-refA & 0.269 (18) & 0.268 (22) & 0.259 (21) & 0.206 (19) & 0.034 (19) \\
BERTScore-refA & 0.263 (20) & 0.270 (21) & 0.259 (22) & 0.199 (20) & 0.033 (20) \\
MATESE-QE-src & 0.293 (17) & 0.317 (18) & 0.290 (17) & 0.193 (21) & 0.032 (21) \\
MEE2-refA & 0.254 (22) & 0.277 (20) & 0.267 (20) & 0.185 (23) & 0.027 (22) \\
chrF-refA & 0.256 (21) & 0.242 (23) & 0.231 (23) & 0.189 (22) & 0.027 (23) \\
HWTSC-Teacher-Sim-src & 0.172 (29) & 0.206 (27) & 0.189 (28) & 0.155 (25) & 0.022 (24) \\
MEE-refA & 0.224 (25) & 0.213 (25) & 0.201 (25) & 0.162 (24) & 0.020 (25) \\
f200spBLEU-refA & 0.229 (24) & 0.216 (24) & 0.206 (24) & 0.155 (26) & 0.020 (26) \\
f101spBLEU-refA & 0.232 (23) & 0.210 (26) & 0.200 (26) & 0.153 (27) & 0.019 (27) \\
KG-BERTScore-src & 0.158 (30) & 0.189 (29) & 0.183 (29) & 0.135 (28) & 0.017 (28) \\
HWTSC-TLM-src & 0.205 (27) & 0.091 (30) & 0.075 (30) & 0.132 (29) & 0.014 (29) \\
BLEU-refA & 0.203 (28) & 0.203 (28) & 0.195 (27) & 0.127 (30) & 0.013 (30) \\
REUSE-src & -0.104 (31) & 0.046 (31) & 0.063 (31) & -0.069 (31) & 0.008 (31) \\
\bottomrule
\end{tabular}
\end{adjustbox}
\caption{Scores and ranks of automatic metrics under each segment-level meta-metric for WMT22 en-de. Metrics are sorted by PPSR; tied scores receive the same rank.}
\label{tab:segment_metric_score_ranks_wmt22_en_de}
\end{table*}

\begin{table*}[t]
\centering
\scriptsize
\setlength{\tabcolsep}{3pt}
\begin{adjustbox}{max width=\textwidth}
\begin{tabular}{lrrrrr}
\toprule
Metric & $r$ & $\rho$ & $\tau_b$ & SPA & PPSR \\
\midrule
metricx\_xxl\_MQM\_2020-refA & 0.953 (\phantom{0}4) & 0.921 (\phantom{0}3) & 0.829 (\phantom{0}3) & 0.915 (\phantom{0}3) & 0.135 (\phantom{0}1) \\
metricx\_xl\_MQM\_2020-refA & 0.932 (\phantom{0}6) & 0.904 (\phantom{0}6) & 0.790 (\phantom{0}6) & 0.900 (\phantom{0}5) & 0.128 (\phantom{0}2) \\
COMET-22-refA & 0.908 (\phantom{0}7) & 0.875 (12) & 0.714 (12) & 0.880 (\phantom{0}7) & 0.102 (\phantom{0}3) \\
UniTE-refA & 0.894 (\phantom{0}8) & 0.886 (\phantom{0}7) & 0.752 (\phantom{0}7) & 0.878 (\phantom{0}8) & 0.096 (\phantom{0}4) \\
metricx\_xxl\_DA\_2019-refA & 0.979 (\phantom{0}1) & 0.943 (\phantom{0}2) & 0.848 (\phantom{0}1) & 0.933 (\phantom{0}1) & 0.094 (\phantom{0}5) \\
UniTE-ref-refA & 0.841 (11) & 0.854 (13) & 0.752 (\phantom{0}7) & 0.873 (\phantom{0}9) & 0.091 (\phantom{0}6) \\
metricx\_xl\_DA\_2019-refA & 0.975 (\phantom{0}2) & 0.946 (\phantom{0}1) & 0.848 (\phantom{0}1) & 0.923 (\phantom{0}2) & 0.087 (\phantom{0}7) \\
UniTE-src-src & 0.772 (22) & 0.789 (22) & 0.657 (19) & 0.825 (20) & 0.070 (\phantom{0}8) \\
BLEURT-20-refA & 0.956 (\phantom{0}3) & 0.921 (\phantom{0}3) & 0.829 (\phantom{0}3) & 0.903 (\phantom{0}4) & 0.069 (\phantom{0}9) \\
COMETKiwi-src & 0.763 (23) & 0.782 (23) & 0.619 (24) & 0.821 (23) & 0.065 (10) \\
MATESE-refA & 0.795 (21) & 0.832 (19) & 0.714 (12) & 0.847 (12) & 0.061 (11) \\
COMET-20-refA & 0.940 (\phantom{0}5) & 0.921 (\phantom{0}3) & 0.810 (\phantom{0}5) & 0.894 (\phantom{0}6) & 0.061 (12) \\
Cross-QE-src & 0.802 (20) & 0.814 (21) & 0.657 (19) & 0.826 (19) & 0.060 (13) \\
MS-COMET-22-refA & 0.832 (13) & 0.843 (16) & 0.695 (17) & 0.847 (11) & 0.057 (14) \\
COMET-QE-src & 0.483 (29) & 0.632 (27) & 0.600 (25) & 0.799 (25) & 0.055 (15) \\
MS-COMET-QE-22-src & 0.718 (24) & 0.779 (24) & 0.638 (23) & 0.812 (24) & 0.045 (16) \\
MATESE-QE-src & 0.665 (27) & 0.582 (29) & 0.467 (27) & 0.768 (26) & 0.040 (17) \\
YiSi-1-refA & 0.882 (\phantom{0}9) & 0.879 (11) & 0.714 (12) & 0.847 (10) & 0.033 (18) \\
BERTScore-refA & 0.817 (17) & 0.882 (\phantom{0}8) & 0.733 (\phantom{0}9) & 0.833 (18) & 0.025 (19) \\
MEE4-refA & 0.809 (19) & 0.882 (\phantom{0}8) & 0.733 (\phantom{0}9) & 0.843 (13) & 0.022 (20) \\
chrF-refA & 0.874 (10) & 0.836 (18) & 0.657 (19) & 0.837 (16) & 0.020 (21) \\
MEE2-refA & 0.825 (15) & 0.882 (\phantom{0}8) & 0.733 (\phantom{0}9) & 0.838 (14) & 0.019 (22) \\
f200spBLEU-refA & 0.827 (14) & 0.854 (13) & 0.714 (12) & 0.838 (15) & 0.018 (23) \\
f101spBLEU-refA & 0.821 (16) & 0.854 (13) & 0.714 (12) & 0.833 (17) & 0.018 (24) \\
MEE-refA & 0.835 (12) & 0.839 (17) & 0.657 (19) & 0.823 (22) & 0.015 (25) \\
BLEU-refA & 0.813 (18) & 0.821 (20) & 0.676 (18) & 0.824 (21) & 0.014 (26) \\
KG-BERTScore-src & 0.716 (25) & 0.661 (26) & 0.467 (27) & 0.759 (27) & 0.013 (27) \\
HWTSC-Teacher-Sim-src & 0.703 (26) & 0.718 (25) & 0.543 (26) & 0.747 (28) & 0.013 (28) \\
HWTSC-TLM-src & 0.642 (28) & 0.632 (27) & 0.467 (27) & 0.729 (29) & 0.011 (29) \\
REUSE-src & -0.381 (30) & -0.493 (30) & -0.429 (30) & 0.286 (30) & 0.005 (30) \\
\bottomrule
\end{tabular}
\end{adjustbox}
\caption{Scores and ranks of automatic metrics under each system-level meta-metric for WMT22 en-ru. Metrics are sorted by PPSR; tied scores receive the same rank.}
\label{tab:system_metric_score_ranks_wmt22_en_ru}
\end{table*}

\begin{table*}[t]
\centering
\scriptsize
\setlength{\tabcolsep}{3pt}
\begin{adjustbox}{max width=\textwidth}
\begin{tabular}{lrrrrr}
\toprule
Metric & Group-by-Item $r$ & No-Grouping $r$ & Group-by-System $r$ & PDP & PPSR \\
\midrule
metricx\_xxl\_MQM\_2020-refA & 0.461 (\phantom{0}2) & 0.495 (\phantom{0}1) & 0.455 (\phantom{0}1) & 0.400 (\phantom{0}1) & 0.135 (\phantom{0}1) \\
metricx\_xl\_MQM\_2020-refA & 0.431 (\phantom{0}6) & 0.492 (\phantom{0}2) & 0.452 (\phantom{0}2) & 0.391 (\phantom{0}2) & 0.128 (\phantom{0}2) \\
COMET-22-refA & 0.442 (\phantom{0}5) & 0.469 (\phantom{0}3) & 0.442 (\phantom{0}3) & 0.344 (\phantom{0}3) & 0.102 (\phantom{0}3) \\
UniTE-refA & 0.448 (\phantom{0}4) & 0.414 (\phantom{0}8) & 0.389 (\phantom{0}6) & 0.335 (\phantom{0}4) & 0.096 (\phantom{0}4) \\
metricx\_xxl\_DA\_2019-refA & 0.471 (\phantom{0}1) & 0.416 (\phantom{0}6) & 0.385 (\phantom{0}8) & 0.333 (\phantom{0}5) & 0.094 (\phantom{0}5) \\
UniTE-ref-refA & 0.425 (\phantom{0}7) & 0.415 (\phantom{0}7) & 0.392 (\phantom{0}5) & 0.323 (\phantom{0}7) & 0.091 (\phantom{0}6) \\
metricx\_xl\_DA\_2019-refA & 0.454 (\phantom{0}3) & 0.417 (\phantom{0}5) & 0.385 (\phantom{0}7) & 0.324 (\phantom{0}6) & 0.087 (\phantom{0}7) \\
UniTE-src-src & 0.367 (10) & 0.392 (11) & 0.368 (10) & 0.283 (\phantom{0}9) & 0.070 (\phantom{0}8) \\
BLEURT-20-refA & 0.396 (\phantom{0}8) & 0.397 (\phantom{0}9) & 0.372 (\phantom{0}9) & 0.286 (\phantom{0}8) & 0.069 (\phantom{0}9) \\
COMETKiwi-src & 0.358 (11) & 0.385 (12) & 0.359 (12) & 0.272 (10) & 0.065 (10) \\
MATESE-refA & 0.329 (15) & 0.345 (14) & 0.301 (16) & 0.260 (13) & 0.061 (11) \\
COMET-20-refA & 0.384 (\phantom{0}9) & 0.343 (15) & 0.321 (14) & 0.263 (11) & 0.061 (12) \\
Cross-QE-src & 0.357 (12) & 0.397 (10) & 0.364 (11) & 0.260 (12) & 0.060 (13) \\
MS-COMET-22-refA & 0.346 (13) & 0.356 (13) & 0.334 (13) & 0.249 (14) & 0.057 (14) \\
COMET-QE-src & 0.245 (23) & 0.439 (\phantom{0}4) & 0.429 (\phantom{0}4) & 0.227 (15) & 0.055 (15) \\
MS-COMET-QE-22-src & 0.265 (20) & 0.343 (16) & 0.313 (15) & 0.211 (16) & 0.045 (16) \\
MATESE-QE-src & 0.267 (19) & 0.306 (17) & 0.265 (17) & 0.204 (17) & 0.040 (17) \\
YiSi-1-refA & 0.336 (14) & 0.234 (18) & 0.214 (18) & 0.199 (18) & 0.033 (18) \\
BERTScore-refA & 0.274 (18) & 0.197 (19) & 0.180 (20) & 0.171 (19) & 0.025 (19) \\
MEE4-refA & 0.301 (16) & 0.188 (21) & 0.175 (21) & 0.163 (20) & 0.022 (20) \\
chrF-refA & 0.264 (21) & 0.168 (23) & 0.150 (23) & 0.157 (21) & 0.020 (21) \\
MEE2-refA & 0.278 (17) & 0.194 (20) & 0.180 (19) & 0.150 (22) & 0.019 (22) \\
f200spBLEU-refA & 0.246 (22) & 0.172 (22) & 0.155 (22) & 0.146 (23) & 0.018 (23) \\
f101spBLEU-refA & 0.238 (25) & 0.157 (25) & 0.140 (25) & 0.145 (24) & 0.018 (24) \\
MEE-refA & 0.239 (24) & 0.126 (26) & 0.110 (27) & 0.135 (25) & 0.015 (25) \\
BLEU-refA & 0.201 (28) & 0.160 (24) & 0.148 (24) & 0.124 (26) & 0.014 (26) \\
KG-BERTScore-src & 0.207 (27) & 0.112 (27) & 0.111 (26) & 0.119 (28) & 0.013 (27) \\
HWTSC-Teacher-Sim-src & 0.208 (26) & 0.111 (28) & 0.106 (28) & 0.120 (27) & 0.013 (28) \\
HWTSC-TLM-src & 0.201 (29) & 0.078 (29) & 0.069 (29) & 0.103 (29) & 0.011 (29) \\
REUSE-src & -0.037 (30) & 0.052 (30) & 0.065 (30) & -0.025 (30) & 0.005 (30) \\
\bottomrule
\end{tabular}
\end{adjustbox}
\caption{Scores and ranks of automatic metrics under each segment-level meta-metric for WMT22 en-ru. Metrics are sorted by PPSR; tied scores receive the same rank.}
\label{tab:segment_metric_score_ranks_wmt22_en_ru}
\end{table*}

\begin{table*}[t]
\centering
\scriptsize
\setlength{\tabcolsep}{3pt}
\begin{adjustbox}{max width=\textwidth}
\begin{tabular}{lrrrrr}
\toprule
Metric & $r$ & $\rho$ & $\tau_b$ & SPA & PPSR \\
\midrule
COMET-22-refA & 0.942 (\phantom{0}5) & 0.859 (\phantom{0}4) & 0.736 (\phantom{0}3) & 0.876 (\phantom{0}3) & 0.093 (\phantom{0}1) \\
metricx\_xl\_MQM\_2020-refA & 0.914 (10) & 0.877 (\phantom{0}3) & 0.736 (\phantom{0}3) & 0.873 (\phantom{0}4) & 0.091 (\phantom{0}2) \\
metricx\_xxl\_MQM\_2020-refA & 0.920 (\phantom{0}9) & 0.842 (\phantom{0}7) & 0.692 (\phantom{0}7) & 0.862 (\phantom{0}5) & 0.087 (\phantom{0}3) \\
metricx\_xl\_DA\_2019-refA & 0.982 (\phantom{0}2) & 0.895 (\phantom{0}2) & 0.758 (\phantom{0}1) & 0.895 (\phantom{0}1) & 0.070 (\phantom{0}4) \\
UniTE-refA & 0.914 (11) & 0.837 (\phantom{0}8) & 0.692 (\phantom{0}7) & 0.849 (10) & 0.069 (\phantom{0}5) \\
COMETKiwi-src & 0.866 (19) & 0.785 (12) & 0.648 (12) & 0.828 (12) & 0.067 (\phantom{0}6) \\
UniTE-ref-refA & 0.892 (15) & 0.820 (11) & 0.670 (11) & 0.839 (11) & 0.065 (\phantom{0}7) \\
metricx\_xxl\_DA\_2019-refA & 0.984 (\phantom{0}1) & 0.908 (\phantom{0}1) & 0.758 (\phantom{0}1) & 0.894 (\phantom{0}2) & 0.064 (\phantom{0}8) \\
Cross-QE-src & 0.870 (18) & 0.736 (16) & 0.582 (16) & 0.799 (16) & 0.057 (\phantom{0}9) \\
MATESE-refA & 0.856 (20) & 0.855 (\phantom{0}5) & 0.714 (\phantom{0}6) & 0.858 (\phantom{0}6) & 0.055 (10) \\
UniTE-src-src & 0.874 (16) & 0.635 (17) & 0.516 (17) & 0.775 (17) & 0.053 (11) \\
BLEURT-20-refA & 0.938 (\phantom{0}6) & 0.837 (\phantom{0}8) & 0.692 (\phantom{0}7) & 0.853 (\phantom{0}7) & 0.051 (12) \\
SEScore-refA & 0.944 (\phantom{0}4) & 0.754 (13) & 0.604 (15) & 0.815 (15) & 0.045 (13) \\
MATESE-QE-src & 0.767 (23) & 0.837 (\phantom{0}8) & 0.692 (\phantom{0}7) & 0.850 (\phantom{0}9) & 0.039 (14) \\
COMET-QE-src & 0.569 (27) & 0.749 (15) & 0.626 (13) & 0.816 (14) & 0.039 (15) \\
COMET-20-refA & 0.970 (\phantom{0}3) & 0.754 (13) & 0.626 (13) & 0.827 (13) & 0.037 (16) \\
BERTScore-refA & 0.924 (\phantom{0}8) & 0.578 (20) & 0.451 (19) & 0.747 (19) & 0.032 (17) \\
YiSi-1-refA & 0.935 (\phantom{0}7) & 0.591 (19) & 0.451 (19) & 0.737 (20) & 0.030 (18) \\
MS-COMET-22-refA & 0.909 (12) & 0.846 (\phantom{0}6) & 0.736 (\phantom{0}3) & 0.852 (\phantom{0}8) & 0.024 (19) \\
MS-COMET-QE-22-src & 0.897 (14) & 0.604 (18) & 0.473 (18) & 0.769 (18) & 0.022 (20) \\
MEE4-refA & 0.905 (13) & 0.569 (21) & 0.451 (19) & 0.723 (21) & 0.019 (21) \\
MEE2-refA & 0.872 (17) & 0.446 (25) & 0.363 (23) & 0.684 (23) & 0.014 (22) \\
HWTSC-Teacher-Sim-src & 0.356 (30) & 0.499 (22) & 0.363 (23) & 0.678 (24) & 0.011 (23) \\
chrF-refA & 0.812 (22) & 0.429 (28) & 0.297 (28) & 0.660 (28) & 0.010 (24) \\
KG-BERTScore-src & 0.553 (28) & 0.486 (23) & 0.385 (22) & 0.696 (22) & 0.009 (25) \\
MEE-refA & 0.824 (21) & 0.349 (29) & 0.297 (28) & 0.650 (29) & 0.008 (26) \\
f200spBLEU-refA & 0.728 (24) & 0.437 (26) & 0.341 (26) & 0.668 (26) & 0.007 (27) \\
f101spBLEU-refA & 0.718 (25) & 0.437 (26) & 0.341 (26) & 0.665 (27) & 0.007 (28) \\
HWTSC-TLM-src & 0.460 (29) & 0.455 (24) & 0.363 (23) & 0.669 (25) & 0.006 (29) \\
BLEU-refA & 0.658 (26) & 0.345 (30) & 0.275 (30) & 0.646 (30) & 0.005 (30) \\
REUSE-src & -0.142 (31) & -0.332 (31) & -0.231 (31) & 0.386 (31) & 0.001 (31) \\
\bottomrule
\end{tabular}
\end{adjustbox}
\caption{Scores and ranks of automatic metrics under each system-level meta-metric for WMT22 zh-en. Metrics are sorted by PPSR; tied scores receive the same rank.}
\label{tab:system_metric_score_ranks_wmt22_zh_en}
\end{table*}

\begin{table*}[t]
\centering
\scriptsize
\setlength{\tabcolsep}{3pt}
\begin{adjustbox}{max width=\textwidth}
\begin{tabular}{lrrrrr}
\toprule
Metric & Group-by-Item $r$ & No-Grouping $r$ & Group-by-System $r$ & PDP & PPSR \\
\midrule
COMET-22-refA & 0.397 (\phantom{0}3) & 0.585 (\phantom{0}1) & 0.564 (\phantom{0}2) & 0.360 (\phantom{0}1) & 0.093 (\phantom{0}1) \\
metricx\_xl\_MQM\_2020-refA & 0.382 (\phantom{0}5) & 0.575 (\phantom{0}3) & 0.556 (\phantom{0}3) & 0.348 (\phantom{0}2) & 0.091 (\phantom{0}2) \\
metricx\_xxl\_MQM\_2020-refA & 0.381 (\phantom{0}6) & 0.581 (\phantom{0}2) & 0.566 (\phantom{0}1) & 0.342 (\phantom{0}3) & 0.087 (\phantom{0}3) \\
metricx\_xl\_DA\_2019-refA & 0.412 (\phantom{0}1) & 0.468 (\phantom{0}8) & 0.442 (\phantom{0}9) & 0.319 (\phantom{0}4) & 0.070 (\phantom{0}4) \\
UniTE-refA & 0.383 (\phantom{0}4) & 0.405 (14) & 0.385 (14) & 0.307 (\phantom{0}6) & 0.069 (\phantom{0}5) \\
COMETKiwi-src & 0.325 (11) & 0.509 (\phantom{0}6) & 0.488 (\phantom{0}7) & 0.305 (\phantom{0}7) & 0.067 (\phantom{0}6) \\
UniTE-ref-refA & 0.372 (\phantom{0}7) & 0.410 (13) & 0.391 (12) & 0.295 (\phantom{0}8) & 0.065 (\phantom{0}7) \\
metricx\_xxl\_DA\_2019-refA & 0.402 (\phantom{0}2) & 0.464 (10) & 0.439 (10) & 0.309 (\phantom{0}5) & 0.064 (\phantom{0}8) \\
Cross-QE-src & 0.318 (13) & 0.546 (\phantom{0}4) & 0.529 (\phantom{0}4) & 0.282 (\phantom{0}9) & 0.057 (\phantom{0}9) \\
MATESE-refA & 0.269 (18) & 0.528 (\phantom{0}5) & 0.507 (\phantom{0}5) & 0.270 (12) & 0.055 (10) \\
UniTE-src-src & 0.335 (\phantom{0}9) & 0.404 (15) & 0.382 (15) & 0.273 (11) & 0.053 (11) \\
BLEURT-20-refA & 0.364 (\phantom{0}8) & 0.430 (11) & 0.408 (11) & 0.275 (10) & 0.051 (12) \\
SEScore-refA & 0.311 (15) & 0.422 (12) & 0.389 (13) & 0.261 (13) & 0.045 (13) \\
MATESE-QE-src & 0.223 (22) & 0.468 (\phantom{0}9) & 0.444 (\phantom{0}8) & 0.222 (15) & 0.039 (14) \\
COMET-QE-src & 0.236 (21) & 0.505 (\phantom{0}7) & 0.501 (\phantom{0}6) & 0.215 (18) & 0.039 (15) \\
COMET-20-refA & 0.328 (10) & 0.386 (16) & 0.359 (16) & 0.241 (14) & 0.037 (16) \\
BERTScore-refA & 0.311 (14) & 0.376 (17) & 0.356 (17) & 0.217 (17) & 0.032 (17) \\
YiSi-1-refA & 0.319 (12) & 0.351 (19) & 0.328 (20) & 0.218 (16) & 0.030 (18) \\
MS-COMET-22-refA & 0.294 (16) & 0.369 (18) & 0.355 (18) & 0.188 (19) & 0.024 (19) \\
MS-COMET-QE-22-src & 0.246 (20) & 0.321 (21) & 0.302 (21) & 0.172 (21) & 0.022 (20) \\
MEE4-refA & 0.276 (17) & 0.192 (24) & 0.172 (24) & 0.178 (20) & 0.019 (21) \\
MEE2-refA & 0.253 (19) & 0.208 (23) & 0.191 (23) & 0.156 (22) & 0.014 (22) \\
HWTSC-Teacher-Sim-src & 0.149 (30) & 0.350 (20) & 0.349 (19) & 0.106 (27) & 0.011 (23) \\
chrF-refA & 0.222 (23) & 0.154 (28) & 0.138 (29) & 0.139 (23) & 0.010 (24) \\
KG-BERTScore-src & 0.159 (28) & 0.303 (22) & 0.302 (22) & 0.105 (28) & 0.009 (25) \\
MEE-refA & 0.195 (24) & 0.139 (29) & 0.122 (30) & 0.124 (24) & 0.008 (26) \\
f200spBLEU-refA & 0.176 (26) & 0.172 (27) & 0.158 (27) & 0.112 (25) & 0.007 (27) \\
f101spBLEU-refA & 0.177 (25) & 0.177 (25) & 0.163 (26) & 0.112 (26) & 0.007 (28) \\
HWTSC-TLM-src & 0.165 (27) & 0.010 (31) & 0.002 (31) & 0.081 (30) & 0.006 (29) \\
BLEU-refA & 0.159 (29) & 0.175 (26) & 0.164 (25) & 0.099 (29) & 0.005 (30) \\
REUSE-src & 0.006 (31) & 0.131 (30) & 0.140 (28) & 0.008 (31) & 0.001 (31) \\
\bottomrule
\end{tabular}
\end{adjustbox}
\caption{Scores and ranks of automatic metrics under each segment-level meta-metric for WMT22 zh-en. Metrics are sorted by PPSR; tied scores receive the same rank.}
\label{tab:segment_metric_score_ranks_wmt22_zh_en}
\end{table*}

\begin{table*}[t]
\centering
\scriptsize
\setlength{\tabcolsep}{3pt}
\begin{adjustbox}{max width=\textwidth}
\begin{tabular}{lrrrrr}
\toprule
Metric & $r$ & $\rho$ & $\tau_b$ & SPA & PPSR \\
\midrule
CometKiwi-src & 0.992 (\phantom{0}5) & 0.950 (\phantom{0}4) & 0.867 (\phantom{0}2) & 0.925 (\phantom{0}4) & 0.186 (\phantom{0}1) \\
KG-BERTScore-src & 0.992 (\phantom{0}4) & 0.950 (\phantom{0}4) & 0.867 (\phantom{0}2) & 0.925 (\phantom{0}5) & 0.186 (\phantom{0}2) \\
CometKiwi-XL-src & 0.985 (10) & 0.921 (10) & 0.810 (\phantom{0}9) & 0.916 (11) & 0.160 (\phantom{0}3) \\
MS-COMET-QE-22-src & 0.993 (\phantom{0}3) & 0.950 (\phantom{0}4) & 0.848 (\phantom{0}5) & 0.930 (\phantom{0}2) & 0.157 (\phantom{0}4) \\
COMET-refA & 0.995 (\phantom{0}2) & 0.954 (\phantom{0}2) & 0.867 (\phantom{0}2) & 0.920 (\phantom{0}7) & 0.149 (\phantom{0}5) \\
cometoid22-wmt23-src & 0.996 (\phantom{0}1) & 0.971 (\phantom{0}1) & 0.886 (\phantom{0}1) & 0.946 (\phantom{0}1) & 0.148 (\phantom{0}6) \\
CometKiwi-XXL-src & 0.980 (12) & 0.925 (\phantom{0}9) & 0.829 (\phantom{0}7) & 0.917 (10) & 0.145 (\phantom{0}7) \\
BLEURT-20-refA & 0.986 (\phantom{0}9) & 0.875 (22) & 0.771 (14) & 0.869 (22) & 0.140 (\phantom{0}8) \\
cometoid22-wmt22-src & 0.991 (\phantom{0}6) & 0.921 (10) & 0.810 (\phantom{0}9) & 0.922 (\phantom{0}6) & 0.126 (\phantom{0}9) \\
XLsim-refA & 0.981 (11) & 0.807 (23) & 0.695 (23) & 0.826 (23) & 0.124 (10) \\
YiSi-1-refA & 0.972 (14) & 0.696 (25) & 0.562 (25) & 0.773 (25) & 0.123 (11) \\
cometoid22-wmt21-src & 0.990 (\phantom{0}7) & 0.932 (\phantom{0}8) & 0.810 (\phantom{0}9) & 0.919 (\phantom{0}9) & 0.121 (12) \\
prismRef-refA & 0.970 (15) & 0.725 (24) & 0.619 (24) & 0.795 (24) & 0.118 (13) \\
MetricX-23-c-refA & 0.936 (22) & 0.954 (\phantom{0}2) & 0.829 (\phantom{0}7) & 0.927 (\phantom{0}3) & 0.115 (14) \\
BERTscore-refA & 0.972 (13) & 0.586 (29) & 0.486 (28) & 0.744 (27) & 0.113 (15) \\
XCOMET-Ensemble-refA & 0.944 (20) & 0.911 (17) & 0.771 (14) & 0.899 (18) & 0.110 (16) \\
GEMBA-MQM-src & 0.967 (16) & 0.946 (\phantom{0}7) & 0.848 (\phantom{0}5) & 0.919 (\phantom{0}8) & 0.090 (17) \\
XCOMET-QE-Ensemble-src & 0.908 (29) & 0.918 (12) & 0.790 (12) & 0.905 (14) & 0.085 (18) \\
mre-score-labse-regular-refA & 0.989 (\phantom{0}8) & 0.896 (20) & 0.733 (21) & 0.877 (20) & 0.085 (19) \\
MetricX-23-QE-b-src & 0.967 (17) & 0.904 (18) & 0.771 (14) & 0.910 (12) & 0.076 (20) \\
MetricX-23-QE-c-src & 0.909 (28) & 0.914 (13) & 0.771 (14) & 0.900 (17) & 0.075 (21) \\
prismSrc-src & 0.931 (23) & 0.168 (34) & 0.105 (34) & 0.549 (34) & 0.074 (22) \\
MetricX-23-QE-src & 0.950 (18) & 0.914 (13) & 0.790 (12) & 0.904 (15) & 0.066 (23) \\
MetricX-23-b-refA & 0.930 (24) & 0.914 (13) & 0.771 (14) & 0.900 (16) & 0.065 (24) \\
MetricX-23-refA & 0.892 (30) & 0.914 (13) & 0.771 (14) & 0.908 (13) & 0.061 (25) \\
XCOMET-XXL-refA & 0.886 (31) & 0.904 (18) & 0.752 (20) & 0.891 (19) & 0.053 (26) \\
XCOMET-XL-refA & 0.796 (32) & 0.893 (21) & 0.733 (21) & 0.876 (21) & 0.044 (27) \\
tokengram\_F-refA & 0.943 (21) & 0.614 (27) & 0.505 (27) & 0.740 (28) & 0.042 (28) \\
f200spBLEU-refA & 0.912 (26) & 0.532 (31) & 0.429 (30) & 0.705 (31) & 0.039 (29) \\
chrF-refA & 0.945 (19) & 0.596 (28) & 0.467 (29) & 0.726 (29) & 0.037 (30) \\
embed\_llama-refA & 0.924 (25) & 0.514 (32) & 0.410 (32) & 0.708 (30) & 0.030 (31) \\
eBLEU-refA & 0.910 (27) & 0.554 (30) & 0.429 (30) & 0.704 (32) & 0.011 (32) \\
BLEU-refA & 0.764 (33) & 0.671 (26) & 0.524 (26) & 0.753 (26) & 0.003 (33) \\
Random-sysname-src & 0.045 (34) & 0.336 (33) & 0.200 (33) & 0.598 (33) & 0.001 (34) \\
\bottomrule
\end{tabular}
\end{adjustbox}
\caption{Scores and ranks of automatic metrics under each system-level meta-metric for WMT23 en-zh. Metrics are sorted by PPSR; tied scores receive the same rank.}
\label{tab:system_metric_score_ranks_wmt23_en_zh}
\end{table*}

\begin{table*}[t]
\centering
\scriptsize
\setlength{\tabcolsep}{3pt}
\begin{adjustbox}{max width=\textwidth}
\begin{tabular}{lrrrrr}
\toprule
Metric & Group-by-Item $r$ & No-Grouping $r$ & Group-by-System $r$ & PDP & PPSR \\
\midrule
CometKiwi-src & 0.551 (\phantom{0}1) & 0.635 (\phantom{0}1) & 0.396 (\phantom{0}2) & 0.600 (\phantom{0}1) & 0.186 (\phantom{0}1) \\
KG-BERTScore-src & 0.551 (\phantom{0}2) & 0.635 (\phantom{0}2) & 0.396 (\phantom{0}3) & 0.600 (\phantom{0}2) & 0.186 (\phantom{0}2) \\
CometKiwi-XL-src & 0.528 (\phantom{0}3) & 0.588 (\phantom{0}4) & 0.397 (\phantom{0}1) & 0.549 (\phantom{0}4) & 0.160 (\phantom{0}3) \\
MS-COMET-QE-22-src & 0.500 (\phantom{0}7) & 0.610 (\phantom{0}3) & 0.384 (\phantom{0}5) & 0.558 (\phantom{0}3) & 0.157 (\phantom{0}4) \\
COMET-refA & 0.514 (\phantom{0}5) & 0.575 (\phantom{0}6) & 0.384 (\phantom{0}6) & 0.539 (\phantom{0}6) & 0.149 (\phantom{0}5) \\
cometoid22-wmt23-src & 0.527 (\phantom{0}4) & 0.588 (\phantom{0}5) & 0.384 (\phantom{0}4) & 0.545 (\phantom{0}5) & 0.148 (\phantom{0}6) \\
CometKiwi-XXL-src & 0.514 (\phantom{0}6) & 0.559 (\phantom{0}7) & 0.376 (\phantom{0}7) & 0.522 (\phantom{0}7) & 0.145 (\phantom{0}7) \\
BLEURT-20-refA & 0.495 (\phantom{0}8) & 0.550 (\phantom{0}8) & 0.362 (\phantom{0}9) & 0.516 (\phantom{0}8) & 0.140 (\phantom{0}8) \\
cometoid22-wmt22-src & 0.486 (11) & 0.537 (\phantom{0}9) & 0.344 (12) & 0.505 (\phantom{0}9) & 0.126 (\phantom{0}9) \\
XLsim-refA & 0.481 (14) & 0.524 (11) & 0.308 (16) & 0.503 (10) & 0.124 (10) \\
YiSi-1-refA & 0.484 (12) & 0.493 (15) & 0.294 (23) & 0.493 (13) & 0.123 (11) \\
cometoid22-wmt21-src & 0.484 (13) & 0.527 (10) & 0.338 (13) & 0.495 (12) & 0.121 (12) \\
prismRef-refA & 0.490 (\phantom{0}9) & 0.496 (13) & 0.300 (22) & 0.501 (11) & 0.118 (13) \\
MetricX-23-c-refA & 0.465 (17) & 0.507 (12) & 0.292 (24) & 0.476 (14) & 0.115 (14) \\
BERTscore-refA & 0.470 (16) & 0.474 (17) & 0.287 (25) & 0.476 (15) & 0.113 (15) \\
XCOMET-Ensemble-refA & 0.480 (15) & 0.493 (14) & 0.372 (\phantom{0}8) & 0.447 (16) & 0.110 (16) \\
GEMBA-MQM-src & 0.464 (18) & 0.489 (16) & 0.334 (14) & 0.402 (19) & 0.090 (17) \\
XCOMET-QE-Ensemble-src & 0.427 (21) & 0.450 (21) & 0.345 (11) & 0.383 (21) & 0.085 (18) \\
mre-score-labse-regular-refA & 0.487 (10) & 0.177 (32) & 0.105 (32) & 0.427 (17) & 0.085 (19) \\
MetricX-23-QE-b-src & 0.452 (19) & 0.456 (19) & 0.302 (20) & 0.395 (20) & 0.076 (20) \\
MetricX-23-QE-c-src & 0.415 (23) & 0.468 (18) & 0.358 (10) & 0.365 (22) & 0.075 (21) \\
prismSrc-src & 0.383 (26) & 0.452 (20) & 0.228 (26) & 0.419 (18) & 0.074 (22) \\
MetricX-23-QE-src & 0.418 (22) & 0.439 (22) & 0.302 (21) & 0.360 (23) & 0.066 (23) \\
MetricX-23-b-refA & 0.437 (20) & 0.420 (23) & 0.303 (19) & 0.352 (24) & 0.065 (24) \\
MetricX-23-refA & 0.400 (24) & 0.411 (24) & 0.312 (15) & 0.327 (25) & 0.061 (25) \\
XCOMET-XXL-refA & 0.380 (28) & 0.391 (25) & 0.306 (17) & 0.309 (27) & 0.053 (26) \\
XCOMET-XL-refA & 0.343 (30) & 0.366 (26) & 0.305 (18) & 0.268 (30) & 0.044 (27) \\
tokengram\_F-refA & 0.397 (25) & 0.343 (27) & 0.200 (28) & 0.319 (26) & 0.042 (28) \\
f200spBLEU-refA & 0.369 (29) & 0.327 (28) & 0.206 (27) & 0.297 (29) & 0.039 (29) \\
chrF-refA & 0.382 (27) & 0.326 (29) & 0.190 (30) & 0.301 (28) & 0.037 (30) \\
embed\_llama-refA & 0.305 (31) & 0.297 (30) & 0.197 (29) & 0.242 (31) & 0.030 (31) \\
eBLEU-refA & 0.277 (32) & 0.210 (31) & 0.106 (31) & 0.199 (32) & 0.011 (32) \\
BLEU-refA & 0.182 (33) & 0.093 (33) & 0.048 (33) & 0.057 (33) & 0.003 (33) \\
Random-sysname-src & 0.028 (34) & 0.018 (34) & 0.005 (34) & -0.048 (34) & 0.001 (34) \\
\bottomrule
\end{tabular}
\end{adjustbox}
\caption{Scores and ranks of automatic metrics under each segment-level meta-metric for WMT23 en-zh. Metrics are sorted by PPSR; tied scores receive the same rank.}
\label{tab:segment_metric_score_ranks_wmt23_en_zh}
\end{table*}

\begin{table*}[t]
\centering
\scriptsize
\setlength{\tabcolsep}{3pt}
\begin{adjustbox}{max width=\textwidth}
\begin{tabular}{lrrrrr}
\toprule
Metric & $r$ & $\rho$ & $\tau_b$ & SPA & PPSR \\
\midrule
CometKiwi-XXL-src & 0.987 (\phantom{0}2) & 0.975 (\phantom{0}1) & 0.912 (\phantom{0}1) & 0.948 (\phantom{0}2) & 0.101 (\phantom{0}1) \\
COMET-refA & 0.967 (13) & 0.953 (10) & 0.853 (\phantom{0}8) & 0.923 (14) & 0.096 (\phantom{0}2) \\
BLEURT-20-refA & 0.959 (21) & 0.953 (10) & 0.853 (\phantom{0}8) & 0.920 (17) & 0.093 (\phantom{0}3) \\
KG-BERTScore-src & 0.973 (\phantom{0}8) & 0.973 (\phantom{0}3) & 0.897 (\phantom{0}4) & 0.938 (\phantom{0}6) & 0.091 (\phantom{0}4) \\
CometKiwi-src & 0.973 (\phantom{0}9) & 0.973 (\phantom{0}3) & 0.897 (\phantom{0}4) & 0.938 (\phantom{0}7) & 0.091 (\phantom{0}5) \\
MetricX-23-c-refA & 0.962 (18) & 0.961 (\phantom{0}7) & 0.868 (\phantom{0}6) & 0.939 (\phantom{0}5) & 0.087 (\phantom{0}6) \\
cometoid22-wmt23-src & 0.967 (14) & 0.958 (\phantom{0}8) & 0.853 (\phantom{0}8) & 0.924 (11) & 0.086 (\phantom{0}7) \\
CometKiwi-XL-src & 0.986 (\phantom{0}3) & 0.975 (\phantom{0}1) & 0.912 (\phantom{0}1) & 0.949 (\phantom{0}1) & 0.085 (\phantom{0}8) \\
XCOMET-Ensemble-refA & 0.951 (22) & 0.949 (14) & 0.838 (15) & 0.922 (16) & 0.081 (\phantom{0}9) \\
cometoid22-wmt22-src & 0.944 (25) & 0.944 (19) & 0.824 (19) & 0.912 (20) & 0.071 (10) \\
cometoid22-wmt21-src & 0.947 (23) & 0.941 (20) & 0.809 (25) & 0.913 (19) & 0.069 (11) \\
YiSi-1-refA & 0.974 (\phantom{0}7) & 0.941 (20) & 0.838 (15) & 0.924 (13) & 0.066 (12) \\
MS-COMET-QE-22-src & 0.905 (33) & 0.949 (14) & 0.838 (15) & 0.909 (24) & 0.065 (13) \\
prismRef-refA & 0.970 (12) & 0.931 (25) & 0.794 (27) & 0.902 (30) & 0.064 (14) \\
MetricX-23-QE-c-src & 0.966 (15) & 0.953 (10) & 0.853 (\phantom{0}8) & 0.932 (\phantom{0}8) & 0.062 (15) \\
GEMBA-MQM-src & 0.985 (\phantom{0}4) & 0.963 (\phantom{0}6) & 0.868 (\phantom{0}6) & 0.945 (\phantom{0}3) & 0.062 (16) \\
MetricX-23-QE-b-src & 0.977 (\phantom{0}6) & 0.949 (14) & 0.853 (\phantom{0}8) & 0.929 (\phantom{0}9) & 0.058 (17) \\
BERTscore-refA & 0.971 (11) & 0.951 (13) & 0.853 (\phantom{0}8) & 0.924 (12) & 0.057 (18) \\
XLsim-refA & 0.987 (\phantom{0}1) & 0.968 (\phantom{0}5) & 0.912 (\phantom{0}1) & 0.939 (\phantom{0}4) & 0.056 (19) \\
XCOMET-XXL-refA & 0.942 (26) & 0.934 (24) & 0.794 (27) & 0.906 (28) & 0.054 (20) \\
XCOMET-QE-Ensemble-src & 0.944 (24) & 0.949 (14) & 0.838 (15) & 0.925 (10) & 0.054 (21) \\
MetricX-23-b-refA & 0.940 (29) & 0.931 (25) & 0.779 (31) & 0.902 (29) & 0.048 (22) \\
MetricX-23-QE-src & 0.941 (28) & 0.939 (22) & 0.809 (25) & 0.910 (22) & 0.045 (23) \\
chrF-refA & 0.965 (16) & 0.931 (25) & 0.824 (19) & 0.912 (21) & 0.044 (24) \\
XCOMET-XL-refA & 0.926 (30) & 0.946 (18) & 0.824 (19) & 0.908 (25) & 0.043 (25) \\
tokengram\_F-refA & 0.965 (17) & 0.931 (25) & 0.824 (19) & 0.913 (18) & 0.042 (26) \\
MetricX-23-refA & 0.918 (31) & 0.919 (31) & 0.765 (32) & 0.892 (31) & 0.042 (27) \\
mre-score-labse-regular-refA & 0.981 (\phantom{0}5) & 0.956 (\phantom{0}9) & 0.853 (\phantom{0}8) & 0.922 (15) & 0.032 (28) \\
eBLEU-refA & 0.961 (20) & 0.936 (23) & 0.824 (19) & 0.909 (23) & 0.028 (29) \\
f200spBLEU-refA & 0.962 (19) & 0.926 (29) & 0.824 (19) & 0.906 (27) & 0.021 (30) \\
embed\_llama-refA & 0.971 (10) & 0.926 (29) & 0.794 (27) & 0.908 (26) & 0.016 (31) \\
BLEU-refA & 0.942 (27) & 0.919 (31) & 0.794 (27) & 0.891 (32) & 0.016 (32) \\
MaTESe-refA & 0.911 (32) & 0.895 (33) & 0.735 (33) & 0.884 (33) & 0.013 (33) \\
prismSrc-src & -0.773 (35) & -0.706 (35) & -0.559 (35) & 0.240 (35) & 0.006 (34) \\
Random-sysname-src & 0.288 (34) & 0.279 (34) & 0.191 (34) & 0.605 (34) & 0.001 (35) \\
\bottomrule
\end{tabular}
\end{adjustbox}
\caption{Scores and ranks of automatic metrics under each system-level meta-metric for WMT23 ja-en. Metrics are sorted by PPSR; tied scores receive the same rank.}
\label{tab:system_metric_score_ranks_wmt23_ja_en}
\end{table*}

\begin{table*}[t]
\centering
\scriptsize
\setlength{\tabcolsep}{3pt}
\begin{adjustbox}{max width=\textwidth}
\begin{tabular}{lrrrrr}
\toprule
Metric & Group-by-Item $r$ & No-Grouping $r$ & Group-by-System $r$ & PDP & PPSR \\
\midrule
CometKiwi-XXL-src & 0.349 (\phantom{0}1) & 0.474 (\phantom{0}1) & 0.411 (\phantom{0}1) & 0.352 (\phantom{0}1) & 0.101 (\phantom{0}1) \\
COMET-refA & 0.336 (\phantom{0}4) & 0.445 (\phantom{0}5) & 0.386 (\phantom{0}4) & 0.341 (\phantom{0}2) & 0.096 (\phantom{0}2) \\
BLEURT-20-refA & 0.340 (\phantom{0}3) & 0.436 (\phantom{0}6) & 0.383 (\phantom{0}6) & 0.335 (\phantom{0}3) & 0.093 (\phantom{0}3) \\
KG-BERTScore-src & 0.327 (\phantom{0}6) & 0.455 (\phantom{0}2) & 0.390 (\phantom{0}2) & 0.334 (\phantom{0}4) & 0.091 (\phantom{0}4) \\
CometKiwi-src & 0.326 (\phantom{0}7) & 0.455 (\phantom{0}3) & 0.390 (\phantom{0}3) & 0.334 (\phantom{0}5) & 0.091 (\phantom{0}5) \\
MetricX-23-c-refA & 0.311 (12) & 0.342 (22) & 0.265 (25) & 0.325 (\phantom{0}6) & 0.087 (\phantom{0}6) \\
cometoid22-wmt23-src & 0.320 (\phantom{0}9) & 0.435 (\phantom{0}7) & 0.379 (\phantom{0}7) & 0.323 (\phantom{0}8) & 0.086 (\phantom{0}7) \\
CometKiwi-XL-src & 0.332 (\phantom{0}5) & 0.446 (\phantom{0}4) & 0.383 (\phantom{0}5) & 0.325 (\phantom{0}7) & 0.085 (\phantom{0}8) \\
XCOMET-Ensemble-refA & 0.342 (\phantom{0}2) & 0.410 (12) & 0.351 (10) & 0.316 (\phantom{0}9) & 0.081 (\phantom{0}9) \\
cometoid22-wmt22-src & 0.296 (18) & 0.432 (\phantom{0}8) & 0.374 (\phantom{0}8) & 0.293 (10) & 0.071 (10) \\
cometoid22-wmt21-src & 0.292 (19) & 0.431 (\phantom{0}9) & 0.372 (\phantom{0}9) & 0.290 (11) & 0.069 (11) \\
YiSi-1-refA & 0.307 (14) & 0.383 (15) & 0.332 (13) & 0.276 (16) & 0.066 (12) \\
MS-COMET-QE-22-src & 0.274 (24) & 0.388 (14) & 0.332 (14) & 0.274 (17) & 0.065 (13) \\
prismRef-refA & 0.296 (17) & 0.351 (19) & 0.302 (18) & 0.285 (14) & 0.064 (14) \\
MetricX-23-QE-c-src & 0.317 (10) & 0.418 (11) & 0.347 (11) & 0.287 (12) & 0.062 (15) \\
GEMBA-MQM-src & 0.323 (\phantom{0}8) & 0.421 (10) & 0.340 (12) & 0.285 (13) & 0.062 (16) \\
MetricX-23-QE-b-src & 0.315 (11) & 0.383 (16) & 0.309 (17) & 0.277 (15) & 0.058 (17) \\
BERTscore-refA & 0.285 (23) & 0.357 (17) & 0.311 (16) & 0.252 (21) & 0.057 (18) \\
XLsim-refA & 0.256 (28) & 0.342 (23) & 0.292 (20) & 0.263 (19) & 0.056 (19) \\
XCOMET-XXL-refA & 0.306 (15) & 0.352 (18) & 0.293 (19) & 0.264 (18) & 0.054 (20) \\
XCOMET-QE-Ensemble-src & 0.301 (16) & 0.388 (13) & 0.332 (15) & 0.260 (20) & 0.054 (21) \\
MetricX-23-b-refA & 0.310 (13) & 0.343 (21) & 0.285 (21) & 0.249 (22) & 0.048 (22) \\
MetricX-23-QE-src & 0.286 (21) & 0.344 (20) & 0.283 (22) & 0.239 (23) & 0.045 (23) \\
chrF-refA & 0.258 (26) & 0.292 (26) & 0.248 (26) & 0.234 (24) & 0.044 (24) \\
XCOMET-XL-refA & 0.285 (22) & 0.327 (25) & 0.276 (24) & 0.234 (25) & 0.043 (25) \\
tokengram\_F-refA & 0.257 (27) & 0.290 (27) & 0.246 (27) & 0.230 (27) & 0.042 (26) \\
MetricX-23-refA & 0.290 (20) & 0.332 (24) & 0.279 (23) & 0.232 (26) & 0.042 (27) \\
mre-score-labse-regular-refA & 0.266 (25) & 0.186 (33) & 0.157 (34) & 0.169 (29) & 0.032 (28) \\
eBLEU-refA & 0.212 (29) & 0.202 (32) & 0.166 (33) & 0.183 (28) & 0.028 (29) \\
f200spBLEU-refA & 0.202 (30) & 0.226 (29) & 0.192 (31) & 0.163 (30) & 0.021 (30) \\
embed\_llama-refA & 0.147 (33) & 0.203 (31) & 0.176 (32) & 0.135 (32) & 0.016 (31) \\
BLEU-refA & 0.186 (32) & 0.221 (30) & 0.192 (30) & 0.140 (31) & 0.016 (32) \\
MaTESe-refA & 0.199 (31) & 0.242 (28) & 0.195 (29) & 0.134 (33) & 0.013 (33) \\
prismSrc-src & 0.008 (35) & 0.171 (34) & 0.203 (28) & 0.034 (34) & 0.006 (34) \\
Random-sysname-src & 0.071 (34) & 0.061 (35) & 0.003 (35) & 0.033 (35) & 0.001 (35) \\
\bottomrule
\end{tabular}
\end{adjustbox}
\caption{Scores and ranks of automatic metrics under each segment-level meta-metric for WMT23 ja-en. Metrics are sorted by PPSR; tied scores receive the same rank.}
\label{tab:segment_metric_score_ranks_wmt23_ja_en}
\end{table*}

\begin{table*}[t]
\centering
\scriptsize
\setlength{\tabcolsep}{3pt}
\begin{adjustbox}{max width=\textwidth}
\begin{tabular}{lrrrrr}
\toprule
Metric & $r$ & $\rho$ & $\tau_b$ & SPA & PPSR \\
\midrule
MetricX-24-refA & 0.924 (\phantom{0}6) & 0.900 (\phantom{0}5) & 0.745 (\phantom{0}8) & 0.890 (11) & 0.124 (\phantom{0}1) \\
PrismRefMedium-refA & 0.949 (\phantom{0}2) & 0.909 (\phantom{0}4) & 0.782 (\phantom{0}4) & 0.926 (\phantom{0}2) & 0.114 (\phantom{0}2) \\
COMET-22-refA & 0.921 (\phantom{0}8) & 0.973 (\phantom{0}1) & 0.891 (\phantom{0}1) & 0.917 (\phantom{0}3) & 0.112 (\phantom{0}3) \\
BLEURT-20-refA & 0.981 (\phantom{0}1) & 0.973 (\phantom{0}1) & 0.891 (\phantom{0}1) & 0.960 (\phantom{0}1) & 0.111 (\phantom{0}4) \\
PrismRefSmall-refA & 0.931 (\phantom{0}3) & 0.900 (\phantom{0}5) & 0.745 (\phantom{0}8) & 0.906 (\phantom{0}6) & 0.110 (\phantom{0}5) \\
MetricX-24-Hybrid-refA & 0.898 (11) & 0.882 (10) & 0.709 (10) & 0.883 (12) & 0.106 (\phantom{0}6) \\
metametrics\_mt\_mqm\_kendall-refA & 0.801 (16) & 0.773 (15) & 0.600 (15) & 0.821 (16) & 0.104 (\phantom{0}7) \\
metametrics\_mt\_mqm\_hybrid\_kendall-refA & 0.804 (15) & 0.773 (15) & 0.600 (15) & 0.821 (15) & 0.103 (\phantom{0}8) \\
chrfS-refA & 0.908 (\phantom{0}9) & 0.900 (\phantom{0}5) & 0.782 (\phantom{0}4) & 0.896 (\phantom{0}8) & 0.092 (\phantom{0}9) \\
chrF-refA & 0.931 (\phantom{0}4) & 0.900 (\phantom{0}5) & 0.782 (\phantom{0}4) & 0.901 (\phantom{0}7) & 0.079 (10) \\
BERTScore-refA & 0.819 (13) & 0.809 (14) & 0.673 (14) & 0.826 (14) & 0.079 (11) \\
damonmonli-refA & 0.865 (12) & 0.855 (11) & 0.709 (10) & 0.892 (\phantom{0}9) & 0.077 (12) \\
YiSi-1-refA & 0.925 (\phantom{0}5) & 0.955 (\phantom{0}3) & 0.855 (\phantom{0}3) & 0.914 (\phantom{0}4) & 0.072 (13) \\
gemba\_esa-src & 0.785 (17) & 0.745 (18) & 0.564 (18) & 0.777 (20) & 0.069 (14) \\
XCOMET-refA & 0.766 (19) & 0.709 (20) & 0.527 (19) & 0.782 (19) & 0.063 (15) \\
monmonli-refA & 0.818 (14) & 0.845 (12) & 0.709 (10) & 0.848 (13) & 0.055 (16) \\
spBLEU-refA & 0.922 (\phantom{0}7) & 0.900 (\phantom{0}5) & 0.782 (\phantom{0}4) & 0.912 (\phantom{0}5) & 0.055 (17) \\
MetricX-24-Hybrid-QE-src & 0.748 (20) & 0.736 (19) & 0.527 (19) & 0.790 (18) & 0.045 (18) \\
MetricX-24-QE-src & 0.776 (18) & 0.773 (15) & 0.600 (15) & 0.820 (17) & 0.044 (19) \\
BLEU-refA & 0.899 (10) & 0.845 (12) & 0.709 (10) & 0.892 (10) & 0.043 (20) \\
XCOMET-QE-src & 0.566 (22) & 0.482 (22) & 0.345 (22) & 0.698 (22) & 0.027 (21) \\
CometKiwi-src & 0.469 (23) & 0.427 (23) & 0.273 (23) & 0.683 (23) & 0.025 (22) \\
CometKiwi-XXL-src & 0.699 (21) & 0.627 (21) & 0.455 (21) & 0.774 (21) & 0.022 (23) \\
metametrics\_mt\_mqm\_qe\_kendall.seg.s-src & 0.330 (24) & 0.145 (24) & 0.127 (24) & 0.595 (24) & 0.020 (24) \\
XLsimDA-src & -0.318 (25) & -0.009 (25) & -0.018 (25) & 0.475 (25) & 0.009 (25) \\
\bottomrule
\end{tabular}
\end{adjustbox}
\caption{Scores and ranks of automatic metrics under each system-level meta-metric for WMT24 cs-uk. Metrics are sorted by PPSR; tied scores receive the same rank.}
\label{tab:system_metric_score_ranks_wmt24_cs_uk}
\end{table*}

\begin{table*}[t]
\centering
\scriptsize
\setlength{\tabcolsep}{3pt}
\begin{adjustbox}{max width=\textwidth}
\begin{tabular}{lrrrrr}
\toprule
Metric & Group-by-Item $r$ & No-Grouping $r$ & Group-by-System $r$ & PDP & PPSR \\
\midrule
MetricX-24-refA & 0.264 (\phantom{0}4) & 0.576 (\phantom{0}1) & 0.564 (\phantom{0}1) & 0.362 (\phantom{0}1) & 0.124 (\phantom{0}1) \\
PrismRefMedium-refA & 0.229 (\phantom{0}9) & 0.522 (\phantom{0}5) & 0.508 (\phantom{0}7) & 0.343 (\phantom{0}2) & 0.114 (\phantom{0}2) \\
COMET-22-refA & 0.245 (\phantom{0}5) & 0.548 (\phantom{0}2) & 0.538 (\phantom{0}2) & 0.340 (\phantom{0}4) & 0.112 (\phantom{0}3) \\
BLEURT-20-refA & 0.240 (\phantom{0}7) & 0.536 (\phantom{0}4) & 0.521 (\phantom{0}6) & 0.341 (\phantom{0}3) & 0.111 (\phantom{0}4) \\
PrismRefSmall-refA & 0.221 (10) & 0.520 (\phantom{0}6) & 0.507 (\phantom{0}8) & 0.336 (\phantom{0}5) & 0.110 (\phantom{0}5) \\
MetricX-24-Hybrid-refA & 0.266 (\phantom{0}3) & 0.544 (\phantom{0}3) & 0.531 (\phantom{0}3) & 0.335 (\phantom{0}6) & 0.106 (\phantom{0}6) \\
metametrics\_mt\_mqm\_kendall-refA & 0.273 (\phantom{0}2) & 0.519 (\phantom{0}7) & 0.523 (\phantom{0}4) & 0.323 (\phantom{0}7) & 0.104 (\phantom{0}7) \\
metametrics\_mt\_mqm\_hybrid\_kendall-refA & 0.273 (\phantom{0}1) & 0.519 (\phantom{0}8) & 0.523 (\phantom{0}5) & 0.322 (\phantom{0}8) & 0.103 (\phantom{0}8) \\
chrfS-refA & 0.200 (12) & 0.496 (11) & 0.488 (\phantom{0}9) & 0.308 (\phantom{0}9) & 0.092 (\phantom{0}9) \\
chrF-refA & 0.196 (13) & 0.424 (16) & 0.413 (18) & 0.286 (10) & 0.079 (10) \\
BERTScore-refA & 0.188 (16) & 0.421 (18) & 0.416 (16) & 0.283 (12) & 0.079 (11) \\
damonmonli-refA & 0.183 (17) & 0.498 (\phantom{0}9) & 0.486 (11) & 0.285 (11) & 0.077 (12) \\
YiSi-1-refA & 0.209 (11) & 0.468 (13) & 0.461 (12) & 0.271 (13) & 0.072 (13) \\
gemba\_esa-src & 0.245 (\phantom{0}6) & 0.470 (12) & 0.449 (13) & 0.269 (14) & 0.069 (14) \\
XCOMET-refA & 0.229 (\phantom{0}8) & 0.497 (10) & 0.487 (10) & 0.257 (15) & 0.063 (15) \\
monmonli-refA & 0.176 (19) & 0.456 (14) & 0.444 (14) & 0.244 (16) & 0.055 (16) \\
spBLEU-refA & 0.178 (18) & 0.328 (22) & 0.318 (22) & 0.239 (17) & 0.055 (17) \\
MetricX-24-Hybrid-QE-src & 0.194 (15) & 0.396 (19) & 0.385 (19) & 0.217 (18) & 0.045 (18) \\
MetricX-24-QE-src & 0.195 (14) & 0.424 (17) & 0.413 (17) & 0.215 (19) & 0.044 (19) \\
BLEU-refA & 0.149 (23) & 0.277 (24) & 0.268 (24) & 0.209 (20) & 0.043 (20) \\
XCOMET-QE-src & 0.163 (21) & 0.371 (20) & 0.361 (20) & 0.151 (21) & 0.027 (21) \\
CometKiwi-src & 0.163 (22) & 0.424 (15) & 0.422 (15) & 0.140 (22) & 0.025 (22) \\
CometKiwi-XXL-src & 0.171 (20) & 0.348 (21) & 0.335 (21) & 0.139 (23) & 0.022 (23) \\
metametrics\_mt\_mqm\_qe\_kendall.seg.s-src & 0.091 (24) & 0.294 (23) & 0.289 (23) & 0.123 (24) & 0.020 (24) \\
XLsimDA-src & 0.037 (25) & 0.005 (25) & 0.009 (25) & 0.072 (25) & 0.009 (25) \\
\bottomrule
\end{tabular}
\end{adjustbox}
\caption{Scores and ranks of automatic metrics under each segment-level meta-metric for WMT24 cs-uk. Metrics are sorted by PPSR; tied scores receive the same rank.}
\label{tab:segment_metric_score_ranks_wmt24_cs_uk}
\end{table*}

\end{document}